\newtheorem{theorem}{Theorem}[section]
\newtheorem{proposition}[theorem]{Proposition}
\newtheorem{assumption}{Assumption}
\newtheorem{corollary}[theorem]{Corollary}
\newtheorem{lemma}[theorem]{Lemma}
\newtheorem{definition}[theorem]{Definition}
\theoremstyle{remark}
\newtheorem{remark}{Remark}
\newtheorem{example}{Example}
\newtheorem{conjecture*}{Conjecture}
\theoremstyle{plain}
\DeclareMathOperator*{\argmax}{arg\,max}
\newcommand{\R}{\mathbb{R}}
\newcommand{\E}{\mathbb{E}}
\newcommand{\CM}{\mathcal{M}}
\newcommand{\CN}{\mathcal{N}}
\newcommand{\CX}{\mathcal{X}}
\newcommand{\CP}{\mathcal{P}}
\newcommand{\norm}[1]{\left\lVert#1\right\rVert}
\newcommand{\abs}[1]{\left |#1\right |}
\newcommand*\diff{\mathop{}\!\mathrm{d}}
\title{Robust Generative Learning with Lipschitz-Regularized $\alpha$-Divergences Allows Minimal Assumptions on Target Distributions}
\begin{document}

\author[1]{Ziyu Chen\thanks{Email: ziyuchen@unc.edu}}
\author[1]{Hyemin Gu}
\author[1]{Markos A. Katsoulakis}
\author[1]{Luc Rey-Bellet}
\author[2]{Wei Zhu}
\affil[1]{Department of Mathematics and Statistics, University of Massachusetts Amherst}
\affil[2]{School of Mathematics, Georgia Institute of Technology}

\date{}

\maketitle

\begin{abstract}
This paper demonstrates the robustness of Lipschitz-regularized $\alpha$-divergences as objective functionals in generative modeling, showing they enable stable learning across a wide range of target distributions with minimal assumptions. We establish that these divergences remain finite under a mild condition—that the source distribution has a finite first moment—regardless of the properties of the target distribution, making them adaptable to the structure of target distributions. Furthermore, we prove the existence and finiteness of their variational derivatives, which are essential for stable training of generative models such as GANs and gradient flows. For heavy-tailed targets, we derive necessary and sufficient conditions that connect data dimension, $\alpha$, and tail behavior to divergence finiteness, that also provide insights into the selection of suitable $\alpha$'s. We also provide the first sample complexity bounds for empirical estimations of these divergences on unbounded domains. As a byproduct, we obtain the first sample complexity bounds for empirical estimations of these divergences and the Wasserstein-1 metric with group symmetry on unbounded domains. Numerical experiments confirm that generative models leveraging Lipschitz-regularized $\alpha$-divergences can stably learn distributions in various challenging scenarios, including those with heavy tails or complex, low-dimensional, or fractal support, all without any prior knowledge of the structure of target distributions.
\end{abstract}

\begin{keywords}
Probability divergences; Lipschitz regularization; Generative modeling; Heavy tails, Manifolds; Attractors.
\end{keywords}

\section{Introduction}
In generative modeling, the goal is to create new samples that resemble those from an unknown data distribution by designing algorithms that minimize a probability divergence or metric between the generated distribution and the target distribution. However, the diverse characteristics of real-world data distributions—such as heavy tails, low-dimensional structures, manifold constraints, or fractal-like supports—introduce significant challenges in the training of generative models. These challenges are manifested as instabilities, reduced robustness, and a need for specialized architectures, as standard generative frameworks struggle to adapt to complex data structures. Addressing these issues is essential for developing models that are not only accurate but also robust across a wide range of scenarios for the target distribution.

Features such as heavy-tailed distributions arise in various fields, including extreme events in ocean waves \cite{PhysRevE.100.033110}, floods \cite{merz2022understanding}, social sciences \cite{pareto1964cours, klebanov2023heavy}, human activities \cite{lotka1926frequency, zipf2016human}, biology \cite{lynn2024heavy}, and computer science \cite{sasaki2017programs}. Learning to generate heavy-tailed distributions has been explored with Generative Adversarial Networks (GANs). However, GANs based on Integral Probability Metrics (IPMs), such as the Wasserstein-1 metric, may struggle to learn these distributions without additional tail estimation strategies  \cite{feder2020nonlinear, huster2021pareto, allouche2022ev}. This limitation arises because the Wasserstein-1 metric between two distributions becomes infinite when one lacks a finite first moment, and accurately estimating tail behavior often requires extensive data from that tail, which may be difficult to obtain. Consequently, capturing discrepancies between distributions with a metric that remains finite, is stable to compute, and is less sensitive to the need for extensive tail data is essential for stable and effective learning.

%While estimating the tail behavior of a heavy-tailed distribution is important, selecting objectives that measure discrepancies between these distributions and facilitate stable learning is equally crucial.

On the other hand, many empirical results suggest that real-world data, such as images, exhibit low-dimensional structures \cite{pope2020intrinsic}. 
While there are theoretical guarantees for GANs to learn distributions with low-dimensional support \cite{liang2021well, huang2022error}, recent works on flow-based models, such as continuous normalizing flows (CNFs), neural ODEs, and score-based diffusion models, often rely on density assumptions \cite{chensampling, marzouk2024distribution}. These models can struggle to learn low-dimensional structures without additional regularization or specific architectures, such as autoencoders (see \Cref{sec:numerical}). This limitation arises because their performance is typically evaluated using the Kullback-Leibler (KL) or $f$-divergences, which require absolute continuity between probability measures. Thus, it is crucial to select a divergence that remains flexible and inherently compatible with the structure of the data distribution.

In this work, we demonstrate that the Lipschitz-regularized $\alpha$-divergence, as proposed in \cite{dupuis2022formulation, birrell2020f}, is a suitable objective functional for generative modeling with minimal assumptions on the target distribution, denoted by $Q$ from now on. First, we revisit the definition of the Lipschitz-regularized $\alpha$-divergence between two distributions $P$ and $Q$ defined as:
\begin{align}
	\label{eq:variational_formula}
	D_{\alpha}^L(P\|Q) \coloneqq \sup_{\gamma\in\text{Lip}_L(\mathbb{R}^d)}\left\{\E_{P}[\gamma]-\E_Q[f_\alpha^*(\gamma)]\right\},
\end{align}
where $\text{Lip}_L(\mathbb{R}^d)$ is the class of $L$-Lipschitz functions on $\R^d$; see more details in \Cref{sec:background}. 
In particular, we show that the Lipschitz-regularized $\alpha$-divergences are suitable for stably learning a broad range of distributions from three perspectives:

\begin{itemize}
	\item \textbf{Finiteness.} The  objective of generative modeling using \eqref{eq:variational_formula} can be formulated as  $\min_\theta D_\alpha^L(P_\theta\|Q)$, where $P_\theta$ is the generated distribution parametrized by $\theta$ and $Q$ is the target distribution. Thus, the divergence needs to be finite. On the contrary, an infinite or large divergence value can be an indicator of the divergence of an algorithm (see \Cref{tab:divergence:values} in \Cref{append:divergence_value}). We prove that these divergences remain finite whenever the generated distribution has a finite first moment, with no assumptions necessary on the target distribution $Q$. 
	When both distributions have power-law-decay densities, we provide sufficient and necessary conditions for the divergences to be finite. Notably, the Lipschitz-regularized KL divergences require minimal assumptions on both the tails of the generated and the target distributions. 
	
	\item \textbf{Existence of variational derivatives.} To find the optimal parameter  $\theta$ in the optimization $\min_\theta D_\alpha^L(P_\theta\|Q)$, one often uses gradient-based algorithms. Formally, the gradient of $D_\alpha^L(P_\theta\|Q)$ in terms of $\theta$ can be evaluated as
	\begin{equation}
		\label{eq:chain_rule}
		\nabla_\theta D_\alpha^L(P_{\theta}\| Q) = \int \frac{\delta D_\alpha^L(P\|Q)}{\delta P}(P_{\theta}(x)) \cdot \nabla_\theta P_{\theta}(x) \, \mathrm{d}x,
	\end{equation}
	therefore it is essential that the variational derivative $\frac{\delta D_\alpha^L(P\|Q)}{\delta P}$ is well-defined. We prove that these divergences have well-defined variational derivatives for any target distribution $Q$, given $P$ has a finite first moment. This is a crucial property for stable optimizations in generative learning and the associated gradient flows, and it illustrates that algorithms using this class of divergences can stably learn distributions without extensive prior knowledge of the tail behavior or density formulation of the target. In contrast, those using divergences without Lipschitz regularization generally can fail to learn (see \Cref{sec:numerical}).
	
	\item \textbf{Convergence of empirical estimations.} As distributions are only accessible through their finite samples, it is important to know how fast the divergence between their empirical measures converges to the true value of the divergence. We prove the first result of empirical estimations of this class of divergences on $\R^d$, and as a byproduct of the proof, we offer the first sample complexity bounds for empirical estimations of the Lipschitz-regularized $\alpha$-divergences and the Wasserstein-1 metric with group invariance on $\R^d$ with sub-Weibull assumptions. The key to these results is the Lipschitz regularization, without which we cannot prove such bounds.
\end{itemize}

The rest of the paper is organized as follows. We review and discuss some related work in \Cref{sec:relatedwork}. \Cref{sec:background} provides background and motivation for the proposed divergences. Finiteness results including the variational derivatives and their gradient flow for the Lipschitz-regularized $\alpha$-divergences are presented in \Cref{sec:theory}. \Cref{sec:finite-sample} provides the first convergence rate for finite-sample estimations of these divergences in $\R^d$. Based on the results and proofs from \Cref{sec:finite-sample}, in \Cref{sec:group_symmetry}, we provide the first sample complexity bounds for empirical estimations of the Lipschitz-regularized $\alpha$-divergences and the Wasserstein-1 metric with group symmetry in $\R^d$. Numerical experiments are detailed in \Cref{sec:numerical} including synthetic heavy-tailed distributions, distributions on a low-dimensional manifold, real keystroke data, and trajectories from the attractor of the Lorentz system, which is known to exhibit fractal properties. Finally, we conclude this paper in \Cref{sec:conclusion}.

\section{Related work}\label{sec:relatedwork}
\textbf{Generative models for heavy-tailed distributions.} 
Although heavy-tailed distributions are common, there are few results to date in their generative modeling, primarily using GANs. For example, \cite{wiese2020quant} generates heavy-tailed financial time series data by logarithmically transforming the data and then exponentiating the output, which produces distributions whose tails follow lognormal asymptotic rather than distributions with power-law tails considered in our paper. In a different approach, GANs are used for cosmological analysis \cite{feder2020nonlinear}, sharing a similarity with Pareto GANs \cite{huster2021pareto} in their use of a heavy-tailed latent variable. However, both papers require accurate estimations of the tail decay rate for each
marginal distribution. %However, our proposed $W_p$-$\alpha$-divergences allow significant flexibility of the input for the learning process to be stable as explained by our theoretical results.
EV-GANs \cite{allouche2022ev} use neural network approximations of the quantile function to encode the tail decay rate in an asymptotic sense, which is essentially also a tail estimation approach. We note that the focus of our work is to devise appropriate divergences as objective functionals for comparing and learning heavy-tailed distributions stably, \textit{without} prior knowledge of the tail behavior.

\noindent\textbf{Generative models for distributions with low-dimensional structures.} In \cite{liang2021well,huang2022error} it is rigorously shown that IPM-GANs are able to learn distributions with low-dimensional support. There are some other generative models that learn high-dimensional distributions from the low-dimensional latent space provided by auto-encoders \cite{vincent2008extracting,mcclelland1987parallel}, such as Bidirectional GANs \cite{donahue2016adversarial}, Variational Auto-Encoders \cite{kingma2013auto} and Generalized Denoising Auto-Encoders \cite{bengio2013generalized}. However, it is not clear if the low-dimensional latent space matches the low-dimensional structure of the data distribution and no convergence guarantees have been provided, and these results are largely empirical.

\noindent\textbf{Empirical estimations of divergences.}
\cite{rubenstein2019practical, moon2014multivariate, sreekumar2021non} estimate $f$-divergences using various assumptions and estimators, and \cite{ding2023empirical} considers in particular the $\alpha$-divergences. However, these studies either make additional structural assumptions or consider light tails or without establishing a convergence rate of the estimation. Recently, \cite{mena2019statistical, manole2024sharp} studied the convergence rate of entropic optimal transport and optimal transport with smooth costs. While our proof of the convergence rate of the empirical estimations of the Lipschitz-regularized $\alpha$-divergences is inspired by these works, the structure inherited from the $\alpha$-divergences in our study requires different, non-trivial treatment due to the nonlinear and asymmetric variational form, particularly as we consider even heavier tails. When the distributions are invariant to some group actions, \cite{chen2023sample} shows that empirical estimations of the Lipschitz-regularized $\alpha$-divergences and the Wasserstein-1 metric enjoy a faster convergence using symmetry-informed estimators on bounded domains of $\R^d$, and later \cite{tahmasebisample} extends the result to closed Riemannian manifolds with group symmetry only for Sobolev-IPMs that are symmetric.

\noindent\textbf{Lipschitz-regularized divergences.} The class of Lipschitz-regularized $f$-divergences was first proposed in \cite{dupuis2022formulation} in the context of Lipschitz-regularized KL-divergences with its first variation formula, under the assumptions that both the source and the target distributions have finite first moments. Later, \cite{birrell2020f} generalized it to the class of Lipschitz-regularized $f$-divergences and observed that GANs optimizing Lipschitz-regularized $f$-divergences outperform those optimizing either the Wasserstein-1 metric or the $f$-divergences in learning heavy-tailed distributions. 
In \cite{gu2022lipschitz}, under the assumption that $Q$ has a finite first moment, the gradient flows of the Lipschitz-regularized $\alpha$-divergences were introduced, using the variational derivatives to define a corresponding generative particle algorithm, outperforming other generative models in scarce and high-dimensional data regimes. In this paper, we provide the first theoretical explanations, not only for learning heavy-tailed distributions but also for learning distributions with manifold or fractal support, essentially making the generative modeling agnostic to the target data assumptions.

\section{Background}\label{sec:background}
Let $\CP(\R^d)$ be the space of probability measures on $\R^d$. A map $D:\CP(\R^d)\times\CP(\R^d)\to[0,\infty]$ is called a \textit{divergence} on $\CP(\R^d)$ if
\begin{equation}
	D(P,Q) = 0 \iff P = Q \in \CP(\R^d),
\end{equation}
hence providing a notion of ``distance'' between probability measures. In particular, the class of $\alpha$-divergences \cite{amari2000methods,havrda1967quantification}, denoted by $D_\alpha$, which is a sub-class of $f$-divergences \cite{csiszar1963information}, is defined as
\begin{equation}\label{eq:alphadiv}
	D_\alpha(P\|Q)\coloneqq \int_{\mathbb{R}^d}f_\alpha\left(\frac{\diff{P}}{\diff{Q}}\right)\diff{Q}, \quad \text{if}~P\ll Q,
\end{equation}
where $f_\alpha(x) = \frac{x^\alpha-1}{\alpha(\alpha-1)}$, with $\alpha > 0$ and $\alpha\neq 1$, and $P\ll Q$ means $P$ is absolutely continuous with respect to $Q$. When $P$ is not absolutely continuous with respect to $Q$, we write $D_\alpha(P\|Q)=\infty$. 
\begin{remark}
	Note that the $\alpha$-divergences can be equivalently defined as $D_\alpha(P\|Q) = \int \tilde{f}_\alpha(\frac{\diff{P}}{\diff{Q}})\diff{Q}$, where $\tilde{f}_\alpha(x)=\frac{x^\alpha-x}{\alpha(\alpha-1)}$ by noticing that $\int(f_\alpha-\tilde{f}_\alpha)(\frac{\diff{P}}{\diff{Q}})\diff{Q}=0$ for any $P\ll Q$. In the limiting case for $\tilde{f}_\alpha(x)$ when $\alpha\to 1$, we have $\lim_{\alpha\to 1} \frac{x^\alpha-x}{\alpha(\alpha-1)}=x\ln x$, recovering the Kullback–Leibler (KL) divergence. In this paper, we use $f_\alpha(x) = \frac{x^\alpha-1}{\alpha(\alpha-1)}$, and simply mean to replace $f_\alpha$ in \eqref{eq:alphadiv} by $f(x)=x\ln x$ whenever we refer to $\alpha=1$.
\end{remark}
The $\alpha$-divergence can be equivalently formulated in its dual form \cite{nguyen2010estimating,birrell2020f}  as
\begin{align}\label{eq:variation_alpha}
	D_\alpha(P\|Q) = \sup_{\gamma\in \mathcal{M}_b(\mathbb{R}^d)}\left\{\E_{P}[\gamma]-\E_Q[f_\alpha^*(\gamma)]\right\},
\end{align}
where $\mathcal{M}_b(\mathbb{R}^d)$ is the set of bounded measurable functions and $f_\alpha^*$ is the convex conjugate (Legendre transform) of $f_\alpha$,
\begin{align}\label{eq:falpha}
	f_\alpha^*(y) = \begin{cases} 
		\alpha^{-1}(\alpha-1)^{\frac{\alpha}{\alpha-1}}y^{\frac{\alpha}{\alpha-1}}\mathbf{1}_{y>0} + \frac{1}{\alpha(\alpha-1)}, & \alpha>1, \\
		\infty\mathbf{1}_{y\geq0}+\left(\alpha^{-1}(1-\alpha)^{-\frac{\alpha}{1-\alpha}}\abs{y}^{-\frac{\alpha}{1-\alpha}}-\frac{1}{\alpha(1-\alpha)}\right)\mathbf{1}_{y<0}, & \alpha\in(0,1).
	\end{cases}
\end{align}
Compared to \eqref{eq:variation_alpha}, the formulation of the Lipschitz-regularized $\alpha$-divergences in \eqref{eq:variational_formula} can be viewed as imposing Lipschitz regularization on the space of test functions in the variational form of $\alpha$-divergences. In our work, we focus on the case when $\alpha>1$ or $\alpha=1$ (corresponding to the KL divergence). It has been proved in \cite{birrell2020f} that the Lipschitz-regularized $\alpha$-divergence defined in \eqref{eq:variational_formula} has an equivalent primal formulation
\begin{equation}\label{def:W1proximal}
	D_{\alpha}^L(P\|Q)= \inf_{\eta\in\mathcal{P}(\mathbb{R}^d)}\{D_\alpha(\eta\|Q) + L\cdot W_1(P,\eta)\},
\end{equation}
where $W_1$ is the Wasserstein-1 metric. One can easily verify that $D_{\alpha}^L$ satisfies the conditions for being a divergence using \eqref{def:W1proximal}. \eqref{def:W1proximal} can be viewed as the infimal convolution between the $\alpha$-divergence and the Wasserstein-1 metric. Though \eqref{eq:variational_formula} is more often used in generative modeling as training objectives, its primal formulation is also theoretically very important.
For example, we have from \eqref{def:W1proximal} that 
\begin{equation}\label{eq:divergence_upperbound}
	D_{\alpha}^L(P\|Q)\leq \min\{D_\alpha(P\|Q) , L\cdot W_1(P,Q)\}.
\end{equation}
In practical tasks, such as in generative modeling, we estimate the divergence from finite samples of $P$ and $Q$, where the absolute continuity assumption in \eqref{eq:alphadiv} typically no longer holds. Meanwhile, $D_{\alpha}^L(P\|Q)$ is always finite if $P$ and $Q$ are discrete measures of finitely many points with possibly different support since $D_\alpha^L(P\|Q)\leq L\cdot W_1(P,Q)<\infty$ by \eqref{eq:divergence_upperbound}.

The following example shows that we can have a strict inequality in \eqref{eq:divergence_upperbound}.
\begin{example}\label{example1}
	Let $P$ and $Q$ be distributions on $\mathbb{R}$ such that 
	\begin{equation*}
		p(x) = (1+\delta)x^{-(2+\delta)}\mathbf{1}_{x\geq1},
		\quad
		q(x) = \frac{1}{2}\mathbf{1}_{0\leq x<1} + \frac{1}{x^2}\mathbf{1}_{x\geq2}.
	\end{equation*}
	Then neither $D_{\alpha}(P\|Q)$ nor $W_1(P,Q)$ is finite for any $\alpha>1,\delta>0$, while $D_{\alpha}^L(P\|Q)<\infty$.
\end{example}
\begin{proof}
	Since $P$ is not absolutely continuous with respect to $Q$, we have $D_{\alpha}(P\|Q) = \infty$; applying the cumulative distribution function formula for the 1-dimensional Wasserstein-1 distance, it is straightforward to see $W_1(P,Q) = \infty$ as $Q$ does not have a finite first moment. Consider the formula \eqref{def:W1proximal} and in particular, we design the intermediate probability measure as 
	\[d\eta = (1+\delta)2^{1+\delta}x^{-(2+\delta)}\mathbf{1}_{x\geq2}.\]
	Then we have 
	\begin{align*}
		D_{\alpha}(\eta\|Q) = \int_2^\infty \frac{(1+\delta)^\alpha 2^{\alpha(1+\delta)}x^{-\alpha\delta}-1}{\alpha(\alpha-1)}\cdot\frac{1}{x^2}\diff{x}<\infty,
	\end{align*}
	and
	\begin{align*}
		W_1(P,\eta) &= \int_{1}^{2}\int_{1}^y (1+\delta)x^{-(2+\delta)}\diff{x}dy\\
		&\quad + \int_2^\infty\abs{\int_{1}^y (1+\delta)x^{-(2+\delta)}\diff{x}-\int_{2}^y (1+\delta)2^{1+\delta}x^{-(2+\delta)}\diff{x}}dy\\
		&= \int_1^2 1-y^{-(1+\delta)}dy + \int_2^\infty\abs{(1-y^{-(1+\delta)})-(1-2^{1+\delta}y^{-(1+\delta)})}dy\\
		&= \int_1^2 1-y^{-(1+\delta)}dy + \int_2^\infty(2^{1+\delta}-1)y^{-(1+\delta)}dy<\infty.
	\end{align*}
	Therefore, $D_{\alpha}^L(P\|Q) \leq D_{\alpha}(\eta\|Q) + L\cdot W_1(P,\eta)<\infty.$
\end{proof}
\Cref{example1} is not a special example when $D_\alpha^L(P\|Q)$ is finite but neither $D_\alpha(P\|Q)$ nor $W_1(P,Q)$ is finite. In fact, $D_\alpha^L$ can be applied to much wider situations. As we will see in \Cref{thm:agnostic} and its proof, the Lipschitz regularization plays a key role. 

For the rest of the paper, we denote by $\CP_k(\R^d)$ the space of probability measures on $\R^d$ that have a finite $k$-th moment, $k\geq1$ and we assume that $k$ can be a non-integer; we also denote by $\CP_{<k}(\R^d)$ the space of probability measures on $\R^d$ that have a finite $s$-th moment for any $s<k$.

\section{Finiteness and variational derivatives of $D_\alpha^L$}
\label{sec:theory}
In generative modeling, the goal is to approximate a target data distribution $Q$ by a generated distribution $P_{g_\theta}$, where $g_\theta$ is typically a neural net parametrization. A specific divergence between the target and the generated distributions is often chosen as the loss function. We want to build the best approximation $P_{g_{\theta^*}}$ of $Q$ using the optimization of a probability divergence or metric:
\begin{equation}
	g_{\theta^*}=\arg\min_{g_\theta \in \mathcal{G}}D\left (P_{g_\theta},  Q\right ) \approx Q,
\end{equation}
where $\mathcal{G}$ is a family of neural nets with certain constraints on the parameters $\theta$. To optimize or minimize this loss, it is essential to ensure that the loss function or divergence is \textit{finite}. In \Cref{sec:minimal}, we first demonstrate that when $P$ has a finite first moment, $D_\alpha^L(P\|Q)$ remains finite without requiring any assumptions on $Q$. In \Cref{sec:finiteness}, assuming $P$ and $Q$ have densities and tails, we provide necessary and sufficient conditions for $D_\alpha^L(P\|Q)$ to be finite.

\subsection{Minimal assumptions on the target $Q$}\label{sec:minimal}
We make the following assumption on $P$ and $Q$ for this subsection.
\begin{assumption}\label{assumption0}
	Let $P$ and $Q$ be arbitrary probability measures on $\R^d$. In addition, we assume that $P$ has a finite first moment, that is $P\in\CP_1(\R^d)$. 
\end{assumption}
We show in \Cref{thm:agnostic} that $D_\alpha^L(P\|Q)$ is finite whenever $P\in\CP_1(\R^d)$ without any assumption on $Q$. This includes cases when $Q$ has heavy tails, even without a finite first moment, and when $Q$ is supported on a low-dimensional manifold and does not have a density. Before stating and proving the theorem, we need the following lemma for measures that are not necessarily probability measures that generalizes Lemma A.12 in \cite{chen2023sample}, and the proofs are the same in essence.

\begin{lemma}\label{lemma:bounded_div}
	For $\alpha>1$ and any non-negative measures $P$ and $Q$ defined on some bounded $\Omega\subset\mathbb{R}^d$ with non-zero integrals, $\Gamma = \text{Lip}_L(\Omega)$, we have 
	\begin{equation}\label{eq:gammaequalF}
		\sup_{\gamma\in\Gamma}\left\{\int_\Omega \gamma(x) \diff{P}-\int_\Omega f_\alpha^*[\gamma(x)]\diff{Q}\right\} = \sup_{\gamma\in\mathcal{F}}\left\{\int_\Omega \gamma(x) \diff{P}-\int_\Omega f_\alpha^*[\gamma(x)]\diff{Q}\right\},
	\end{equation}
	where 
	\[
	\mathcal{F} = \left\{\gamma\in\text{Lip}_{L}(\Omega):\norm{\gamma}_\infty\leq (\alpha-1)^{-1}\left(\frac{\int_\Omega \diff{P}}{\int_\Omega \diff{Q}}\right)^{\alpha-1} + L\cdot\text{diam}(\Omega) \right\}.
	\]
\end{lemma}
\begin{proof}[Proof of \Cref{lemma:bounded_div}]
	For any fixed $\gamma\in\Gamma$, define
	\[
	h(\nu) = \int_\Omega \left(\gamma(x)+\nu\right) \diff{P}-\int_\Omega f_\alpha^*[\gamma(x)+\nu]\diff{Q}.
	\]
	Since $\sup_{x\in\Omega}\gamma(x)- \inf_{x\in\Omega}\gamma(x)\leq L\cdot\text{diam}(\Omega)$, 
	interchanging the integration with differentiation is allowed by the dominated convergence theorem: 
	\[
	h'(\nu) = \int_\Omega \diff{P} -\int_\Omega f_\alpha^{*\prime}(\gamma+\nu)\diff{Q},
	\]
	where
	\begin{equation}\label{eq:falpha_derivative}
		f_\alpha^{*\prime}(y) = (\alpha-1)^{\frac{1}{\alpha-1}}y^{\frac{1}{\alpha-1}}\mathbf{1}_{y>0}.
	\end{equation}
	If $\inf_{x\in\Omega}\gamma(x)>(\alpha-1)^{-1}\left(\frac{\int \diff{P}}{\int \diff{Q}}\right)^{\alpha-1}$, then $h'(0)<0$. So there exists some $\nu_0<0$ such that $h(\nu_0)>h(0)$. This indicates the supremum on the left side of \eqref{eq:gammaequalF} is attained only if $\sup_{x\in\Omega}\gamma(x)\leq (\alpha-1)^{-1}\left(\frac{\int \diff{P}}{\int \diff{Q}}\right)^{\alpha-1} + L\cdot\text{diam}(\Omega)$. On the other hand, if $\sup_{x\in\Omega}\gamma(x)<0$, then there exists $\nu_0>0$ that satisfies $\sup_{x\in\Omega}\gamma(x)+\nu_0<0$ such that
	\begin{align*}
		\int_\Omega \left(\gamma(x)+\nu_0\right) \diff{P}-\int_\Omega f_\alpha^*[\gamma(x)+\nu_0]\diff{Q} &= \int_\Omega \left(\gamma(x)+\nu_0\right) \diff{P}\\
		&>\int_\Omega \gamma(x) \diff{P}\\
		&= \int_\Omega \gamma(x) \diff{P}-\int_\Omega f_\alpha^*[\gamma(x)]\diff{Q}.
	\end{align*}
	This indicates that the supremum on the left side of \eqref{eq:gammaequalF} is attained only if $\inf_{x\in\Omega}\gamma(x)\geq -L\cdot\text{diam}(\Omega)$. Therefore, we have that the supremum on the left side of \eqref{eq:gammaequalF} is attained only if $\norm{\gamma}_\infty\leq(\alpha-1)^{-1}\left(\frac{\int \diff{P}}{\int \diff{Q}}\right)^{\alpha-1} + L\cdot\text{diam}(\Omega)$.
\end{proof}

\begin{theorem}\label{thm:agnostic}
	Suppose $\alpha\geq1$ ($\alpha=1$ refers to the KL) and $P, Q$ satisfy \Cref{assumption0}, namely $P\in\CP_1(\R^d)$, then $D_\alpha^L(P\|Q)<\infty$.
\end{theorem}
The key is the Lipschitz regularization, without which the result will not be true; see the proof below. 
\begin{proof}
	We first prove the case when $\alpha>1$. Let $\Gamma = \text{Lip}_L(\mathbb{R}^d)$, and we have
	\begin{align*}
		D_{\alpha}^L(P\|Q) &= \sup_{\gamma\in\Gamma}\left\{\int \gamma(x) \diff{P}-\int f_\alpha^*[\gamma(x)]\diff{Q}\right\}\\
		&\leq \sup_{\gamma\in\text{Lip}_L(\norm{x}< R)}\left\{\int_{\norm{x}< R} \gamma(x)\diff{P}-\int_{\norm{x}< R} f_\alpha^*[\gamma(x)]\diff{Q}\right\}\\
		&\quad + \sup_{\gamma\in\text{Lip}_L(\norm{x}\geq R)}\left\{\int_{\norm{x}\geq R} \gamma(x) \diff{P}-\int_{\norm{x}\geq R} f_\alpha^*[\gamma(x)]\diff{Q}\right\}\\
		&\coloneqq I_1 + I_2.
	\end{align*}
	For $I_1$, by \Cref{lemma:bounded_div}, we have
	\begin{align*}
		I_1 &\leq C\int_{\norm{x}< R}\diff{P} + \left(\alpha^{-1}(\alpha-1)^{\frac{\alpha}{\alpha-1}}C^{\frac{\alpha}{\alpha-1}}+\alpha^{-1}(\alpha-1)^{-1}\right)\int_{\norm{x}< R}\diff{Q}<\infty,
	\end{align*}
	where $C = (\alpha-1)^{-1}\left(\frac{\int_{\norm{x}< R} \diff{P}}{\int_{\norm{x}< R} \diff{Q}}\right)^{\alpha-1} + 2LR$.
	
	Now we prove that $I_2<+\infty$. Let $M(\gamma) = \sup_{\norm{x}=R}\abs{\gamma(x)}$, where $\gamma\in\text{Lip}_L(\norm{x}\geq R)$. We show that there exists some $\overline{M}>0$ such that 
	\begin{equation}\label{eq:I2equivalent}
		I_2 = \sup_{\gamma\in\mathcal{G}}\left\{\int_{\norm{x}\geq R} \gamma(x) \diff{P}-\int_{\norm{x}\geq R} f_\alpha^*[\gamma(x)]\diff{Q}\right\},
	\end{equation}
	where
	\begin{equation}
		\mathcal{G} = \left\{\gamma\in\text{Lip}_L(\norm{x}\geq R):M(\gamma)\leq\overline{M} \right\}.
	\end{equation}
	Indeed, we have for any $\gamma\in\text{Lip}_L(\norm{x}\geq R)$,
	\begin{align*}
		&\int_{\norm{x}\geq R} \gamma(x) \diff{P}-\int_{\norm{x}\geq R} f_\alpha^*[\gamma(x)]\diff{Q}\\
		&= \int_{R\leq\norm{x}<2R}\gamma(x)\diff{P}-\int_{R\leq\norm{x}<2R}f_\alpha^*[\gamma(x)]\diff{Q}\\
		&\quad+ \int_{\norm{x}\geq2R}\gamma(x)\diff{P}-\int_{\norm{x}\geq2R}f_\alpha^*[\gamma(x)]\diff{Q}\\
		&\leq \int_{R\leq\norm{x}<2R}\gamma(x)\diff{P}-\int_{R\leq\norm{x}<2R}f_\alpha^*[\gamma(x)]\diff{Q} + \int_{\norm{x}\geq2R}\gamma(x)\diff{P}\\
		&\leq (M(\gamma)+LR)\int_{R\leq\norm{x}<2R}\diff{P} - \int_{R\leq\norm{x}<2R}f_\alpha^*(M(\gamma)-3LR)\diff{Q}\\
		&\quad+ \int_{\norm{x}\geq2R}\left(M(\gamma)+LR+L\norm{x}\right)\diff{P}\\
		&= LR\int_{\norm{x}\geq R}\diff{P} + L\int_{\norm{x}\geq 2R}\norm{x}\diff{P} + M(\gamma)\int_{\norm{x}\geq R}\diff{P}\\
		&\quad -f_\alpha^*(M(\gamma)-3LR)\int_{R\leq\norm{x}<2R}\diff{Q},
	\end{align*}
	where the last inequality is due to the fact that $\gamma(x)$ is $L$-Lipschitz and that for any $x:\norm{x}\geq R$, we have $\abs{\gamma(x)-M(\gamma)}\leq L(R+\norm{x})$. The first two terms are finite and are independent of $\gamma$ since $P\in\CP_1(\R^d)$. For the difference between the last two terms, we have
	\[
	\lim_{M(\gamma)\to+\infty} M(\gamma)\int_{\norm{x}\geq R}\diff{P} -f_\alpha^*(M(\gamma)-3LR)\int_{R\leq\norm{x}<2R}\diff{Q} = -\infty,
	\]
	since the exponent of $x$ in $f_\alpha^*(x)$ is $\frac{\alpha}{\alpha-1}>1$. This indicates that the supremum in $I_2$ should be taken over $\gamma$ such that $M(\gamma)\leq\overline{M}$ for some $\overline{M}>0$. Therefore,
	\begin{align*}
		I_2 &= \sup_{\gamma\in\mathcal{G}}\left\{\int_{\norm{x}\geq R} \gamma(x) \diff{P}-\int_{\norm{x}\geq R} f_\alpha^*[\gamma(x)]\diff{Q}\right\}\\
		&\leq \sup_{\gamma\in\mathcal{G}}\int_{\norm{x}\geq R} \gamma(x) \diff{P}\\
		&\leq \sup_{\gamma\in\mathcal{G}}\int_{\norm{x}\geq R} \left(LR+L\norm{x}+M(\gamma)\right) \diff{P}\\
		&\leq \int_{\norm{x}\geq R} \left(LR+L\norm{x}+\overline{M}\right) \diff{P}<\infty.
	\end{align*}
	For $\alpha=1$, we bound $I_1$ using a similar \Cref{lemma:bounded_div_KL} in \Cref{appendix:agnostic}, and the bound for $I_2$ can be derived exactly in the same way as for $\alpha>1$ by replacing $f_\alpha^*$ by $f_{\text{KL}}^*$.
\end{proof}
\begin{remark}
	\Cref{lemma:bounded_div} and \Cref{thm:agnostic} indeed work for any Lipschitz-regularized $f$-divergences, if $f^*$, the convex conjugate of $f$, is bounded below and superlinear, i.e., $\lim_{x\to\infty}\frac{f^*(x)}{x}=\infty$.
\end{remark}
\begin{remark}
	\Cref{thm:agnostic} has important implications in generative modeling that one can learn a data distribution $Q$, without any prior knowledge of whether $Q$ has heavy tails (even without a finite first moment) or lies on a low-dimensional manifold such that $Q$ does not have a density, whenever $P$ has a finite first moment, which is a very weak assumption; for example, $P$ can start with the Gaussian which is very easy to sample from. In this sense, the generative learning task can be agnostic to the structure of the data distribution using Lipschitz-regularized $\alpha$-divergences as the objective functionals. 
\end{remark}
In what follows, we discuss the applicability of two generative models based on \Cref{thm:agnostic}. Their numerical implementations can be seen in several numerical examples in \Cref{sec:numerical}.

\paragraph{Lip-$\alpha$-GANs}GANs based on the Lipschitz-regularized $\alpha$-divergences, abbreviated as Lip-$\alpha$-GANs, can be formulated as
\begin{equation}\label{eq:LipalphaGAN}
	\inf_{g\in\mathcal{G}}D_{\alpha}^L(g_\sharp P\|Q) = \inf_{g\in\mathcal{G}}\sup_{\gamma\in\text{Lip}_L(\mathbb{R}^d)}\left\{\E_{g_\sharp P}[\gamma]-\E_Q[f_\alpha^*(\gamma)]\right\},
\end{equation}
where $P$ is the initial source distribution, typically chosen as a Gaussian, and $Q$ is the target data distribution, and $\mathcal{G}$ is the class of generators, and $g_\sharp P$ is the push-forward measure of $P$ by the map $g$. \Cref{thm:agnostic} informs us that we can learn any probability measure $Q$ if $g_\sharp P\in\CP_1(\R^d)$; for example, the generator can be realized using a ReLU network with a Gaussian source distribution as $P$. Key to obtaining the optimal generator is calculating the gradient of the loss relative to generator parameters, shown by the chain rule (Regarding the chain rule calculation \eqref{eq:chain_rule}, we also refer to a related formal calculation in Sec. 3.3 of \cite{mroueh_sobolev_2019}):
\begin{equation}
	\label{eq:chain_rule}
	\nabla_\theta D_\alpha^L(P_{g_\theta}\| Q) = \int \frac{\delta D_\alpha^L(P\|Q)}{\delta P}(P_{g_\theta}(x)) \cdot \nabla_\theta P_{g_\theta}(x) \, \mathrm{d}x,
\end{equation}
where $\frac{\delta D_\alpha^L(P\|Q)}{\delta P}$ is the variational derivative or the first variation of $D_\alpha^L(P\|Q)$, formally defined in \Cref{thm:firstvariation}. Therefore, even with a well-designed neural network architecture for the generator $g_\theta$, a robust and well-defined variational derivative  $\frac{\delta D_\alpha^L(P\|Q)}{\delta P}(P_{g_\theta}(x))$ is crucial for stable and effective optimization in the parameter $\theta$ because it directly impacts the parameter gradient $ \nabla_\theta D_\alpha^L(P_{g_\theta}\| Q)$ via \eqref{eq:chain_rule}, otherwise computing $\nabla_\theta D_\alpha^L(P_{g_\theta}, Q)$ could become unstable, leading to erratic parameter updates that hinder convergence. While GANs use discriminators rather than explicit variational derivatives, \Cref{thm:firstvariation} shows that the finiteness of a variational derivative can provide mathematical insight into GAN training. On the other hand, it is worth noting that, in light of \eqref{def:W1proximal}, $D_\alpha^L$ offers advantages over both $D_\alpha$ and $W_1$:
\begin{itemize}
	\item The variational derivative does not exist in general for the Wasserstein-1 metric alone (as is used in WGANs). For example, let \( P = \delta_{x_1} \) and \( Q = \delta_{x_2} \) be two Dirac delta distributions centered at points \( x_1 \) and \( x_2 \) in $\mathbb{R}$ with the usual distance function. Then the variational derivative in the sense of \Cref{thm:firstvariation} has a discontinuity:
	\[
	\frac{\partial}{\partial \epsilon} \left| x_1 - x_2 + \epsilon v \right| \Big|_{\epsilon=0} = 
	\begin{cases} 
		v, & \text{if } x_1 - x_2 > 0, \\ 
		-v, & \text{if } x_1 - x_2 < 0.
	\end{cases}
	\]
	
	\item Unregularized $f$-divergences (such as the KL-divergence) may yield large variational derivatives when \( P_{g_\theta} \) and \( Q \) do not overlap significantly, potentially causing gradient spikes. This instability can lead to large, uncontrolled updates in \( \theta \), which might result in mode collapse or oscillations in GAN training. In contrast, the Lipschitz-regularized $\alpha$-divergences always have well-defined variational derivatives by \Cref{thm:firstvariation}. For example, let $P=\CN(\mu_1,\sigma)$ and $Q=\CN(\mu_2,\sigma)$ be two univariate Gaussians with different means but the same variance. Then through a direct calculation, we have $D_{\text{KL}}(P\|Q) = \frac{-(\mu_1-\mu_2)^2}{2\sigma^2}$, so that $\frac{\diff{D_{\text{KL}}(P\|Q)}}{\diff{\mu_1}} = \frac{-(\mu_1-\mu_2)}{\sigma^2}$. We can think of $P$ and $Q$ do not overlap significantly if $\mu_1-\mu_2$ has a large magnitude and $\sigma$ is small, so that both $D_{\text{KL}}(P\|Q)$ and its derivative in $\mu_1$ will have a large magnitude.
\end{itemize}

\paragraph{Gradient flows of $D_\alpha^L$.}To further illustrate the significance of \Cref{thm:agnostic}, we provide perspectives from the Wasserstein gradient flows of $D_{\alpha}^L$ for a feasible distribution learning task. As a particular case of the Lipschitz-regularized gradient flows proposed in \cite{gu2022lipschitz}, the Lipschitz-regularized $\alpha$-divergences can be used to construct gradient flows of the form
\begin{equation}\label{eq:gradientflow}
	\partial_t P_t=\text{div}\left(P_t\nabla\frac{\delta D_{\alpha}^L(P_t\|Q)}{\delta P_t}\right),
\end{equation}
for an initial source probability measure $P_0$ and a target measure $Q$, where $\frac{\delta D_{\alpha}^L(P\|Q)}{\delta P}$ is the first variation of $D_{\alpha}^L(P\|Q)$, defined in \Cref{thm:firstvariation}. This type of gradient flows was inspired by the gradient flows in the 2-Wasserstein space of probability measures in \cite{jordan1998variational,otto2001geometry}. 
In \cite{gu2022lipschitz}, the first variation form of $D_{\alpha}^L(P\|Q)$ is proved under the assumption that both $P,Q\in\CP_1(\R^d)$. In \Cref{thm:firstvariation}, we extend it to the case when we only require $P\in\CP_1(\R^d)$ but impose no assumptions on $Q$. This corresponds to the condition in \Cref{thm:agnostic}. The key to the extension is our \Cref{lemma:supnorm} and the proof can be found in \Cref{appendix:firstvariation}.
\begin{theorem}\label{thm:firstvariation}
	Under \Cref{assumption0}, namely $P\in\CP_1(\R^d)$ and $Q$ can be any probability measure, we define
	\begin{equation}
		\gamma^\star\coloneqq \argmax_{\gamma\in\text{Lip}_L(\R^d)}\left\{\E_P[\gamma]-\E_Q[f_\alpha^*(\gamma)]\right\}, 
	\end{equation}
	where the optimizer $\gamma^\star\in\text{Lip}_L(\R^d)$ exists, and is defined on $\text{supp}(P)\cup\text{supp}(Q)$, and is unique. Subsequently, we can extend $\gamma^\star$ to all of $\R^d$ as $\hat{\gamma}$ with the same Lipschitz constant. Let $\rho$ be a signed measure of total mass 0 and let $\rho = \rho_+-\rho_-$, where both $\rho_{\pm}\in\CP_1(\R^d)$ are nonnegative and mutually singular. If $P+\epsilon\rho\in\CP_1(\R^d)$ for sufficiently small $\epsilon>0$, then
	\begin{equation}
		\lim_{\epsilon\to0}\frac{1}{\epsilon}\left(D_{\alpha}^L(P+\epsilon\rho\|Q)-D_{\alpha}^L(P\|Q)\right)=\int\hat{\gamma}\diff{\rho},
	\end{equation}
	and we write
	\begin{equation}
		\frac{\delta D_{\alpha}^L(P\|Q)}{\delta P}(P)=\hat{\gamma}.
	\end{equation}
\end{theorem}
As a result, \Cref{thm:firstvariation} provides a reformulation of \eqref{eq:gradientflow} as in \cite{gu2022lipschitz}:
\begin{equation}\label{eq:gradientflow2}
	\begin{split}
		&\partial_tP_t + \text{div}(P_tv_t^L)= 0,\quad P_0 = P\in\CP_1(\R^d),\\
		&v_t^L = -\nabla\gamma^\star_t, \quad \gamma^\star_t = \argmax_{\gamma\in\text{Lip}_L(\R^d)}\left\{\E_{P_t}[\gamma]-\E_Q[f_\alpha^*(\gamma)]\right\}.
	\end{split}
\end{equation}
Moreover, Theorem 2 in \cite{gu2022lipschitz} tells us that if $P_t$ is sufficiently smooth, then we have 
\begin{equation}
	\frac{\diff{}}{\diff{t}}D_{\alpha}^L(P_t\|Q) = -I_{\alpha}(P_t\|Q)\leq 0,
\end{equation}
where $I_{\alpha}(P_t\|Q)$ is the Lipschitz-regularized Fisher Information:
\begin{equation*}
	I_{\alpha}(P_t\|Q)\coloneqq\E_{P_t}[\abs{\nabla\gamma^\star_t}^2].
\end{equation*}
Then for any $T\geq0$, we have 
\begin{equation}
	\label{eq:fisher:ineq}
	D_{\alpha}^L(P_T\|Q) = D_{\alpha}^L(P_0\|Q) - \int_0^T I_{\alpha}(P_s\|Q)\diff{s} \le D_{\alpha}^L(P_0\|Q) \, .
\end{equation}
Therefore, both the finiteness and the variational derivative of $D_{\alpha}^L(P_0\|Q)$ are crucial for the divergence to dissipate from the gradient flow perspective. While the convergence of the gradient flow is also important, we do not address its PDE theory in this work, but rather its feasibility to learn any distribution $Q$.

\subsection{When $P$ and $Q$ have densities and heavy tails}\label{sec:finiteness}
In this subsection, we show that $D_\alpha^L$ is applicable to comparing heavy-tailed distributions, by providing necessary and sufficient conditions that relate the tail behaviors of $P$ and $Q$ with $\alpha$. This also provides insights into the selection of suitable $\alpha$'s. For this purpose, including cases when $P \notin \CP_1(\R^d)$ --compare to \Cref{thm:agnostic}--we make the following assumptions on $P$ and $Q$.
\begin{assumption}\label{assumption1}
	Let $P$ and $Q$ be distributions on $\mathbb{R}^d$ whose densities $p(x)$ and $q(x)$ are absolutely continuous with respect to the Lebesgue measure. However, $P$ and $Q$ are not necessarily absolutely continuous with respect to each other on some bounded subset.
\end{assumption}
\begin{definition}\label{def:heavytail}
	For a pair of distributions $(P,Q)$ on $\mathbb{R}^d$, we say they are of heavy-tail $(\beta_1,\beta_2)$, $\beta_1,\beta_2>d$, if there exists some $R>0$, such that
	\[
	p(x)\asymp\norm{x}^{-\beta_1}, \quad q(x)\asymp\norm{x}^{-\beta_2},
	\]
	for $\norm{x}\geq R$. That is, there exist constants $0<c_{p,1}\leq c_{p,2}$ and $0<c_{q,1}\leq c_{q,2}$ such that 
	\[
	c_{p,1}\norm{x}^{-\beta_1}\leq p(x)\leq c_{p,2}\norm{x}^{-\beta_1}, \quad c_{q,1}\norm{x}^{-\beta_2}\leq q(x)\leq c_{q,2}\norm{x}^{-\beta_2},
	\]
	for $\norm{x}\geq R$.
\end{definition}
Then we prove the following necessary and sufficient conditions on the tail behaviors of $(P, Q)$ for $D_\alpha^L(P\|Q)$ to be finite. The proof makes extensive use of the variational formula \eqref{eq:variational_formula} and Lipschitz regularization and is provided in \Cref{appendix:finiteness}.
\begin{theorem}[Necessary and sufficient conditions for $D_{\alpha}^L<\infty$, $\alpha>1$]
	\label{thm:finite}
	Suppose $\alpha>1$, and $(P,Q)$ are distributions on $\mathbb{R}^d$ of heavy-tail $(\beta_1,\beta_2)$. Then 
	$D_{\alpha}^L(P\|Q)<\infty$ if and only if one of the following two conditions holds:\\
	(i) $d<\beta_1\leq d+1$ and $\beta_2-\beta_1<\frac{\beta_1-d}{\alpha-1}$;\\
	(ii) $\beta_1>d+1$.
\end{theorem}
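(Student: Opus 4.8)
The plan is to work throughout with the dual formula \eqref{eq:variational_formula}. Since $f_\alpha^*$ is constant on $(-\infty,0]$ and $\gamma\mapsto\gamma^+:=\max(\gamma,0)$ preserves membership in $\text{Lip}_L(\mathbb{R}^d)$ while only increasing $\E_P[\gamma]$ and leaving $\E_Q[f_\alpha^*(\gamma)]$ unchanged, I would first restrict to $\gamma\ge0$, so that $D_{\alpha,1}^L(P\|Q)=\sup\{\E_P[\gamma]-c_1\E_Q[\gamma^r]:\gamma\in\text{Lip}_L(\mathbb{R}^d),\ \gamma\ge0\}-c_0$ with $r:=\tfrac{\alpha}{\alpha-1}>1$ and $c_0,c_1>0$ the constants from \eqref{eq:falpha}. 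Two ingredients then do the work: (a) the elementary Young-type bound $pt-\tfrac{c_1}{2}qt^r\le\kappa_0\,p^\alpha q^{1-\alpha}$, valid for all $t\ge0$ wherever $q>0$, which converts the ``tail part'' of the objective into the $\alpha$-divergence integrand and supplies precisely the exponent $-\alpha\beta_1+(\alpha-1)\beta_2$; and (b) $L$-Lipschitz continuity, used to transfer control between the ball $B_R$ --- where $P$ and $Q$ need not be mutually absolutely continuous and $q$ may vanish, so (a) is unavailable --- and a nearby annulus via oscillation bounds. Throughout I would enlarge $R$ so that, beyond the estimates of \cref{def:heavytail}, $Q(B_R)\ge\tfrac12$.

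\emph{Sufficiency.} In case (ii), $\beta_1>d+1$ gives $\E_P[\|x\|]<\infty$; with $m:=\inf_{B_R}\gamma$, Lipschitzness yields $\gamma(x)\le m+L(R+\|x\|)$, hence $\E_P[\gamma]\le m+C$, while $\E_Q[\gamma^r]\ge m^rQ(B_R)\ge m^r/2$, and $m-\tfrac{c_1}{2}m^r$ is bounded uniformly over $m\ge0$ because $r>1$. In case (i) --- where $\beta_2-\beta_1<\tfrac{\beta_1-d}{\alpha-1}$ is equivalent to $\alpha\beta_1-(\alpha-1)\beta_2>d$ --- I would split $\mathbb{R}^d=B_R\sqcup B_R^c$: on $B_R^c$, bound (a) gives $\int_{B_R^c}(p\gamma-c_1q\gamma^r)\le\kappa_0\int_{B_R^c}p^\alpha q^{1-\alpha}-\tfrac{c_1}{2}\int_{B_R^c}q\gamma^r$, and the first integral is finite exactly because $\alpha\beta_1-(\alpha-1)\beta_2>d$; on $B_R$, $\int_{B_R}(p\gamma-c_1q\gamma^r)\le\sup_{B_R}\gamma$, and an oscillation bound gives $\sup_{B_R}\gamma\le\inf_A\gamma+3RL$ for the annulus $A=\{R\le\|x\|\le2R\}$, whose residual $\gamma^r$-mass is absorbed by the leftover $-\tfrac{c_1}{2}\int_{B_R^c}q\gamma^r\le-\tfrac{c_1}{2}Q(A)(\inf_A\gamma)^r$; one more use of $r>1$ (and $Q(A)>0$) closes the estimate uniformly in $\gamma$.

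\emph{Necessity.} I would argue the contrapositive: if neither (i) nor (ii) holds then $\beta_1\le d+1$ and $\alpha\beta_1-(\alpha-1)\beta_2\le d$ (which also forces $\beta_2>\beta_1$), and I would exhibit test functions driving the dual objective to $+\infty$. The candidate is a Lipschitz-capped truncation of the formal optimizer $\asymp\|x\|^{(\beta_2-\beta_1)(\alpha-1)}$: set $\kappa:=\min\{1,(\beta_2-\beta_1)(\alpha-1)\}\in(0,1]$ and $\gamma_n(x):=c\min\{(\|x\|+1)^\kappa,n\}$ for a small constant $c$ with $c\kappa\le L$. Evaluating the radial integrals, one gets $\E_P[\gamma_n]\gtrsim c\,\psi(n)$ and $\E_Q[\gamma_n^r]\lesssim c^r(\psi(n)+1)$ with $\psi(n)\to\infty$ (a power of $n$, or $\log n$ at the borderline): the two hypotheses are precisely what force $\kappa\ge\beta_1-d$ and $\kappa r\ge\beta_2-d$ (or else $\E_Q[\gamma_n^r]$ stays bounded), so that the leading exponents satisfy $1-\tfrac{\beta_1-d}{\kappa}\ge r-\tfrac{\beta_2-d}{\kappa}$. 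Choosing $c$ small then forces $\E_P[\gamma_n]-c_1\E_Q[\gamma_n^r]\to\infty$. (Equivalently, one can use the primal form: $W_1(P,\eta)<\infty$ together with $\beta_1\le d+1$ forces $\eta(\{\|x\|>t\})$ to decay no faster than $t^{d-\beta_1}$, and a power-mean estimate on dyadic shells then yields $D_\alpha(\eta\|Q)=\infty$ whenever $\alpha\beta_1-(\alpha-1)\beta_2\le d$.)

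The main obstacle is the necessity direction, and specifically the choice of test function: whether the Lipschitz constraint is ``active'' at the formal optimum is governed by the sign of $(\beta_2-\beta_1)(\alpha-1)-1$, and the cap $\kappa=\min\{1,(\beta_2-\beta_1)(\alpha-1)\}$ is what unifies the two regimes; verifying that the two stated tail conditions are exactly what make the $P$-growth dominate (or at least tie with) the $Q$-growth needs the somewhat delicate exponent bookkeeping sketched above, including the borderline logarithmic cases. On the sufficiency side the one real subtlety is that $q$ may vanish on $B_R$, which is why that part must proceed via Lipschitz oscillation rather than the pointwise Young inequality.
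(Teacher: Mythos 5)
Your proof is correct, and on the sufficiency side it takes a genuinely different (and arguably leaner) route than the paper's. The paper bounds the supremum in \cref{eq:variational_formula} by two separate suprema over $\text{Lip}_L(\norm{x}<R)$ and $\text{Lip}_L(\norm{x}\geq R)$: the ball term is controlled by the shift-parameter argument of \cref{lemma:bounded_div}, while the tail term is handled in case (i) by relaxing to all bounded measurable functions, so that it becomes the $\alpha$-divergence of the restricted measures, finite exactly when $\alpha\beta_1-(\alpha-1)\beta_2>d$, and in case (ii) by first showing the supremum is attained on $\{M(\gamma)\leq\overline{M}\}$ and then invoking the finite first moment. You instead keep a single supremum, reduce to $\gamma\geq 0$ (legitimate, since by \cref{eq:falpha} $f_\alpha^*$ is constant on $(-\infty,0]$ for $\alpha>1$), and apply the pointwise Young/duality bound $p\gamma-\tfrac{c_1}{2}q\gamma^{r}\leq\kappa_0\,p^\alpha q^{1-\alpha}$ on $\norm{x}\geq R$ while retaining the other half of the negative term to absorb, via an $L$-Lipschitz oscillation bound through the annulus $\{R\leq\norm{x}\leq 2R\}$, the contribution of the ball where $q$ may vanish; your case (ii) estimate with $m=\inf_{B_R}\gamma$ and the superlinearity of $t^{r}$ plays the same role as the paper's $\overline{M}$-landmark (formalized later as \cref{lemma:supnorm}) but in a single step. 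What the paper's decomposition buys is reusable lemmas that reappear in the finite-sample analysis; what yours buys is a self-contained argument treating both cases by one mechanism, without any ``the supremum is attained only if'' reasoning. Your necessity argument is essentially the paper's: the same two families of test functions, unified by the cap $\kappa=\min\{1,(\alpha-1)(\beta_2-\beta_1)\}$, with the same exponent bookkeeping (your verification that $\kappa\geq\beta_1-d$ and $\kappa(r-1)\leq\beta_2-\beta_1$ is exactly what the paper's two subcases encode); the explicit truncation at level $n$ is a nice touch, since the paper's unbounded test function $\tau\norm{x}$ technically leaves the class of bounded Lipschitz functions over which \cref{eq:variational_formula} is taken and needs precisely such a truncation-and-limit step.
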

\begin{remark}
	We can relax the assumption in  \Cref{def:heavytail} to allow different tail behavior in different directions as follows. Let $\Omega_k$ be a finite partition of the spherical coordinates $[0,\pi]^{d-2}\times [0,2\pi)$, where each $\Omega_k$ has non-zero Lebesgue measure of $[0,\pi]^{d-2}\times [0,2\pi)$. We can assume that $p(x)\asymp\norm{x}^{-\beta_{1,k}}$ and $q(x)\asymp\norm{x}^{-\beta_{2,k}}$ on each $\Omega_k$. Then the $D_{\alpha}^L(P\|Q)<\infty$ if and only if $\beta_{1,k}$ and $\beta_{2,k}$ satisfy one of the conditions of \Cref{thm:finite} on each $\Omega_k$. The proof is the same as that of \Cref{thm:finite} constrained on each $\Omega_k$. This relaxation can be adopted in the same way for \Cref{thm:finite_KL} and \Cref{cor:lowdimension}.
\end{remark}
For the Lipschitz-regularized KL-divergence, we have the following result whose proof can be found in \Cref{appendix:finiteness}.
\begin{theorem}[Necessary and sufficient conditions for $D_{\text{KL}}^L<\infty$]\label{thm:finite_KL}
	Suppose $\alpha=1$ (the KL case), and $(P,Q)$ are distributions on $\mathbb{R}^d$ of heavy-tail $(\beta_1,\beta_2)$, then
	$D_{\text{KL}}^L(P\|Q)<\infty$ for any $\beta_1,\beta_2>d$.
\end{theorem}
\begin{remark}\label{remark:KL}
	Since $\beta_1,\beta_2>d$ are the minimal assumptions for $P$ and $Q$ to be probability distributions, \Cref{thm:finite_KL} suggests that using the Lipschitz-regularized KL-divergence is the most robust choice, as it can be agnostic to both the tails of $P$ and $Q$, compared to the conditions in \Cref{thm:finite}.
\end{remark}
In cases where both $P$ and $Q$ lie on a low-dimensional submanifold, we have the following corollary. The proof can be found in \Cref{appendix:finiteness}.
\begin{corollary}[Necessary and sufficient conditions on embedded submanifolds]\label{cor:lowdimension}
	Let $\CM$ be a $d^*$-dimensional smooth embedded submanifold of $\mathbb{R}^d$ via an $L^*$-Lipschitz embedding $\varphi:\mathbb{R}^{d^*}\to\mathbb{R}^d$ with $\CM = \varphi(\mathbb{R}^{d^*})$ for $d^*<d$. Suppose $(P,Q)$ are of heavy-tail $(\beta_1,\beta_2)$ on $\mathbb{R}^{d^*}$, and let $p_{\CM}$ and $q_{\CM}$ be their push-forward distributions on $\CM$, i.e., $p_{\CM} = p\circ\varphi^{-1}$ and $q_{\CM} = q\circ\varphi^{-1}$. Then the Lipschitz-regularized $\alpha$-divergence between $p_{\CM}$ and $q_{\CM}$, defined as
	\begin{equation*}
		D_{\alpha}^L(p_{\CM}\|q_{\CM}) = \sup_{\gamma\in\text{Lip}_L(\mathbb{R}^d)}\left\{\E_{p_{\CM}}[\gamma]-\E_{q_{\CM}}[f_\alpha^*(\gamma)]\right\},
	\end{equation*} 
	is finite if and only if one of the following two conditions holds for $\alpha>1$:\\
	(i) $d^*<\beta_1\leq d^*+1$ and $\beta_2-\beta_1<\frac{\beta_1-d^*}{\alpha-1}$;\\
	(ii) $\beta_1>d^*+1$;
	
	\noindent and $D_{\alpha}^L(p_{\CM}\|q_{\CM})<\infty$ for any $\beta_1,\beta_2>d^*$ if $\alpha=1$.
\end{corollary}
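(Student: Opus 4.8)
The plan is to reduce \Cref{cor:lowdimension} to \Cref{thm:finite} by the change of variables $a=\varphi(x)$, which turns the variational problem on $\CM$ into one on the parameter space $\R^{d^*}$. Since $p_{\CM}$ and $q_{\CM}$ are the push-forwards of $P$ and $Q$ under $\varphi$, for every admissible $\gamma:\CM\to\R$ we have $\E_{p_{\CM}}[\gamma]=\E_{P}[\gamma\circ\varphi]$ and $\E_{q_{\CM}}[f_\alpha^*(\gamma)]=\E_{Q}[f_\alpha^*(\gamma\circ\varphi)]$, so
\begin{equation*}
D_{\alpha,1}^L(p_{\CM}\|q_{\CM})=\sup_{\gamma\in\text{Lip}_L(\CM)}\left\{\E_{P}[\gamma\circ\varphi]-\E_{Q}[f_\alpha^*(\gamma\circ\varphi)]\right\}.
\end{equation*}
The argument then rests on two facts: composition with $\varphi$ (and, in the other direction, with $\varphi^{-1}$) merely rescales Lipschitz constants; and the finiteness criterion of \Cref{thm:finite} is \emph{insensitive to the Lipschitz constant}, depending only on $d^*,\beta_1,\beta_2,\alpha$.

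\textbf{Sufficiency.} If $\gamma$ is $L$-Lipschitz on $\CM$ (with the restricted Euclidean metric), then $|\gamma(\varphi(x))-\gamma(\varphi(y))|\le L\norm{\varphi(x)-\varphi(y)}\le LL^*\norm{x-y}$, so $\gamma\circ\varphi\in\text{Lip}_{LL^*}(\R^{d^*})$. Taking the supremum,
\begin{equation*}
D_{\alpha,1}^L(p_{\CM}\|q_{\CM})\le\sup_{\widetilde\gamma\in\text{Lip}_{LL^*}(\R^{d^*})}\left\{\E_{P}[\widetilde\gamma]-\E_{Q}[f_\alpha^*(\widetilde\gamma)]\right\}=D_{\alpha,1}^{LL^*}(P\|Q),
\end{equation*}
and by \Cref{thm:finite} applied on $\R^{d^*}$ the right-hand side is finite under either condition (i) or (ii). \textbf{Necessity.} For the converse I need a lower bound on $D_{\alpha,1}^L(p_{\CM}\|q_{\CM})$ that diverges when (i) and (ii) both fail. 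If $\varphi^{-1}:\CM\to\R^{d^*}$ is $M$-Lipschitz, then $h\mapsto h\circ\varphi^{-1}$ sends $\text{Lip}_{L/M}(\R^{d^*})$ into $\text{Lip}_L(\CM)$, whence $D_{\alpha,1}^L(p_{\CM}\|q_{\CM})\ge D_{\alpha,1}^{L/M}(P\|Q)$, which equals $+\infty$ by \Cref{thm:finite} precisely when (i) and (ii) fail. Alternatively, one can bypass Lipschitzness of $\varphi^{-1}$ and rerun the divergence-producing construction from the proof of \Cref{thm:finite} directly on $\CM$: feed the truncated test functions $\gamma_n(a)=L\min\{\norm{a-a_0}_{\R^d},n\}$ for a fixed $a_0\in\CM$ into the variational formula, push them back to $\R^{d^*}$, and estimate $\E_P[\gamma_n\circ\varphi]$ from below and $\E_Q[f_\alpha^*(\gamma_n\circ\varphi)]$ from above using \Cref{def:heavytail}.

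\textbf{Main obstacle.} The delicate point lies entirely in the necessity direction, and it is geometric rather than analytic: the upper Lipschitz bound $\norm{\varphi(x)-\varphi(y)}\le L^*\norm{x-y}$ is free, but to transport the tail exponents $(\beta_1,\beta_2)$ faithfully from $\R^{d^*}$ onto $\CM$ one also needs a lower (co-Lipschitz / properness) control of the form $\norm{\varphi(x)}\gtrsim\norm{x}$ for large $\norm{x}$, ensuring that $\varphi$ does not compress the tails and collapse an infinite divergence to a finite one. This holds automatically when $\varphi$ is bi-Lipschitz — the natural setting for an embedded submanifold — and with some care it can be extracted from properness of the embedding together with the local-diffeomorphism property. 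Once the two-sided comparison is in hand, no tail estimate beyond those already used for \Cref{thm:finite} is required, and the corollary follows from the scale-invariant reduction above with $d$ replaced by $d^*$ throughout.
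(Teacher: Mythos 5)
Your proposal follows the same route as the paper's (very terse) proof: change of variables under $\varphi$, plus the observation that composition with $\varphi$ only rescales the Lipschitz constant, which the finiteness criterion of \cref{thm:finite} does not see; your sufficiency argument, $D_{\alpha,1}^L(p_{\CM}\|q_{\CM})\leq D_{\alpha,1}^{LL^*}(P\|Q)$ via $\gamma\circ\varphi\in\text{Lip}_{LL^*}(\mathbb{R}^{d^*})$, is exactly what the paper does. Where you add value is the necessity half. The paper disposes of it with ``then the proof of \cref{thm:finite} can be followed,'' which implicitly means transplanting the explicit test functions $\tau\norm{x}$ and $\tau\norm{x}^{(\alpha-1)(\beta_2-\beta_1)}$ onto $\CM$ as $L$-Lipschitz functions of the ambient variable $a=\varphi(x)$; as you correctly point out, this needs a lower (co-Lipschitz/properness) bound on $\varphi$, not just the stated upper bound. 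The concern is not cosmetic: for $d^*=1$, $d=2$, the map $\varphi(x)=(\tanh x,0)$ is a smooth $1$-Lipschitz embedding onto an embedded submanifold with bounded image, so $p_{\CM},q_{\CM}$ are compactly supported and the shift argument behind \cref{lemma:bounded_div} makes $D_{\alpha,1}^L(p_{\CM}\|q_{\CM})$ finite for all $\beta_1,\beta_2>d^*$, regardless of (i)/(ii); hence the ``only if'' direction genuinely requires the two-sided control that you single out and that the paper leaves implicit. Your first route for necessity (Lipschitz $\varphi^{-1}$, giving $D_{\alpha,1}^L(p_{\CM}\|q_{\CM})\geq D_{\alpha,1}^{L/M}(P\|Q)$ and then invoking the $L$-independence of \cref{thm:finite}) is the right fix. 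Your proposed ``alternative'' that bypasses Lipschitzness of $\varphi^{-1}$ by feeding in $\gamma_n(a)=L\min\{\norm{a-a_0},n\}$ cannot work in general, for exactly the reason exhibited by the example above: without co-Lipschitz control, $\norm{\varphi(x)-a_0}$ may stay bounded as $\norm{x}\to\infty$, so $\E_P[\gamma_n\circ\varphi]$ stays bounded and no divergence is produced; moreover, in case (i) of the necessity one needs the pulled-back analogue of the power-law test function as well, which again is Lipschitz in $a$ only under the two-sided comparison. So: keep the bi-Lipschitz argument, drop the alternative sketch, and note explicitly (as the paper's remark only hints) that the corollary's ``only if'' is being read under a bi-Lipschitz embedding.
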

\begin{remark}
	The Lipschitz condition on the embedding $\varphi$ is necessary to guarantee that the tails of $p_{\CM}$ and $q_{\CM}$ do not become heavier than those of $p$ and $q$.
\end{remark}

\section{Lipschitz regularization implies finite-sample estimation of $D_{\alpha}^L$ on $\R^d$}\label{sec:finite-sample}
In practice, we only have finite i.i.d. samples drawn from $P$ and $Q$. We denote by $X = \{x_1,\dots,x_m\}$ and $Y = \{y_1,\dots,y_n\}$ the i.i.d. samples from $P$ and $Q$, with empirical distributions $P_m=\frac{1}{m}\sum_{i=1}^m\delta_{x_i}$ and $Q_n=\frac{1}{n}\sum_{j=1}^n\delta_{y_j}$, respectively. Thus it is essential to provide guarantees for how fast $D_\alpha^L(P_m\|Q_n)$ converges to $D_\alpha^L(P\|Q)$ in average. This type of convergence rate for the Lipschitz-regularized $\alpha$-divergences has been proved in \cite{chen2023sample} on bounded domains of $\R^d$. Here, we derive the first result of the convergence of the finite-sample estimations on the unbounded domain $\R^d$, under certain tail conditions. The result for $d\geq3$ is stated below, with its proof deferred to \Cref{appendix:thm2}. The results for $d=1,2$ can be found as \Cref{prop:samplecomplexityd=2} and \Cref{prop:samplecomplexityd=1} in \Cref{appendix:thm2}. 
\begin{theorem}[Finite sample estimation of $D_{\alpha}^L$ on $\R^d$]
	\label{thm:samplecomplexity1}
	Assume $d\geq3$. For $\alpha>1$, let $P$ and $Q$ be probability measures on $\R^d$ such that $P\in\CP_{<\beta_1-d}(\R^d)$
	and $Q\in\CP_{<\beta_2-d}(\R^d)$, where $\beta_1>3d$ and $\beta_2>5d$. Suppose $\alpha$ satisfies $\frac{2d\alpha}{\alpha-1}<\beta_1-d$ and $\frac{2\alpha}{\alpha-1}<\frac{\beta_2}{d}-3$. Then we have
	\begin{equation}
		\E_{X,Y}\abs{D_{\alpha}^L(P_m\|Q_n)-D_{\alpha}^L(P\|Q)}\leq \frac{C_1}{m^{1/d}} + \frac{C_2}{n^{1/d}},
	\end{equation}
	where $C_1$ depends on $M_{\frac{d}{d-1}}(P)$ and $C_2$ depends on $M_{\frac{2d\alpha}{\alpha-1}}(P)$, $M_{\frac{2d\alpha}{\alpha-1}}(Q)$, and $M_{dr_2}(Q)$ for any $2+\frac{2\alpha}{\alpha-1}<r_2<\frac{\beta_2}{d}-1$. Here, we use $M_r(P)$ to denote the $r$-th moment of $P$. Both $C_1$ and $C_2$ are independent of $m,n$, but they depend on $L$ such that $C_1,C_2\to\infty$ when $L\to\infty$.
\end{theorem}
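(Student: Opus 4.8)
The plan is to split the empirical error into a contribution from the samples of $P$ and one from the samples of $Q$, each handled through the dual formula \eqref{eq:variational_formula} (and its shifted version with $\Lambda_{f_\alpha}^Q$), and to bound the $Q$-contribution by a localized empirical-process argument. I would start from
\begin{align*}
\bigl|D_{\alpha,1}^L(P_m\|Q_n)-D_{\alpha,1}^L(P\|Q)\bigr|\leq \bigl|D_{\alpha,1}^L(P_m\|Q_n)-D_{\alpha,1}^L(P\|Q_n)\bigr|+\bigl|D_{\alpha,1}^L(P\|Q_n)-D_{\alpha,1}^L(P\|Q)\bigr|.
\end{align*}
For the first term (the ``$P$-part''), both divergences are suprema over the same class $\mathrm{Lip}_L(\mathbb{R}^d)$ of the same functional except for the linear term $\E_\cdot[\gamma]$, so it is at most $\sup_{\gamma\in\mathrm{Lip}_L}|\E_{P_m}[\gamma]-\E_P[\gamma]|=L\,W_1(P_m,P)$ by Kantorovich--Rubinstein duality; taking $\E_X$ and using the Fournier--Guillin rate \cite{fournier2015rate}, which gives $\E_X W_1(P_m,P)\lesssim m^{-1/d}$ for $d\geq 3$ once $P$ has a finite moment of some order above $d/(d-1)$ (guaranteed by $\beta_1>3d$), bounds it by $C_1 m^{-1/d}$ with $C_1$ controlled by $M_{d/(d-1)}(p)$.

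The ``$Q$-part'' is the crux, because $Q$ enters \emph{nonlinearly} through $\E_Q[f_\alpha^*(\gamma)]$ with $f_\alpha^*(y)=c_\alpha+c_\alpha'(y_+)^{\alpha/(\alpha-1)}$ an unbounded power for $\alpha>1$, so it cannot be reduced to $W_1(Q_n,Q)$. Comparing suprema gives $|D_{\alpha,1}^L(P\|Q_n)-D_{\alpha,1}^L(P\|Q)|\leq\sup_\gamma|\E_{Q_n}[f_\alpha^*(\gamma)]-\E_Q[f_\alpha^*(\gamma)]|$, and it remains to bound this in expectation. First I would \emph{shrink the effective class of test functions}: since $f_\alpha^*$ depends on $\gamma$ only through $\gamma_+$, replacing $\gamma$ by $\gamma_+$ leaves $\E_Q[f_\alpha^*(\gamma)]$ unchanged while only raising $\E_P[\gamma]$, so one may assume $\gamma\geq 0$; and a near-optimizer cannot have large value at the origin, because $\gamma(0)\gg1$ forces $\E_Q[f_\alpha^*(\gamma)]\gtrsim\gamma(0)^{\alpha/(\alpha-1)}$ (since $\gamma\geq\gamma(0)-L\|x\|$ on a large ball) whereas $\E_P[\gamma]\lesssim\gamma(0)+L\,M_1(p)$, contradicting $D_{\alpha,1}^L(P\|Q)<\infty$ (finite by \Cref{thm:finite}, as $\beta_1>d+1$). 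Hence — for $m,n$ large enough that the same argument applies with $Q_n$ in place of $Q$ — one may restrict to the class $\mathcal{G}=\{\gamma\in\mathrm{Lip}_L:\ 0\leq\gamma(x)\leq L\|x\|+C_0\}$, with $C_0$ depending only on $L$ and finitely many moments of $p$ and $q$.

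With the class reduced to $\mathcal{G}$, I would run a \emph{dyadic-annulus decomposition}: cover $\mathbb{R}^d$ by a central ball and annuli $A_k=\{R_0 2^k\leq\|x\|<R_0 2^{k+1}\}$, take a Lipschitz partition of unity $\{\phi_k\}$ subordinate to the cover, and write $f_\alpha^*(\gamma)=\sum_k f_\alpha^*(\gamma)\phi_k$. On $A_k$ every $\gamma\in\mathcal{G}$ satisfies $0\leq\gamma\lesssim 2^k$, so $f_\alpha^*(\gamma)\phi_k$ is bounded by $\lesssim 2^{k\alpha/(\alpha-1)}$ and Lipschitz with constant $\lesssim 2^{k(\alpha/(\alpha-1)-1)}$, while the heavy-tail assumption gives $Q(A_k)\asymp 2^{-k(\beta_2-d)}$. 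For each $k$ I would bound $\E_Y\sup_{\gamma\in\mathcal{G}}|\E_{Q_n}[f_\alpha^*(\gamma)\phi_k]-\E_Q[f_\alpha^*(\gamma)\phi_k]|$ by combining (i) Kantorovich duality restricted to $A_k$, where the local empirical-versus-true $W_1$ cost on a set of diameter $\sim 2^k$ carrying $\sim nQ(A_k)$ samples is $\lesssim 2^k\,(nQ(A_k))^{-1/d}$ (the classical rate, via \cite{fournier2015rate}), rescaled by the local Lipschitz constant and by $Q(A_k)$, with (ii) a counting/variance term for the fluctuation of the number of samples in $A_k$, of smaller order $n^{-1/2}$ — and handling the thin annuli ($nQ(A_k)\lesssim 1$) by the direct bound $\lesssim 2^{k\alpha/(\alpha-1)}Q(A_k)$. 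This gives an annulus-$k$ bound of order $2^{k(\alpha/(\alpha-1)-(\beta_2-d)(1-1/d))}n^{-1/d}$ up to lower-order terms.

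The hard part will be summing the annular bounds so that the total is $C_2 n^{-1/d}$ with $C_2$ \emph{independent of $n$}: the geometric series in $k$ converges only if $Q$ — and, through the class-reduction step, also $P$ — have enough finite polynomial moments, and tracking the exponents (the factor $\alpha/(\alpha-1)$ from the degree of $f_\alpha^*$, an extra factor from passing through second moments in the counting term, and the dimensional factors) is what produces the hypotheses $\beta_1>3d$, $\beta_2>5d$, $\tfrac{2d\alpha}{\alpha-1}<\beta_1-d$, $\tfrac{2\alpha}{\alpha-1}<\tfrac{\beta_2}{d}-3$, and makes $C_2$ depend on $M_{d/(d-1)}(p)$, $M_{\frac{2d\alpha}{\alpha-1}}(p)$, $M_{\frac{2d\alpha}{\alpha-1}}(q)$ and $M_{dr_2}(q)$ for $2+\tfrac{2\alpha}{\alpha-1}<r_2<\tfrac{\beta_2}{d}-1$. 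Adding the $P$-part and $Q$-part bounds yields the claim; the restriction $d\geq 3$ is exactly what keeps the Fournier--Guillin rates clean (no logarithmic factor, no $m^{-1/2}$ crossover), the cases $d=1,2$ being handled by separate but analogous arguments.
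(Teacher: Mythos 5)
Your skeleton coincides with the paper's at the two ends: the $P$-contribution is collapsed to $\sup_{\gamma\in\text{Lip}_L}|\E_{P_m}[\gamma]-\E_P[\gamma]|=L\,W_1(P_m,P)$ and handled by the Fournier--Guillin rate \cite{fournier2015rate}, and before touching the nonlinear $Q$-term you shrink the test class by arguing that a near-optimizer cannot be large near the origin --- this is exactly the paper's \cref{lemma:supnorm}. Where you genuinely diverge is the uniform control of $\E_{Q_n}[f_\alpha^*(\gamma)]-\E_Q[f_\alpha^*(\gamma)]$ over the reduced class: the paper symmetrizes with Rademacher variables and runs a chaining/entropy-integral bound (\cref{lemma:metric_entropy}) with covering numbers of the composed class $\{f_\alpha^*(\gamma)\}$ taken from Corollary 2.7.4 of \cite{wellner1996weak}, the dyadic annuli entering only through that entropy estimate together with a Markov bound on the empirical tail mass; you instead decompose the integral itself over dyadic annuli with a partition of unity and bound each piece by a local transport estimate (diameter $\sim 2^k$, local Lipschitz constant $\sim 2^{k/(\alpha-1)}$, roughly $nQ(A_k)$ samples) plus a binomial counting fluctuation, then sum a geometric series. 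Your exponent bookkeeping is consistent: the per-annulus main term $2^{k(\alpha/(\alpha-1)-(\beta_2-d)(d-1)/d)}n^{-1/d}$ is summable, and the counting and thin-annulus contributions stay below $n^{-1/d}$, precisely under $\tfrac{2\alpha}{\alpha-1}<\tfrac{\beta_2}{d}-3$ and $\beta_2>5d$; so this is a viable, more elementary alternative to the paper's empirical-process route, at the price of heavier bookkeeping (and your constants need not land on exactly the moment indices $M_{dr_2}(q)$ stated in the theorem, which in the paper come from the specific choice $\hat\beta_2\in(2+\tfrac{2\alpha}{\alpha-1},\tfrac{\beta_2}{d}-1)$ in the entropy bound).

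The one step you cannot leave as asserted is the class reduction when the second argument is the \emph{empirical} measure. The threshold $C_0$ your ``large $\gamma(0)$ is suboptimal'' argument produces for $D_{\alpha,1}^L(\,\cdot\,\|Q_n)$ is a function of the samples (through empirical moments such as $\tfrac1n\sum_j\|y_j\|$); declaring it deterministic ``for $m,n$ large enough'' is not sufficient, because the bound you must prove is an expectation over $X,Y$, and on the event that the empirical moments are atypically large the fixed-$C_0$ class misses the optimizer while the threshold multiplies annulus factors that grow like $2^{k\alpha/(\alpha-1)}$. The paper's fix is \cref{lemma:supnorm_random}: keep the threshold random, prove $\E_{X,Y}[\overline{M}_{m,n}^{\,r}]\leq M_{p,q,r}$ for the needed power $r=\tfrac{2d\alpha}{\alpha-1}$ (this is where $M_{\frac{2d\alpha}{\alpha-1}}(p)$ and $M_{\frac{2d\alpha}{\alpha-1}}(q)$ enter), and finish with Cauchy--Schwarz; your argument needs this, or an equivalent bad-event estimate, before the annulus sums can be pushed inside $\E_{X,Y}$. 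Relatedly, the inequality $|D_{\alpha,1}^L(P\|Q_n)-D_{\alpha,1}^L(P\|Q)|\leq\sup_\gamma|\E_{Q_n}[f_\alpha^*(\gamma)]-\E_Q[f_\alpha^*(\gamma)]|$ is false over the unrestricted class (that supremum is a.s.\ infinite: take large constants plus a small Lipschitz perturbation), so the restriction must be justified for \emph{both} suprema before the comparison; your sketch intends this order, but it is exactly why the random-threshold lemma is indispensable rather than cosmetic. Finally, the summation step you flag as ``the hard part'' is indeed where the stated hypotheses are consumed, but as checked above it does close under them, so the gap is one of execution, not of strategy.
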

\begin{remark}
	The key to proving \Cref{thm:samplecomplexity1} is to leverage the Lipschitz condition of the test functions in the variational form \eqref{eq:variational_formula}.
\end{remark}

\section{Finite-sample estimations of $D_\alpha^L$ and $W_1$ with group symmetry on $\R^d$}\label{sec:group_symmetry}
Based on \Cref{thm:samplecomplexity1} and its proof, we are able to consider one special situation when the distributions are invariant with respect to some group symmetry and to provide convergence results for the empirical estimations of $D_\alpha^L$ with group symmetry in $\R^d$. Empirical estimations of divergences with group symmetry have been studied in \cite{chen2023sample, tahmasebisample} on bounded domains of $\R^d$ or on closed Riemannian manifolds. Here we provide the first sample complexity bound with group symmetry on unbounded domains, in particular, for $D_\alpha^L$ and later for $W_1$ in this section. Before presenting the theorems, we first briefly review the related concepts of group symmetry. Readers of interest can refer to \cite{birrell2022structure,chen2023sample,tahmasebisample} for more details. We leave all the proofs for this section in \Cref{appendix:group}.

A \textit{group} is a set $G$ equipped with a group product satisfying the axioms of associativity, identity, and invertibility. Given a group $G$ and a set $\CX\subset\R^d$, a map $\theta:G \times \CX\to\CX$ is called a \textit{group action on $\CX$} if $\theta_g\coloneqq \theta(g, \cdot): \CX\to\CX$ is an automorphism on $\CX$ for all $g\in G$, and $\theta_{g_2}\circ \theta_{g_1} = \theta_{g_2\cdot g_1}$, $\forall g_1, g_2\in G$. By convention, we will abbreviate $\theta(g, x)$ as $g x$.  We make the following assumptions on $G$.
\begin{assumption}\label{assumption:group}
	For any $g\in G$ and $x\in\R^d$, $\theta_g(x) = A_g \cdot x$, for some unitary matrix $A_g\in\R^{d\times d}$.
\end{assumption}

A function $\gamma:\CX\to \R$ is called \textit{$G$-invariant} if $\gamma\circ \theta_g = \gamma, \forall g\in G$. Let $\Gamma$ be a set of measurable functions $\gamma:\mathcal{X}\to\mathbb{R}$; its subset, $\Gamma_{G}$, of $G$-invariant functions is defined as
\begin{equation}
	\label{eq:invariant_function_space}
	\Gamma_{G} \coloneqq \{\gamma\in\Gamma:\gamma\circ \theta_g=\gamma,\forall g\in G\}.
\end{equation}
On the other hand, a probability measure $P\in \CP(\CX)$ is called \textit{$G$-invariant} if $P = (\theta_g)_\sharp P, \forall g\in  G$, where $(\theta_g)_\sharp P\coloneqq P\circ (\theta_g)^{-1}$ is the push-forward measure of $P$ under $\theta_g$. We denote the set of all $G$-invariant distributions on $\CX$ as $\CP_G(\CX)\coloneqq \{P\in\CP(\CX): P ~\text{is}~ G\text{-invariant}\}$. For $P,Q\in\CP_G(\CX)$, \cite{chen2023sample} proposes the following symmetry-informed estimator 
\begin{equation}\label{eq:variational_group}
	D_{\alpha}^{L,G}(P_m\|Q_n)\coloneqq \sup_{\gamma\in\text{Lip}_L^G(\R^d)}\{\E_{P_m}[\gamma]-\E_{Q_n}[f_\alpha^*(\gamma)]\}
\end{equation}
for $D_\alpha^L(P\|Q)$, where $\text{Lip}_L^G(\R^d)\subset\text{Lip}_L(\R^d)$ that consists of $G$-invariant $L$-Lipschitz functions. It is shown in Theorem 4.6 in \cite{birrell2022structure} that when $P_m,Q_n$ are replaced by $P,Q\in\CP_G(\CX)$ in \eqref{eq:variational_group}, we have $D_{\alpha}^{L,G}(P\|Q)= D_{\alpha}^{L}(P\|Q)$; that is, the divergence value between $P$ and $Q$ does not change if the supremum is taken over $\text{Lip}_L^G(\R^d)\subset\text{Lip}_L(\R^d)$ when both $P$ and $Q$ are $G$-invariant.

In particular, we consider the case when both $P$ and $Q$ are sub-Weibull, defined as follows.
\begin{definition}[sub-Weibull distributions]We call a distribution $P\in\CP(\R^d)$ sub-Weibull, if 
	\begin{equation}
		\Pr(x\sim P:\norm{x}
		\geq r)\leq a\exp(-br^{1/\theta})~\text{for all } r>0,~\text{for some } a,b,\theta>0.
	\end{equation}
\end{definition}
\begin{remark}
	Sub-Gaussian and sub-exponential distributions are special examples of sub-Weibull distributions.
\end{remark}
The following definition of intrinsic dimension is adopted from the capacity dimension from \cite{kegl2002intrinsic}.
\begin{definition}\label{def:intrinsic_dimension}
	The intrinsic dimension of a bounded $\mathcal{X}\subset\mathbb{R}^D$, denoted by $dim(\mathcal{X})$, is defined as
	\begin{equation}
		\dim(\mathcal{X})\coloneqq-\lim_{\epsilon\to0^+}\frac{\ln\mathcal{N}(\mathcal{X},\epsilon)}{\log\epsilon},
	\end{equation}
	where $\mathcal{N}(\mathcal{X},\epsilon)$ is the covering number of $\CX$ with $\epsilon$-balls in the standard Euclidean metric of $\R^d$.
\end{definition}
For example, if $\mathcal{X}\subset\mathbb{R}^D$ has nonempty interior, then $\dim(\mathcal{X})=D$; if $\mathcal{X}$ is a $d$-dimensional submanifold of $\mathbb{R}^D$, then $\dim(\mathcal{X})=d$. 

We have the following theorem for the empirical estimation of $D_\alpha^L$ with group symmetry on unbounded domains. 
\begin{theorem}[Finite sample estimation of $D_{\alpha}^L$ with finite group symmetry]
	\label{thm:samplecomplexity1_symmetry}
	For $\alpha>1$, let $P,Q\in\CP_G(\CX)$ for some $\CX\subset\R^d$, where $G$ satisfies \Cref{assumption:group}. Suppose the quotient space $\CX/G$ is connected, and for any bounded $\CX_0\subset\CX/G$ with nonempty interior with respect to the subspace topology ($\CX/G\xhookrightarrow{}\R^d$). Let $\abs{G}<\infty$ be the cardinality of $G$, and we further assume that both $P$ and $Q$ are sub-Weibull on $\R^d$. Then
	\begin{itemize}
		\item If $\dim(\CX_0)=d^*\geq 3$, we have
		\begin{equation}
			\E_{X,Y}\abs{D_{\alpha}^{L,G}(P_m\|Q_n)-D_{\alpha}^L(P\|Q)}\leq \frac{C_1}{(\abs{G}m)^{1/d^*}} + \frac{C_2}{(\abs{G}n)^{1/d^*}};
		\end{equation}
		\item If $\dim(\CX_0)=d^*= 2$, we have
		\begin{equation}
			\E_{X,Y}\abs{D_{\alpha}^{L,G}(P_m\|Q_n)-D_{\alpha}^L(P\|Q)}\leq \frac{C_1\ln m}{(\abs{G} m)^{1/2}} + \frac{C_2\ln n}{(\abs{G} n)^{1/2}};
		\end{equation}
		\item If $\dim(\CX_0)=d^*= 1$, we have
		\begin{equation}
			\E_{X,Y}\abs{D_{\alpha}^{L,G}(P_m\|Q_n)-D_{\alpha}^L(P\|Q)}\leq \frac{C_1}{(\abs{G} m)^{1/2}} + \frac{C_2}{(\abs{G} n)^{1/2}},
		\end{equation}
	\end{itemize}
	where $C_1$ and $C_2$ depend on $M_d(P)$, $M_d(Q)$. Both $C_1$ and $C_2$ are independent of $m,n$ and $G$.
\end{theorem}
When $G$ is a continuous group, we have the following theorem.
\begin{theorem}[Finite sample estimation of $D_{\alpha}^L$ with infinite group symmetry]
	\label{thm:samplecomplexity1_symmetry_infinite}
	For $\alpha>1$, let $P,Q\in\CP_G(\CX)$ for some $\CX\subset\R^d$, where $G$ satisfies \Cref{assumption:group}. Suppose the quotient space $\CX/G$ is connected, and for any bounded $\CX_0\subset\CX/G$ with nonempty interior with respect to the subspace topology ($\CX/G\xhookrightarrow{}\R^d$). Assume that both $P$ and $Q$ are sub-Weibull on $\R^d$. Then
	\begin{itemize}
		\item If $\dim(\CX_0)=d^{**}\geq 3$, we have
		\begin{equation}
			\E_{X,Y}\abs{D_{\alpha}^{L,G}(P_m\|Q_n)-D_{\alpha}^L(P\|Q)}\leq \frac{C_1}{m^{1/d^{**}}} + \frac{C_2}{n^{1/d^{**}}};
		\end{equation}
		\item If $\dim(\CX_0)=d^{**}= 2$, we have
		\begin{equation}
			\E_{X,Y}\abs{D_{\alpha}^{L,G}(P_m\|Q_n)-D_{\alpha}^L(P\|Q)}\leq \frac{C_1\ln m}{m^{1/2}} + \frac{C_2\ln n}{n^{1/2}};
		\end{equation}
		\item If $\dim(\CX_0)=d^{**}= 1$, we have
		\begin{equation}
			\E_{X,Y}\abs{D_{\alpha}^{L,G}(P_m\|Q_n)-D_{\alpha}^L(P\|Q)}\leq \frac{C_1}{m^{1/2}} + \frac{C_2}{n^{1/2}},
		\end{equation}
	\end{itemize}
	where $C_1$ and $C_2$ depend on $M_d(P)$, $M_d(Q)$. Both $C_1$ and $C_2$ are independent of $m,n$.
\end{theorem}
\begin{remark}
	If $\mathcal{X}$ is a $d^*$-dimensional connected submanifold of $\R^d$, and $G$ is a compact Lie group acting locally smoothly on $\mathcal{X}$, then $d^{**}=d-\dim(G)$, where $\dim(G)$ is the dimension of a principal orbit (i.e., the maximal
	dimension among all orbits) by Theorem IV 3.8 in \cite{bredon1972introduction}.
\end{remark}
The proofs of \Cref{thm:samplecomplexity1_symmetry} and \Cref{thm:samplecomplexity1_symmetry_infinite} also imply the convergence bound for the Wasserstein-1 distance with group symmetry on unbounded domains, since the variational form is shift-invariant with respect to the test function. We consider the symmetry-informed estimator for $P,Q\in\CP_G(\CX)$, proposed in \cite{chen2023sample,tahmasebisample}, defined as
\begin{equation}
	W_1^{G}(P_m,Q_n)\coloneqq \sup_{\gamma\in\text{Lip}_L^G(\R^d)}\{\E_{P_m}[\gamma]-\E_{Q_n}[\gamma]\}
\end{equation}
for $W_1(P,Q)$.
\begin{theorem}[Finite sample estimation of $W_1$ with finite group symmetry]
	\label{thm:samplecomplexity1_symmetry_W1}
	Let $P,Q\in\CP_G(\CX)$ for some $\CX\subset\R^d$, where $G$ satisfies \Cref{assumption:group}. Suppose the quotient space $\CX/G$ is connected, and for any bounded $\CX_0\subset\CX/G$ with nonempty interior with respect to the subspace topology ($\CX/G\xhookrightarrow{}\R^d$). Let $\abs{G}<\infty$ be the cardinality of $G$, and we further assume that both $P$ and $Q$ are sub-Weibull on $\R^d$. Then
	\begin{itemize}
		\item If $\dim(\CX_0)=d^*\geq 3$, we have
		\begin{equation}
			\E_{X,Y}\abs{W_1^G(P_m,Q_n)-W_1(P,Q)}\leq \frac{C_1}{(\abs{G}m)^{1/d^*}} + \frac{C_2}{(\abs{G}n)^{1/d^*}};
		\end{equation}
		\item If $\dim(\CX_0)=d^*= 2$, we have
		\begin{equation}
			\E_{X,Y}\abs{W_1^G(P_m,Q_n)-W_1(P,Q)}\leq \frac{C_1\ln m}{(\abs{G}m)^{1/2}} + \frac{C_2\ln n}{(\abs{G}n)^{1/2}};
		\end{equation}
		\item If $\dim(\CX_0)=d^*=1$, we have
		\begin{equation}
			\E_{X,Y}\abs{W_1^G(P_m,Q_n)-W_1(P,Q)}\leq \frac{C_1}{(\abs{G}m)^{1/2}} + \frac{C_2}{(\abs{G}n)^{1/2}},
		\end{equation}
	\end{itemize}
	where $C_1$ and $C_2$ depends on $M_d(P)$, $M_d(Q)$. Both $C_1$ and $C_2$ are independent of $m,n$ and $G$.
\end{theorem}
When $G$ is a continuous group, we have the following theorem.
\begin{theorem}[Finite sample estimation of $W_1$ with infinite group symmetry]
	\label{thm:samplecomplexity1_symmetry_infinite_W1}
	Let $P,Q\in\CP_G(\CX)$ for some $\CX\subset\R^d$, where $G$ satisfies \Cref{assumption:group}. Suppose the quotient space $\CX/G$ is connected, and for any bounded $\CX_0\subset\CX/G$ with nonempty interior with respect to the subspace topology ($\CX/G\xhookrightarrow{}\R^d$). Assume that both $P$ and $Q$ are sub-Weibull on $\R^d$. Then we have
	\begin{itemize}
		\item If $\dim(\CX_0)=d^{**}\geq 3$, we have
		\begin{equation}
			\E_{X,Y}\abs{W_1^G(P_m,Q_n)-W_1(P,Q)}\leq \frac{C_1}{m^{1/d^{**}}} + \frac{C_2}{n^{1/d^{**}}};
		\end{equation}
		\item If $\dim(\CX_0)=d^{**}=2$, we have
		\begin{equation}
			\E_{X,Y}\abs{W_1^G(P_m,Q_n)-W_1(P,Q)}\leq \frac{C_1\ln m}{m^{1/2}} + \frac{C_2\ln n}{n^{1/2}};
		\end{equation}
		\item If $\dim(\CX_0)=d^{**}=1$, we have
		\begin{equation}
			\E_{X,Y}\abs{W_1^G(P_m,Q_n)-W_1(P,Q)}\leq \frac{C_1}{m^{1/2}} + \frac{C_2}{n^{1/2}},
		\end{equation}
	\end{itemize}
	where $C_1$ and $C_2$ depend on $M_d(P)$, $M_d(Q)$. Both $C_1$ and $C_2$ are independent of $m,n$.
\end{theorem}
\begin{remark}
	Although the multiplicative constants in \Cref{thm:samplecomplexity1_symmetry_W1} and \Cref{thm:samplecomplexity1_symmetry_infinite_W1} are not optimal, but the rate is optimal compared to Theorem 1 in \cite{fournier2015rate} for $W_1$, when $d^*$ or $d^{**}$ are greater than or equal to three, or equal to one.
\end{remark}

\section{Numerical experiments}\label{sec:numerical}
In this section, we demonstrate how using the Lipschitz-regularized $\alpha$-divergences as objective functionals enables stable learning of heavy-tailed distributions and distributions with low-dimensional manifolds or fractal structures with various generative models. Note that the Lipschitz-regularized $\alpha$-divergences have an equivalent primal formulation in \eqref{def:W1proximal}, which can be viewed as $\alpha$-divergences with $W_1$-proximal regularization. One may consider replacing the $W_1$-proximal regularization with a $W_2$-proximal regularization, where $W_2$ is the Wasserstein-2 distance, as the $W_2$ distance and proximal regularization is widely used in generative modeling; for example, see \cite{onken2021otflow, wang2023efficient}. The $\alpha$-divergences with $W_2$-proximal regularization are defined as
\begin{equation}\label{def:W2proximal}
	D_{\alpha,2}^\lambda(P\|Q):= \inf_{\eta\in\mathcal{P}(\mathbb{R}^d)}\{D_\alpha(\eta\|Q) + \lambda\cdot W_2^2(P,\eta)\}.
\end{equation}
In \Cref{subsec:explanation:generative:models}, we introduce the generative models used and explain how their learning objectives relate to $\alpha$-divergences with $W_1$ or $W_2$ proximals. We illustrate our points with four examples. In \Cref{subsec:examples}, we compare the effects of incorporating $W_1$ or $W_2$ proximals in the learning objectives by training on a 2D Student-t distribution and on a real-world keystroke dataset. In \Cref{sec:attractor}, we show the importance of Lipschitz-regularized $\alpha$-divergences when learning distributions with low-dimensional structures with an example of learning a strange attractor from the Lorenz 63 model. In \Cref{sec:highdimension}, we present the task of learning an anisotropic heavy-tailed distribution embedded in a high-dimensional space and the results highlight that the Lipschitz-regularized $\alpha$-divergences make generative learning agnostic to heavy-tailed and manifold assumptions. 
We use Gaussian priors for all our experiments, and the implementation details including the network architectures can be found in the Supplementary Material.

\subsection{Generative models with different learning objectives}
\label{subsec:explanation:generative:models}

$W_1$ and $W_2$ proximals can be found, sometimes implicitly, in the learning objectives of several existing generative models. Below, we list various models based on $\alpha$-divergences used in our experiments and explain why some of them are (either implicitly or explicitly) regularized by Wasserstein proximal.

\noindent\textbf{(1) Generative models without proximal regularization:}
\begin{itemize}
	\item \textbf{$\alpha$-GAN}: GANs \cite{goodfellow2014generative,nowozin2016f} based on the variational representation of the $\alpha$-divergence \eqref{eq:variation_alpha};
	%adversarial training model where discriminator first maximizes a generic objective function and then generator minimizes the objective function; in particular, GANs minimizing $\alpha$-divergences falls into the category of $f$-GANs \cite{nowozin2016f};
	\item \textbf{$\alpha$-GPA}: Generative particle algorithm (GPA) based on the $\alpha$-divergence \cite{gu2022lipschitz};
	%particle method where the discriminators are trained as in GANs and then particles are transported by the gradients of the learned discriminators;
	\item \textbf{CNF}: Continuous normalizing flows by \cite{chen2018neural}, where the loss function is based on the KL divergence, a special case of the $\alpha$-divergence when $\alpha=1$.
	%\item \textbf{$\alpha$-flow GAN}: flow-based generative models based on adversarial training of the $\alpha$-divergence \cite{grover2018flow}. 
	%\WZ{Remove}
\end{itemize}

\noindent\textbf{(2) Generative models with $W_1$-proximal regularization:}
\begin{itemize}
	\item \textbf{\text{Lip}-$\alpha$-GAN} \cite{birrell2020f}: GANs using the Lipschitz-regularized $\alpha$-divergence \eqref{eq:variational_formula} as the objective function, with the Lipschitz constant set to $L=1$ in our experiment;
	\item \textbf{\text{Lip}-$\alpha$-GPA} \cite{gu2022lipschitz}: GPAs using the Lipschitz-regularized $\alpha$-divergence \eqref{eq:variational_formula} as the objective function, with the Lipschitz constant set to $L=1$ in our experiment. This is the implementation of the gradient flow formulation \eqref{eq:gradientflow2}.
\end{itemize}

\noindent\textbf{(3) Generative models with $W_2$-proximal regularization:} We consider the following class of flow-based models, which minimize $\alpha$-divergences with $W_2$ proximal \eqref{def:W2proximal} written as \eqref{eq:w2:dynamical:formulation} via the Benamou-Brenier formula,
\begin{equation}
	\label{eq:w2:dynamical:formulation}
	\inf_{v, \rho} \mathcal{F}(\rho(\cdot, T)) + C\int_0^T \frac{1}{2}|v(x,t)|^2 \rho(x,t)\diff{x} \diff{t}.
\end{equation}
Here, $\rho:\R^d\times [0, T]\to \R$ is the evolution of the probability measure via the (trainable) velocity field $v:\R^d\times [0, T]\to \R^d$, satisfying the Fokker–Planck equation:
\begin{equation}
	\rho_t + \nabla\cdot (\rho v) =  \frac{\sigma^2}{2}\Delta \rho, \quad \rho(\cdot, 0) = \rho_0~\text{is a tractable prior distribution, e.g., Gaussian.}
\end{equation}
\begin{itemize}
	\item \textbf{OT flow} \cite{onken2021otflow}: Optimal transport (OT) normalizing flows, which are equivalent to the $W_2$-proximal of CNFs, with $\mathcal{F}(\rho(\cdot, T)) = D_{\text{KL}}(Q \| \rho(\cdot, T)) $ and $\sigma=0$ in \eqref{eq:w2:dynamical:formulation};
	\item \textbf{VE-SGM} \cite{song2021scorebased}: Score-based generative models (SGM) with variance-exploding (VE) forward SDE \cite{song2021scorebased}. According to the mean-field game formulation by \cite{zhang2023meanfield}, it is equivalent to \eqref{eq:w2:dynamical:formulation} with stochastic dynamics ($\sigma>0$) and a cross-entropy terminal cost $\mathcal{F}(\rho(\cdot, T)) = -\mathbb{E}_{\rho(\cdot, T))} [\log Q]$, essentially also a $W_2$-proximal of CNFs.
\end{itemize}
We refer to \Cref{fig:experiment:design} for a visual illustration of the relationships among the models being compared.

\begin{figure}[t]
	\centering
	\includegraphics[width=.6\textwidth]{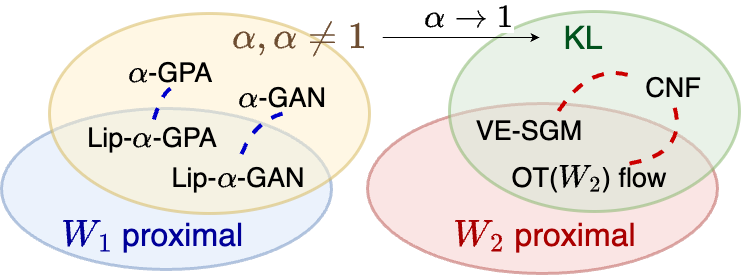}
	\caption{Generative models in the experiment and their relationship with the $\alpha$-divergences with $W_1$ or $W_2$ proximal regularization. See \Cref{subsec:explanation:generative:models} for detailed explanations of the models and notations.}
	\label{fig:experiment:design}
\end{figure}

\subsection{Learning heavy-tailed distributions}
\label{subsec:examples}

\begin{figure}[h]
	\centering
	
	\begin{subfigure}{\textwidth}
		\centering
		\includegraphics[width=.23\linewidth]{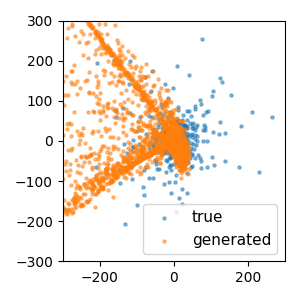}
		\includegraphics[width=.23\linewidth]{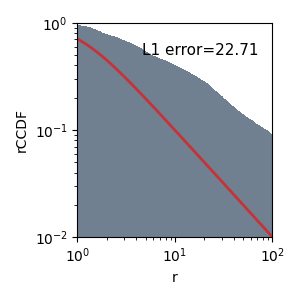}
		\includegraphics[width=.23\linewidth]{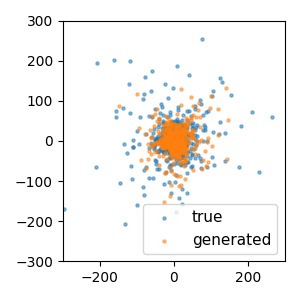}
		\includegraphics[width=.23\linewidth]{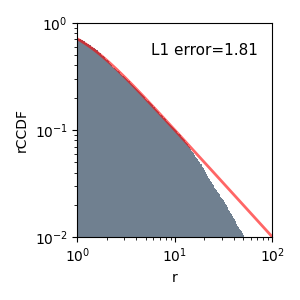}
		\caption{ $\alpha$-GAN (left) and its counterpart with $W_1$-proximal regularization, \text{Lip}-$\alpha$-GAN (right) }
		
	\end{subfigure}
	
	\begin{subfigure}{\textwidth}
		\centering
		\includegraphics[width=.23\linewidth]{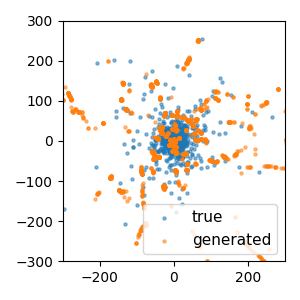}
		\includegraphics[width=.23\linewidth]{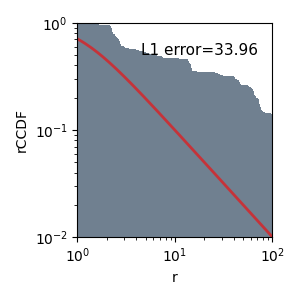}
		\includegraphics[width=.23\linewidth]{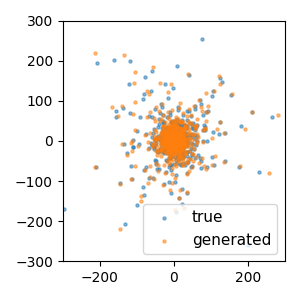}
		\includegraphics[width=.23\linewidth]{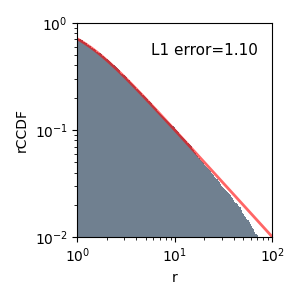}
		\caption{ $\alpha$-GPA (left) and its counterpart with $W_1$-proximal regularization, \text{Lip}-$\alpha$-GPA (right) }
		
	\end{subfigure}

	\caption{Learning a 2D isotropic Student-t with degree of freedom $\nu=1$ (tail index $\beta=3.0$) using generative models based on $\alpha$-divergences with $\alpha=2$ with or without Lipschitz regularization.
		Models with Lipschitz regularization (right) learn the heavy-tailed distribution significantly better than those without (left). See \Cref{subsec:explanation:generative:models} for detailed explanations of the models.
	}
	\label{fig:student-t:1.0:alpha:proximal:divergence}
\end{figure}

\paragraph{2D Student-t  example} We compare various generative models for learning a heavy-tailed 2D isotropic Student-t distribution with $\nu$ degrees of freedom, $q(x)\propto (1+ \frac{|x|^2}{\nu})^{\frac{\nu + 2}{2}}$. This synthetic example allows us to adjust the tail decay rate $\beta = \nu + 2$ by selecting different degrees of freedom $\nu$. In the main text, we present a heavy-tailed example with $\beta=3$ that does not have a finite first moment, while the relatively easier case of $\beta=5$ is deferred to \Cref{fig:student-t:3.0:alpha:proximal:divergence} and \Cref{fig:student-t:3.0:proximal:losses} in the Supplementary Material. We use 10,000 samples to train the models.

\Cref{fig:student-t:1.0:alpha:proximal:divergence} and \Cref{fig:student-t:1.0:proximal:losses} present the performance of various generative models. Each model is evaluated in two plots. First, a 2D scatter plot displays the generated samples (orange) and the true samples (blue), providing a visual assessment of the sample quality. Next, the tail behavior is assessed by plotting the ground truth Radial Complementary Cumulative Distribution Function (rCCDF) (red curve) and the histogram of the radii of generated samples (gray). The rCCDF is defined as $\text{rCCDF}(r) = 1- \text{CDF}(r)$, where CDF($r$) is the cumulative distribution function of the radius.  We then calculate the $L_1$ error between the ground truth rCCDF and the generated sample histogram. Generative models with Lipschitz regularization ($W_1$-proximal) significantly outperform the others in learning heavy-tailed distributions, corroborating our theoretical results in \Cref{sec:theory}.

\begin{figure}[h]
	\centering
	\begin{subfigure}{.48\linewidth}
		\includegraphics[width=.48\linewidth]{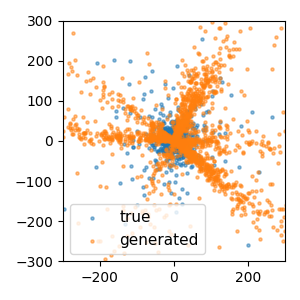}
		\includegraphics[width=.48\linewidth]{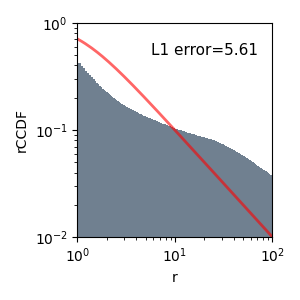}
		\caption{CNF}
	\end{subfigure}\\
	\begin{subfigure}{.48\linewidth}
		\includegraphics[width=.48\linewidth]{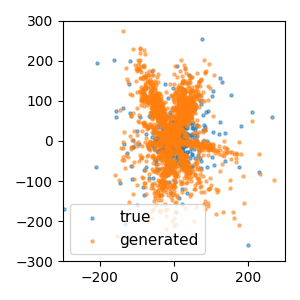}
		\includegraphics[width=.48\linewidth]{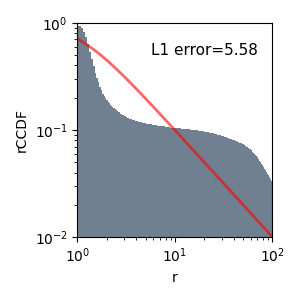}
		\caption{OT flow}
	\end{subfigure}
	\begin{subfigure}{.48\linewidth}
		\includegraphics[width=.48\linewidth]{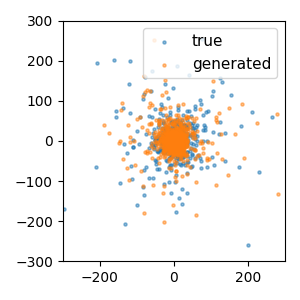}
		\includegraphics[width=.47\linewidth]{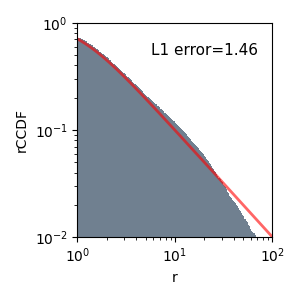}
		\caption{VE-SGM}
	\end{subfigure}
	\caption{Learning a 2D isotropic Student-t with degree of freedom $\nu=1$ (tail index $\beta=3.0$) using generative models based on $\alpha$-divergences with or without $W_2$-proximal regularization and $\alpha=2$.
		See \Cref{subsec:explanation:generative:models} for detailed explanations of the models.
	}
	\label{fig:student-t:1.0:proximal:losses}
\end{figure}

\begin{figure}[h]
	\centering
	\begin{subfigure}{.49\linewidth}
		\includegraphics[width=.49\linewidth]{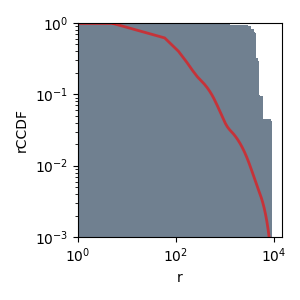}
		\includegraphics[width=.49\linewidth]{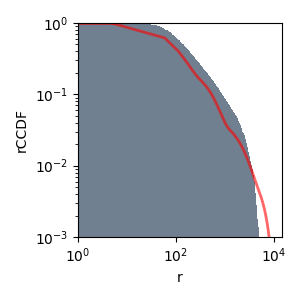}
		\caption{$\alpha$-GPA  (left),  $\alpha$-GAN (right)}
	\end{subfigure}
	\begin{subfigure}{.49\linewidth}
		\centering
		\includegraphics[width=.49\linewidth]{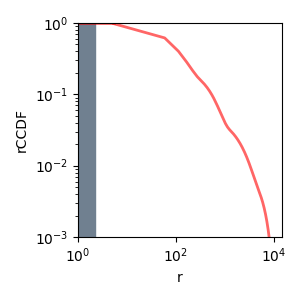}
		\caption{CNF}
	\end{subfigure}
	
	\caption{Sample generation of inter-arrival time between keystrokes. Generative models based on the $\alpha$-divergences with $\alpha=2$ (a), and the KL divergence (b). }
	\label{fig:real:example:without:w:proximals}
\end{figure}

\paragraph{Keystroke example} 
For a real-world heavy-tailed example, we consider learning the inter-arrival time between keystrokes from multiple users typing sentences \cite{interarrivaltime_heavytailedexample}.
The target dataset consists of 7,160 scalar samples, and we generated 10,000 samples using generative models with $W_1$ or $W_2$ proximal regularization.

We display the tail behavior by plotting the ground truth CCDF (red curve) and the corresponding histogram of the generated samples (gray). Unlike the previous synthetic example, the ground truth CCDF here is obtained by interpolating the heights of the histogram bins of the true samples.
In \Cref{fig:keystrokes:proximal:losses}, generative models with $W_1$-proximal regularization (\text{Lip}-$\alpha$-GPA and \text{Lip}-$\alpha$-GAN) outperform those regularized with $W_2$-proximals (OT flow and VE-SGM) in capturing the tails. 
This observation suggests that $W_1$-proximal algorithms can potentially handle heavier tails more effectively than $W_2$-proximal methods. In other words, algorithms based on the Lipschitz-regularized $\alpha$-divergences are more agnostic to heavy-tailed assumptions.

\begin{figure}[h]
	\centering
	\begin{subfigure}{.49\textwidth}
		\includegraphics[width=.49\linewidth]{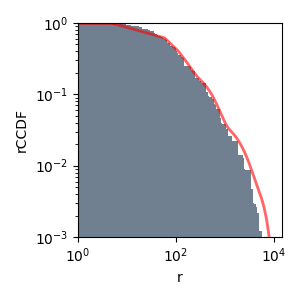}
		\includegraphics[width=.49\linewidth]{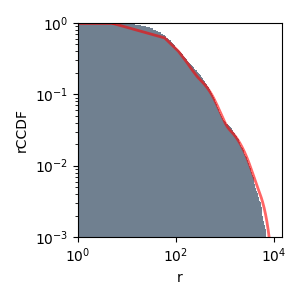}
		\caption{\text{Lip}-$\alpha$ GPA (left),  \text{Lip}-$\alpha$ GAN (right)}
	\end{subfigure}
	\begin{subfigure}{.49\textwidth}
		\includegraphics[width=.49\linewidth]{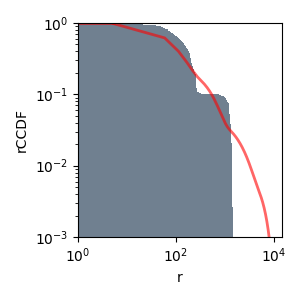}
		\includegraphics[width=.49\linewidth]{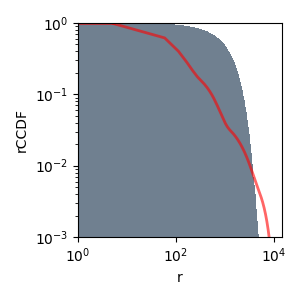}
		\caption{OT flow (left), VE SGM (right) }
	\end{subfigure}

	\caption{Sample generation of inter-arrival time between keystrokes. Generative models with $W_1$-proximal regularization, panel~(a), outperform those with $W_2$-proximal regularization, panel~(b), in capturing the tails. This observation suggests that $W_1$-proximal algorithms can potentially handle heavier tails more effectively than $W_2$-proximal methods.}
	\label{fig:keystrokes:proximal:losses}
\end{figure}

\subsection{Learning  attractors of chaotic dynamical systems}\label{sec:attractor}
\paragraph{Strange attractor from Lorenz 63 example}
The Lorenz 63 model is renowned for its strange attractor, which exhibits a complex fractal structure characterized by a non-integer Hausdorff dimension. In this example, we use various generative models to learn the geometric shape of the attractor, without accounting for its underlying dynamics.
The target dataset $\mathcal{T}$ for the generative models consists of $N=5000$ positions, defined as: 
$\mathcal{T} = \{\mathbf{x}(t_i) = (x_1(t_i), x_2(t_i), x_3(t_i)) : t_i \sim \text{Unif}([9900, 10000]) \}_{i=1}^{N}$ where $(x_1(t_i), x_2(t_i), x_3(t_i))$ is a numerically computed solution trajectory of the Lorenz 63 model with the standard parameter values $a=10, b=28, c=8.3$. The generated samples are represented as $\mathcal{G} = \{\mathbf{y}_i = (y_{1i}, y_{2i}, y_{3i})\}_{i=1}^M$, where $M$ is the number of generated points which does not necessarily match $N$. We use $M = 10000$ generated samples across various generative models for this example.

Because the generated samples lack time labels, the dynamics cannot be directly observed. 
Instead, we consider two standards: (a) measurement of how close the generated particles land on the attractor and (b) characteristic of the fractal structure. These standards are measured by corresponding metrics:
\begin{itemize}
	\item[(a)] \textbf{Mean square sum of the errors (MSE)} between generated samples $\mathbf{y}_i$ and their closest validation sample $\mathbf{v}_i^* = \rm{argmin}_{\mathbf{v}_j \in \mathcal{V}} |\mathbf{y}_i - \mathbf{v}_j|$ where the validation dataset is given as $\mathcal{V}=\{\mathbf{v}_j = (v_1(t_j), v_2(t_j), v_3(t_j)) : t_j = 9900 + 0.01 \cdot j\}_{j=1}^{10000}$ 
	\begin{equation}
		\label{eq:mse}
		\text{MSE} = \frac{1}{M}\sum_{i=1}^{M} |\mathbf{y}_i - \mathbf{v}_i^*|^2,
	\end{equation}
	which measures the deviation of generated samples from the attractor trajectory.
	
	\item[(b)] Adapted \textbf{Correlation dimension} for measuring dimensionality of the space occupied by point clouds of generated samples $\{\mathbf{y}_i\}_{i=1}^{M}$ without time information. Original correlation dimension is a characteristic measure to distinguish between deterministic chaos and random noise, to detect potential faults \cite{correlation_dimension}. Real correlation dimension for the attractor of Lorenz 63 should be 2.05. We obtained a reference value 2.04 by applying to our validation dataset $\mathcal{V}$ from a selection of the algorithm's parameter radius $r \in [0.7, 1.1]$.
\end{itemize}
The results can be found in \Cref{tab:lorenz63}. The results illustrate that 1) Lipschitz-regularized methods in general capture the attractor and its structure while those without Lipschitz regularization fail; 2) other methods such as OT flows, CNFs, and SGMs fail to accurately capture the attractor even they are trained for a longer time with more complicated network architecture. We additionally visualize generated samples in \Cref{fig:lorenz63}. Similar results when $N=1000$ and $M=2000$ can be found in the Supplementary Material.

\begin{figure}[htp]
	\centering
	\begin{subfigure}{\linewidth}
		\includegraphics[width=0.24\linewidth]{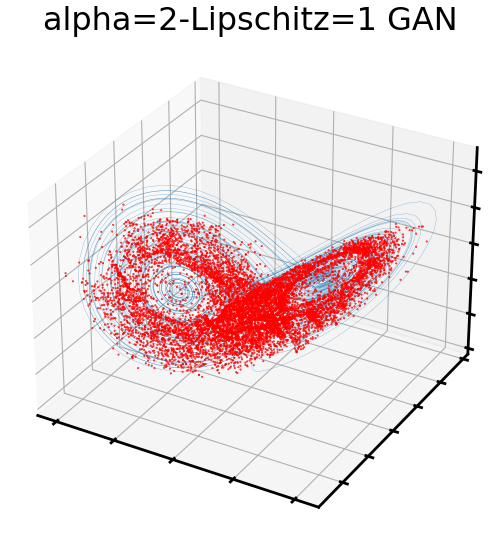}
		\includegraphics[width=0.24\linewidth]{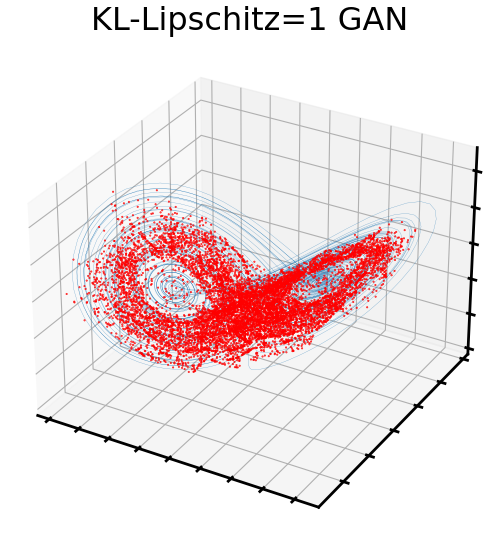}
		\includegraphics[width=0.24\linewidth]{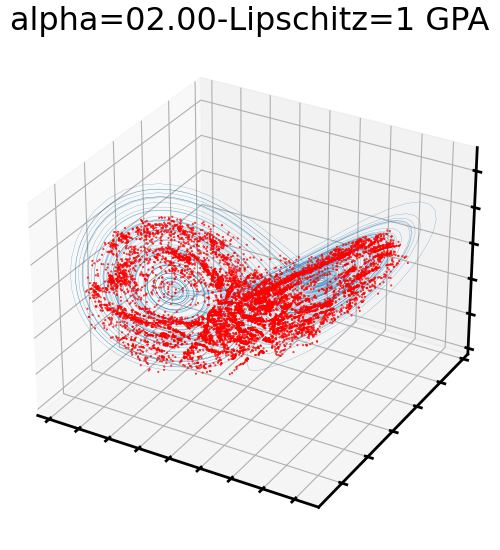}
		\includegraphics[width=0.24\linewidth]{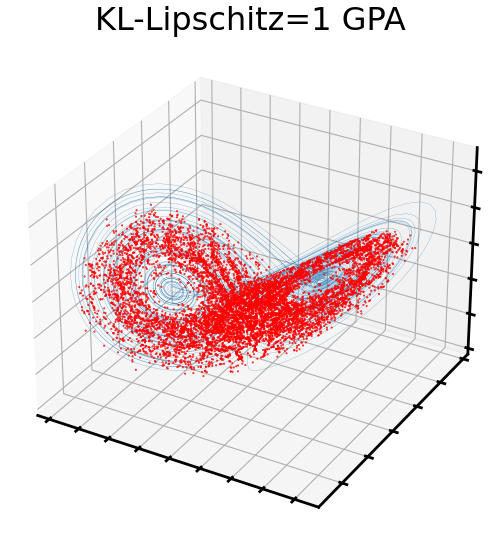}
		\caption{Generated samples from generative models with Lipschitz-regularized $\alpha$-divergences as learning objectives.
			$\alpha=2$-Lipschitz-1 GAN  (first), KL-Lipschitz-1 GAN (second), $\alpha=2$-Lipschitz-1 GPA  (third), KL-Lipschitz-1 GPA (fourth) }
	\end{subfigure}
	\begin{subfigure}{\linewidth}
		
		\includegraphics[width=0.24\linewidth]{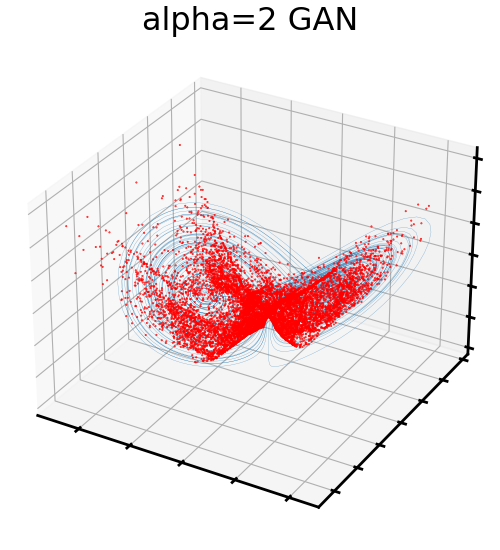}
		\includegraphics[width=0.24\linewidth]{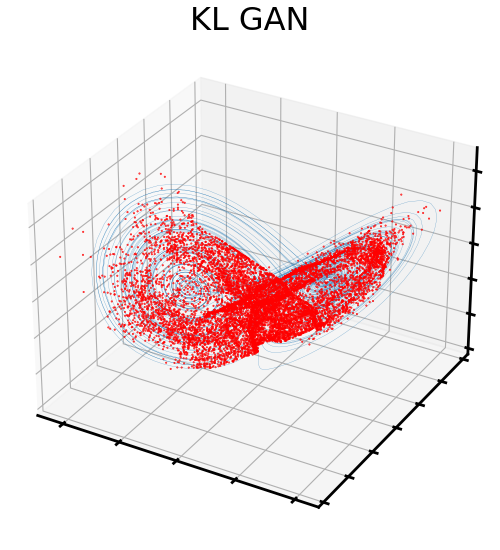}
		\includegraphics[width=0.24\linewidth]{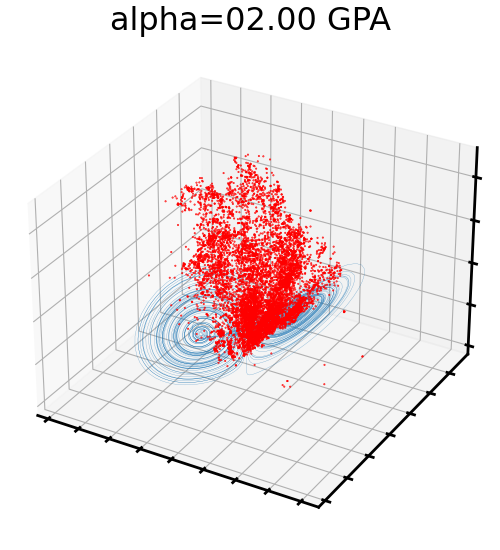}
		\includegraphics[width=0.24\linewidth]{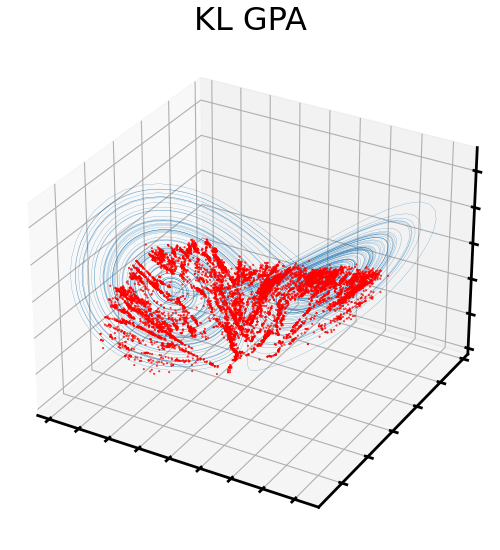}
		\caption{Generated samples from generative models with un-regularized $\alpha$-divergences as learning objectives. $\alpha=2$ GAN (first), KL GAN (second), $\alpha=2$ GPA (third), KL GPA (fourth). Snapshots from $\alpha=2$ and KL GPAs are transient and eventually blew up.}
	\end{subfigure}
	\begin{subfigure}{\linewidth}
		\centering
		\includegraphics[width=0.24\linewidth]{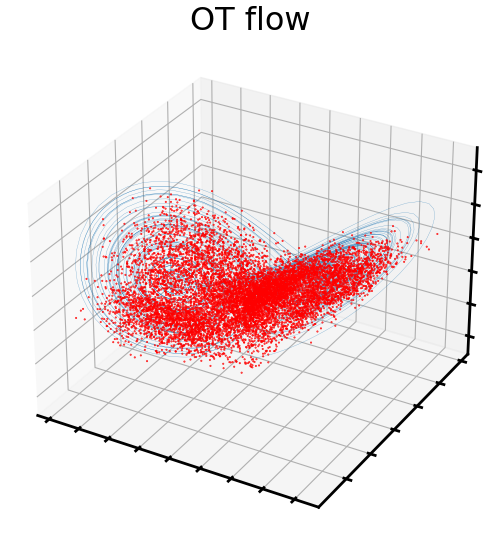}
		\includegraphics[width=0.24\linewidth]{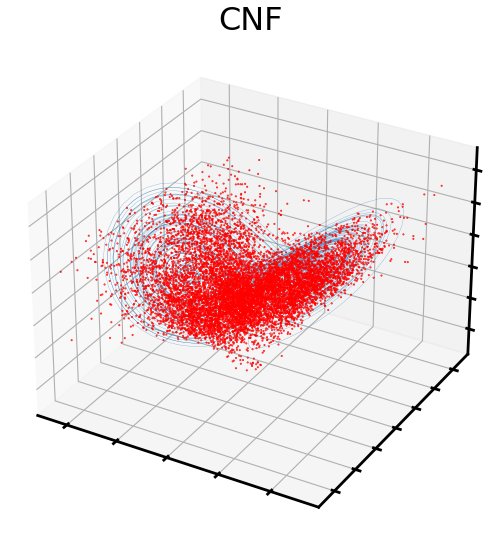}
		\includegraphics[width=0.24\linewidth]{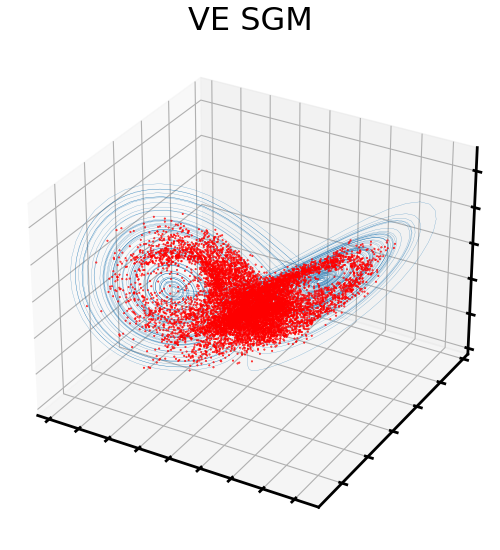}
		
		\caption{Generated samples from generative models with different learning objectives. 
			OT-flow: $W_2$-reverse KL divergence (left), CNF: reverse KL divergence (center), VE-SGM: $W_2$-proximal regularized cross-entropy (right)}
	\end{subfigure}
	
	\caption{Generated samples ($M=10000$) of the Lorenz 63 strange attractor from $N = 5000$ target samples. Lipschitz-regularized methods in general capture the attractor and its dimension while those without Lipschitz regularization fail. Other methods such as OT flows, CNFs, and SGMs cannot accurately capture the fractal structure. See \Cref{tab:lorenz63} for error metrics.}
	\label{fig:lorenz63}
\end{figure}

\begin{table}[h]
	\centering
	\begin{tabular}{c|r|r|r}
		Model & MSE & Correlation dimension &  Computation time (sec) \\ \hline\hline %MMD ($\gamma=0.01$)  \\\hline\hline
		\text{Lip}-$\alpha=2$ GAN & $\mathbf{0.1240}$  & $\mathbf{2.00}$  & 491.851 \\ \hline%0.0448 \\ \hline
		\text{Lip}-KL GAN & $\mathbf{0.1226}$  & $\mathbf{2.01}$  &  505.330 \\ \hline %0.0415 \\ \hline% 0.0406 \\ \hline
		$\alpha=2$ GAN & 0.945  & 1.99  & 336.272 \\ \hline%0.0743 \\ \hline
		KL GAN & 0.1612 & 1.99  & 486.941 \\ \hline%0.0418 \\ \hline
		\text{Lip}-$\alpha=2$ GPA & 0.2984 & 1.60 & 410.385 \\
		\hline %0.0410 \\ \hline
		\text{Lip}-KL GPA & 0.1369 & 1.91 & 398.344 \\ \hline
		$\alpha=2$ GPA & - & - & - \\ \hline%0.1071  \\ \hline
		KL GPA & - & - &  - \\ \hline%\textbf{0.0182}  \\ \hline
		
		OT($W_2$) flow & 0.6231 & 2.29 & $\geq 60000$ \\
		\hline%\textbf{0.0228}\\ \hline
		CNF & 1.2674 & 2.31 & $\geq 60000$ \\ \hline%\textbf{0.0335}\\ \hline
		VE SGM & $\mathbf{0.0791}$ & 2.31 & 2382.733\\ %0.1088 \\
		\hline
	\end{tabular}
	\caption{Performance metrics: (i) MSE \eqref{eq:mse} between generated samples and the validation dataset $\mathcal{V}$ that measures how close the generated particles land on the attractor, and (ii) Correlation dimension for $M=10000$ generated samples from different generative models. The ground truth correlation dimension measured on the validation dataset $\mathcal{V}$ is 2.04. A higher correlation dimension implies that noise dominates the shape of the attractor. A lower correlation dimension implies that the point clouds are more sparsely populated on the attractor; see for instance, Lip-$\alpha=2$ GPA compared to Lip-$\alpha=2$ GAN in \Cref{fig:lorenz63}. We do not report the MSE and Correlation dimension for $\alpha=2$ GPA and KL GPA (no Lipschitz regularization) since generated particles diverged in the early stage of training.
		Although SGM has the smallest MSE, it takes significant longer time to train, requiring much deeper network architecture (otherwise it does not converge), and it still significantly over-estimates the fractal dimension. See also \Cref{fig:lorenz63} for visualizations.\label{tab:lorenz63}}
\end{table}
\subsection{Learning distributions supported on low dimensional manifolds}\label{sec:highdimension}
\paragraph{10D heavy-tailed manifold embedded in 110D} 
We provide a high-dimensional example adapted from \cite{huster2021pareto}. In this example, a 10D heavy-tailed distribution is embedded in $\mathbb{R}^{110}$. Each of the first 10 axes is drawn from the standard Cauchy distribution $w_i \sim Cauchy$, then powered by a random exponent $t_i \sim \text{Unif}([0.5, 2])$, i.e.,
$x_i = sign(w_i) |w_i|^{t_i} \text{ for } i=1, \dots, 10$.
Values of the remaining axes are set to zero: $x_i=0$ for $i=11, \cdots, 110$. In our experiment, we fix the exponents $t_i, i=1, \cdots 10$, to  $(1.31, 0.91, 1.13, 1.76,  0.50, 0.68, 1.50, 1.73, 0.70, 1.36)$. We present two metrics similar to those used in the multivariate distributions example in \cite{huster2021pareto} to demonstrate (a) whether the algorithm can capture the heavy tails in the first 10 dimensions and (b) whether the generated distribution correctly lies on the 10-dimensional plane. For (a), we calculate the averaged $L_1$ error over the first 10 dimensions between the empirical rCCDF $F_v$ built from a validation dataset consisting of 100K target samples and the empirical rCCDF $F_g$ built from generated samples:
\begin{equation}\label{eq:L1error}
	L_1(F_v, F_g) = \sum_{i=1}^{20000}|F_v(z_i)-F_g(z_i)|  (z_{i+1}-z_i),
\end{equation}
where $z_i$ are sampled in equi-distance from the interval $[1, 5\times 10^6]$. For (b), we calculate the Euclidean distance of the generated samples to their projections on the first 10-dimensional subspace which is written as $\sum_{i=11}^{110}\mathbb{E}_{y_i} [\|y_i\|]$ where the orthogonal subspace is represented as zero $[0, \cdots, 0] \in \mathbb{R}^{100}$.

\begin{table}[h]
	\centering
	\begin{tabular}{c|c|c}
		Model & heavy-tailed subspace & orthogonal subspace  \\
		& avg $L_1$ error  & avg Euclidean distance \\
		\hline\hline
		\text{Lip}-$\alpha$ GPA & $\mathbf{3.1155 e + 02}$ & $3.4179e+00$ \\
		\hline
		$\alpha$ GPA & $4.9993e+06$ & $1.7150e+15$  \\
		\hline
		\text{Lip}-$\alpha$ GAN & $\mathbf{3.4645e+02}$ %($3.2283e+02$) 
		& $\mathbf{1.0990e-01}$ %($1.1713e-01$) 
		\\
		\hline
		$\alpha$ GAN & $4.4994e+06$ %($4.4994e+06$) 
		& $\mathbf{2.4480e-03}$ %($1.2395e-03$) 
		\\
		\hline
		OT($W_2$) flow & $4.9993e+06$ & inf \\
		\hline
		CNF & $4.9993e+06$ & inf \\
		\hline
		VE SGM & $3.6031e+02$ & $1.4441e+03$ \\
		\hline
	\end{tabular}
	\caption{Learning 10D heavy-tailed data embedded in $\mathbb{R}^{110}$ using 10K target samples. We report the  $L_1$ error defined in \eqref{eq:L1error} averaging over the first 10 dimensions. 
		Generative models without Lipschitz-regularized learning objectives, such as unregularized models or those using $W_2$-proximal regularization, either fail to capture the heavy tails or fail  to capture the manifold. In contrast, Lipschitz-regularized $\alpha$-divergence enables generative models to learn heavy-tailed distributions even when the tails exhibit different power-law behaviors, i.e.,  $Q(x_i) \sim |x_i|^{-\beta_i}$ for $i=1, \cdots, 10$.
		In addition, the Lipschitz-regularized $\alpha$-divergence encourages generated samples to lie near the data manifold. The unconstrained discriminator in $\alpha$-GAN produces large values outside the manifold, forcing the generator to map the source onto the 10D plane. However, the unconstrained $\alpha$-GAN fails to learn the tails. For further comparison of training objective function values for GANs and GPAs, see \Cref{tab:divergence:values} in \Cref{append:divergence_value}.\label{tab:high:dim}
	}
\end{table}
The results in \Cref{tab:high:dim} verify that models with the Lipschitz-regularized $\alpha$-divergences as objectives are more agnostic to both heavy-tailed and manifold assumptions.

\section{Conclusions and discussions}\label{sec:conclusion}

In this paper, we prove that Lipschitz-regularized $\alpha$-divergences, introduced in previous works, enable robust and stable learning for target distributions with minimal assumptions. In particular, we prove that these divergences are always finite and have a well-defined variational derivative when the first input distribution has a finite first moment. We also prove the sufficient and necessary conditions for the divergence to be finite when both distributions have power-law-decay tails. A first convergence rate of the finite-sample estimations of these divergences on $\R^d$ is proved. As a result, we derive the first sample complexity bounds for the empirical estimations of $D_\alpha^L$ and $W_1$ with group symmetry on $\R^d$. Numerical simulations further confirm the robustness of these divergences, showing that they significantly improve the learning process across a range of challenging scenarios, such as heavy-tailed distributions or distributions supported on low-dimensional manifolds or fractals.

Some future directions are unexplored in this work. First, it is not clear if there is an optimal $\alpha$ or if the $\alpha$ should be chosen adaptively to make the learning more efficient. Second, the PDE theory of the Lipschitz-regularized gradient flow is not established, and the convergence of the gradient flow is an important topic and may require some new functional inequalities. Lastly, \Cref{thm:samplecomplexity1} is not sharp, and a sharp convergence bound will help better understand this class of divergences and further derive better generalization bounds for algorithms based on this class of divergences.

\section*{Acknowledgement}
Z. Chen, H. Gu, M. Katsoulakis, L. Rey-Bellet are partially funded by AFOSR grant FA9550-21-1-0354. M.K. and L. R.-B. are partially funded by NSF DMS-2307115. H. G. and M.K.  are partially funded by NSF TRIPODS CISE-1934846. Z. Chen and W. Zhu are partially supported by NSF under DMS-2052525, DMS-2140982, and DMS-2244976. The authors would like to thank the anonymous reviewers for their careful reading and constructive feedback, which helped improve the manuscript.

\section*{Data Availability Statement}
All codes in \Cref{sec:numerical} can be found at: \url{https://github.com/HyeminGu/Proximal_generative_models}. The implementation details can be found in the Supplementary Material. The keystroke data is available in [Observations on Typing from 136 Million Keystrokes], at \url{ http://userinterfaces.aalto.fi/136Mkeystrokes}. All the other datasets can be generated on a local computer.

\section*{Funding}
This work was supported by the National Science Foundation [DMS-2307115 to M.K. and L.R.-B., TRIPODS CISE-1934846 to H.G. and M.K., DMS-2052525 to Z.C. and W.Z., DMS-2140982 to Z.C. and W.Z., DMS-2244976 to Z.C. and W.Z.]; and the Air Force Office of Scientific Research [FA9550-21-1-0354 to Z.C., H.G., M.K., and L.R.-B.].
%%%%%%%%%%%%%%%%%%%%%%%%%%%%%%%%%%%%%%%%%%%%%%%%%%%%%%%%%%%%

\bibliographystyle{abbrv}
\bibliography{IMAbibfile}

%\newpage

\appendix

\section{Notation for the proofs}We denote by $A\lesssim B$ if there are some $c,d>0$, such that $A\leq cB+d$; and $A\asymp B$ if both $A\lesssim B$ and $B\lesssim A$ hold. For a bounded set $\Omega\subset\mathbb{R}^d$, $\text{diam}(\Omega) = \sup_{x,y\in\Omega}\norm{x-y}_2$, where $\norm{\cdot}_2$ is the Euclidean norm on $\mathbb{R}^d$. Moreover, given a probability density $p(x)$, we use $M_r(p)$ to denote the $r$-th moment of $p(x)$. For convenience, we will abuse notation and use symbols $p,q$ and $P,Q$, to represent probability distributions as well as the density functions associated with them. Whether
a character refers to a probability distribution or a density should be clear from the context.

\section{Additional lemma of \Cref{thm:agnostic}}\label{appendix:agnostic}
For the Lipschitz-regularized KL-divergence, we have the following lemma similar to \Cref{lemma:bounded_div}.
\begin{lemma}\label{lemma:bounded_div_KL}
	For the KL case, i.e., $f_{\text{KL}}^*(y) = e^{y-1}$ and any non-negative measures $P$ and $Q$ defined on some bounded $\Omega\subset\mathbb{R}^d$ with non-zero integrals, $\Gamma = \text{Lip}_L(\Omega)$, we have 
	\begin{equation}\label{eq:gammaequalF_KL}
		\sup_{\gamma\in\Gamma}\left\{\int_\Omega \gamma(x) \diff{P}-\int_\Omega f_{\text{KL}}^*[\gamma(x)]\diff{Q}\right\} = \sup_{\gamma\in\mathcal{F}}\left\{\int_\Omega \gamma(x)\diff{P}-\int_\Omega f_{\text{KL}}^*[\gamma(x)]\diff{Q}\right\},
	\end{equation}
	where 
	\[
	\mathcal{F} = \left\{\gamma\in\text{Lip}_{L}(\Omega):\ln\frac{\int_\Omega \diff{P}}{\int_\Omega \diff{Q}}+1-L\cdot\text{diam}(\Omega)\leq\gamma\leq \ln\frac{\int_\Omega\diff{P}}{\int_\Omega \diff{Q}}+1+L\cdot\text{diam}(\Omega)\right\}.
	\]
\end{lemma}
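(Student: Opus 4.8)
The plan is to mirror the proof of \cref{lemma:bounded_div}: constant shifts of a competitor $\gamma$ stay $L$-Lipschitz, and one should choose the shift so that the shifted function lands in $\mathcal{F}$ without decreasing the objective. I would fix $\gamma\in\Gamma=\text{Lip}_L(\Omega)$, write $\omega:=L\cdot\text{diam}(\Omega)$ so that $\sup_\Omega\gamma-\inf_\Omega\gamma\leq\omega$ (in particular $\gamma$ is bounded on $\Omega$), and study
\[
h(\nu)=\int_\Omega(\gamma(x)+\nu)\,p(x)\,\diff x-\int_\Omega e^{\gamma(x)+\nu-1}\,q(x)\,\diff x,\qquad \nu\in\mathbb{R}.
\]
Because $\gamma$ has oscillation at most $\omega$ on the bounded set $\Omega$, the integrand $e^{\gamma(x)+\nu-1}$ is bounded on $\Omega$ uniformly for $\nu$ in compact sets, so dominated convergence justifies differentiating under the integral sign to get
\[
h'(\nu)=\int_\Omega p(x)\,\diff x-\int_\Omega e^{\gamma(x)+\nu-1}\,q(x)\,\diff x.
\]

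The key observation is that, since $\nu\mapsto e^{\nu}$ is strictly convex and $p,q\geq 0$ with $\int_\Omega p>0$, $\int_\Omega q>0$, the function $h$ is strictly concave, $h'$ is continuous and strictly decreasing, $h'(\nu)\to\int_\Omega p>0$ as $\nu\to-\infty$, and $h'(\nu)\to-\infty$ as $\nu\to+\infty$. Hence $h$ has a unique maximizer $\nu^\ast$ characterized by $\int_\Omega e^{\gamma(x)+\nu^\ast-1}q(x)\,\diff x=\int_\Omega p(x)\,\diff x$, i.e. the $q$-weighted average of $e^{\gamma+\nu^\ast-1}$ equals $\int_\Omega p/\int_\Omega q$. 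Sandwiching this average between its pointwise infimum and supremum and taking logarithms gives $\inf_\Omega(\gamma+\nu^\ast)\leq\ln\frac{\int_\Omega p}{\int_\Omega q}+1\leq\sup_\Omega(\gamma+\nu^\ast)$, and combining each inequality with the oscillation bound $\sup_\Omega(\gamma+\nu^\ast)-\inf_\Omega(\gamma+\nu^\ast)\leq\omega$ yields $\ln\frac{\int_\Omega p}{\int_\Omega q}+1-\omega\leq\gamma+\nu^\ast\leq\ln\frac{\int_\Omega p}{\int_\Omega q}+1+\omega$ pointwise on $\Omega$, i.e. $\gamma+\nu^\ast\in\mathcal{F}$.

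To finish, for every $\gamma\in\Gamma$ the shifted competitor $\gamma+\nu^\ast\in\mathcal{F}$ satisfies $h(\nu^\ast)\geq h(0)$, so its value in the variational problem is at least that of $\gamma$; therefore the supremum over $\Gamma$ is bounded above by the supremum over $\mathcal{F}$, while the reverse inequality is immediate from $\mathcal{F}\subseteq\Gamma$. I expect the only genuinely delicate point to be the uniform domination needed both to differentiate under the integral and to sandwich the $q$-average of $e^{\gamma+\nu^\ast-1}$ between $\inf_\Omega$ and $\sup_\Omega$; this is where boundedness of $\Omega$ (finite diameter) enters, and it is the analogue of the step in \cref{lemma:bounded_div} where the super-linear growth of $f_\alpha^*$ is used — here the exponential grows even faster, but the argument is unchanged because $\gamma$ still has bounded oscillation on $\Omega$. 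I would also note that, unlike the $\alpha>1$ case, $f_{\text{KL}}^\ast(y)=e^{y-1}$ is finite, smooth, and strictly convex on all of $\mathbb{R}$, so $h$ is strictly concave and no separate treatment of the region $\gamma<0$ (which appears in the proof of \cref{lemma:bounded_div}) is needed.
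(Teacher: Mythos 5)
Your proof is correct and follows essentially the same route as the paper's: consider the constant-shifted competitor $\gamma+\nu$, study $h(\nu)=\int_\Omega(\gamma+\nu)p\,\diff{x}-\int_\Omega e^{\gamma+\nu-1}q\,\diff{x}$, and use the first-order condition for the shift to force the improved function into $\mathcal{F}$. The only difference is presentational — you exhibit the optimal shift $\nu^\ast$ explicitly via strict concavity and sandwich the $q$-average of $e^{\gamma+\nu^\ast-1}$, whereas the paper argues that any $\gamma$ violating either bound of $\mathcal{F}$ is strictly improved by a small shift; both arguments are sound and give the same conclusion.
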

\begin{proof}
	For any fixed $\gamma\in\Gamma$, define
	\[
	h(\nu) = \int_\Omega \left(\gamma(x)+\nu\right) \diff{P}-\int_\Omega f_{\text{KL}}^*[\gamma(x)+\nu]\diff{Q}.
	\]
	Since $\sup_{x\in\Omega}\gamma(x)- \inf_{x\in\Omega}\gamma(x)\leq L\cdot\text{diam}(\Omega)$, 
	interchanging the integration with differentiation is allowed by the dominated convergence theorem: 
	\[
	h'(\nu) = \int_\Omega \diff{P} -\int_\Omega f_{\text{KL}}^{*\prime}(\gamma+\nu)\diff{Q}.
	\]
	If $\inf_{x\in\Omega}\gamma(x)>\ln\frac{\int \diff{P}}{\int\diff{Q}}+1$, then $h'(0)<0$. So there exists some $\nu_0<0$ such that $h(\nu_0)>h(0)$. This indicates the supremum on the left side of \eqref{eq:gammaequalF_KL} is attained only if $\sup_{x\in\Omega}\gamma(x)\leq \ln\frac{\int \diff{P}}{\int \diff{Q}}+1 + L\cdot\text{diam}(\Omega)$. On the other hand, if $\sup_{x\in\Omega}\gamma(x)<\ln\frac{\int \diff{P}}{\int \diff{Q}}+1$, then $h'(0)>0$. So there exists some $\nu_0>0$ such that $h(\nu_0)>h(0)$.
	This indicates that the supremum on the left side of \eqref{eq:gammaequalF_KL} is attained only if $\inf_{x\in\Omega}\gamma(x)\geq \ln\frac{\int \diff{P}}{\int \diff{Q}}+1-L\cdot\text{diam}(\Omega)$.
\end{proof}

\section{Proof of \Cref{thm:firstvariation}}\label{appendix:firstvariation}
\begin{proof}[Proof of \Cref{thm:firstvariation}]
	The existence and uniqueness of $\gamma^\star$ follow from Theorem 4.9 in \cite{dupuis2022formulation} and Theorem 25 in \cite{birrell2020f}. We extend $\gamma^\star$ from $\text{supp}(P)\cup\text{supp}(Q)$ to all of $\R^d$ by
	\begin{equation}\label{eq:Lip_extension}
		\hat{\gamma}(y)= \sup_{x\in\text{supp}(P)\cup\text{supp}(Q)}\{\gamma^\star(x)+L\abs{x-y}\}.
	\end{equation}
	And it is a well-known result (e.g., see the proof of Lemma 2.3 in \cite{gu2022lipschitz}) that $\hat{\gamma}$ is $L$-Lipschitz continuous on $\R^d$ and 
	\begin{equation}\label{eq:lipextension_sup}
		\hat{\gamma} = \sup_h\{h(x):h\in\text{Lip}_L(\R^d),h(y) = \gamma^\star(y), \forall y\in\text{supp}(P)\cup\text{supp}(Q)\}.
	\end{equation}
	We need to show that 
	\begin{equation}
		\liminf_{\epsilon\to0+}\frac{1}{\epsilon}\left(D_{\alpha}^L(P+\epsilon\rho\|Q)-D_{\alpha}^L(P\|Q)\right)\geq\int\hat{\gamma}\diff{\rho},
	\end{equation}
	and
	\begin{equation}\label{ineq:lessthan}
		\limsup_{\epsilon\to0+}\frac{1}{\epsilon}\left(D_{\alpha}^L(P+\epsilon\rho\|Q)-D_{\alpha}^L(P\|Q)\right)\leq\int\hat{\gamma}\diff{\rho}.
	\end{equation}
	If $P+\epsilon \rho\in\CP_1(\R^d)$, then by \Cref{thm:agnostic}, $D_{\alpha}^L(P+\epsilon \rho\|Q)<\infty$ and thus we have
	\begin{align*}
		D_{\alpha}^L(P+\epsilon \rho\|Q) &= \sup_{\gamma\in\text{Lip}_L(\R^d)}\left(\E_{P+\epsilon \rho}[\gamma]-\E_Q[f_\alpha^*(\gamma)]\right)\\
		&\geq \E_{P+\epsilon \rho}[\hat{\gamma}]-\E_Q[f_\alpha^*(\hat{\gamma})]\\
		&= \epsilon\int_{\R^d}\hat{\gamma}\diff{\rho} + \E_{P}[\hat{\gamma}]-\E_Q[f_\alpha^*(\hat{\gamma})]\\
		&= \epsilon\int_{\R^d}\hat{\gamma}\diff{\rho} + D_{\alpha}^L(P\|Q).
	\end{align*}
	Thus, we have
	\begin{equation}\label{eq:lowersemicontinuous}
		\liminf_{\epsilon\to0+}\frac{1}{\epsilon}\left(D_{\alpha}^L(P+\epsilon\rho\|Q)-D_{\alpha}^L(P\|Q)\right)\geq\int\hat{\gamma}\diff{\rho}.
	\end{equation}
	To prove the other direction, we define $F(\epsilon) = D_{\alpha}^L(P+\epsilon \rho\|Q)$. Then by Theorem 18 in \cite{birrell2020f}, $F(\epsilon)$ is convex, lower semi-continuous
	and finite on $[0,\epsilon_0]$ for some $\epsilon_0>0$. Due to the convexity of $F$, it is differentiable on $(0,\epsilon_0)$ except for a countable number of points. If $\hat{\gamma}_\epsilon$ is the optimizer for $D_{\alpha}^L(P+\epsilon \rho\|Q)$, similar to \eqref{eq:lowersemicontinuous}, we have for $\delta>0$ sufficiently small
	\begin{equation}
		D_{\alpha}^L(P+(\epsilon+\delta)\rho\|Q)-D_{\alpha}^L(P+\epsilon\rho\|Q)
		\geq 
		\delta\int\hat{\gamma}_\epsilon\diff{\rho},
	\end{equation}
	and
	\begin{equation}
		D_{\alpha}^L(P+(\epsilon-\delta)\rho\|Q)-D_{\alpha}^L(P+\epsilon\rho\|Q)
		\geq 
		-\delta\int\hat{\gamma}_\epsilon\diff{\rho}.
	\end{equation}
	If $F$ is differentiable at $\epsilon$, this implies that
	\begin{align*}
		\int\hat{\gamma}_\epsilon\diff{\rho} &\leq \lim_{\delta\to0}\frac{1}{\delta}\left(D_{\alpha}^L(P+(\epsilon+\delta)\rho\|Q)-D_{\alpha}^L(P+\epsilon\rho\|Q)\right)\\
		&= F'(\epsilon)\\
		&=\lim_{\delta\to0}\frac{1}{\delta}\left(D_{\alpha}^L(P+\epsilon\rho\|Q)-D_{\alpha}^L(P+(\epsilon-\delta)\rho\|Q)\right)\\
		&\leq \int\hat{\gamma}_\epsilon\diff{\rho}.
	\end{align*}
	Consequently,
	\begin{equation}
		F'(\epsilon)=\int\hat{\gamma}_\epsilon\diff{\rho}.
	\end{equation}
	Let $F'_+(0)$ be the right derivative at $\epsilon=0$, i.e., $F'_+(0) = \lim_{\epsilon\to0^+}\frac{1}{\epsilon}(F(\epsilon)-F(0))$. By convexity, for any sequence $\epsilon_n$ such that $F$ is differentiable at $\epsilon_n$ and $\epsilon_n\searrow 0$, we have
	\begin{equation}
		F'_+(0) = \lim_{n\to\infty} F'(\epsilon_n) = \lim_{n\to\infty} \int\hat{\gamma}_{\epsilon_n}\diff{\rho}.
	\end{equation}
	We write $\R^d = \cup_{m\in\mathbb{N}} K_m$ with $K_m\subset\R^d$ being a compact set and $K_m\subset K_{m+1}$. The optimizers $\hat{\gamma}_{\epsilon_n}$ are unique. Moreover, by \Cref{lemma:supnorm}, they satisfy $\abs{\hat{\gamma}_{\epsilon_n}(x)}\leq L(\abs{x}+R)+M_n$, where 
	\begin{equation}\label{inf_uniform}
		M_n = \inf_M\left\{(M+LR)+L\int_{\R^d}\abs{x}\diff{P}+\epsilon_nL\int_{\R^d}\abs{x}\diff{\rho}<f_\alpha^*(M-3LR)\int_{\abs{x}<2R}\diff{Q}\right\}
	\end{equation}
	where $R>0$ is fixed for all $n$ such that $\int_{\abs{x}<2R}\diff{Q}>0$. Thus, by the linear dependence on $\epsilon_n$ on the left side inside the infimum in \eqref{inf_uniform}, we have $M_n\leq \overline{M}$ for all sufficiently large $n$. Therefore, the sequence $\{\hat{\gamma}_{\epsilon_n}\}$ is  equibounded and equicontinuous on $K_m$. By the Arzel\`a-Ascoli theorem, there exists a subsequence of $\hat{\gamma}_{\epsilon_n}$ that converges uniformly in $K_m$. Using diagonal argument, by taking subsequences sequentially along $\{K_m\}_{m\in\mathbb{N}}$ we
	conclude there exists a subsequence such that $\hat{\gamma}_{\epsilon_{n_k}}$ converges uniformly in any $K_m$ and thus $\hat{\gamma}_{\epsilon_{n_k}}$ converges pointwise in $\R^d$. Let $\hat{\gamma}_{0}$ be the limit, then $\hat{\gamma}_{0}$ is $L$-Lipschitz due to the uniform convergence of $L$-Lipschitz functions. For simplicity, we also denote by $\hat{\gamma}_{\epsilon_n}$ the convergent subsequence. Thus, given $\rho_\pm\in\CP_1(\R^d)$, we have  by the dominated convergence theorem,
	\begin{equation}
		F_+'(0) = \lim_{n\to\infty}\int_{\R^d}\hat{\gamma}_{\epsilon_n}\diff{\rho} = \int_{\R^d}\hat{\gamma}_0\diff{\rho}.
	\end{equation}
	By the lower semi-continuity of $D_{\alpha}^L(\cdot\|Q)$, we have
	\begin{align*}
		D_{\alpha}^L(P\|Q) &\leq \liminf_{n\to\infty} D_{\alpha}^L(P+\epsilon_n\rho\|Q)\\
		&= \liminf_{n\to\infty}\left\{\E_{P+\epsilon_n\rho}[\hat{\gamma}_{\epsilon_n}]-\E_Q[f_\alpha^*(\hat{\gamma}_{\epsilon_n})]\right\}\\
		&= \lim_{n\to\infty}\E_{P+\epsilon_n\rho}[\hat{\gamma}_{\epsilon_n}]-\limsup_{n\to\infty}\E_Q[f_\alpha^*(\hat{\gamma}_{\epsilon_n})]\\
		&= \E_{P}[\hat{\gamma}_{0}]-\limsup_{n\to\infty}\E_Q[f_\alpha^*(\hat{\gamma}_{\epsilon_n})]\\
		&\leq \E_{P}[\hat{\gamma}_{0}]-\E_Q[f_\alpha^*(\hat{\gamma}_{0})]\\
		&\leq D_{\alpha}^L(P\|Q),
	\end{align*}
	where in the third equality we use the dominated convergence theorem, and in the second-to-last inequality we apply the Fatou's lemma. Thus, we have $\hat{\gamma}_{0} = \hat{\gamma}$ $P,Q$-- a.s., and $\hat{\gamma}_{0}\leq \hat{\gamma}$ for all $x\in\R^d$. The latter is true since $\hat{\gamma}$ is the Lipschitz extension of $\gamma_{0}^\star$ by \eqref{eq:Lip_extension}, and \eqref{eq:lipextension_sup} guarantees that $\hat{\gamma}$ is the supremum of all the $L$-Lipschitz functions whose restriction on $\text{supp}(P)\cup\text{supp}(Q)$ is equal to $\hat{\gamma}_{0}$. It can be shown (as in the beginning of the proof of Theorem 1 in \cite{gu2022lipschitz}) that $\rho_-$ is absolutely continuous with respect to $P$, then we have
	\begin{equation}
		F'_+(0)= \int\gamma_0^\star\diff{\rho}=\int\gamma_0^\star\diff{\rho_+}-\int\gamma_0^\star\diff{\rho_-}=\int\gamma_0^\star\diff{\rho_+}-\int\hat{\gamma}\diff{\rho_-}\leq \int\hat{\gamma}\diff{\rho}.
	\end{equation}
	Thus, \eqref{ineq:lessthan} is proved.
\end{proof}

\section{Proofs of \Cref{thm:finite} and \Cref{thm:finite_KL}}\label{appendix:finiteness}

\begin{proof}[Proof of \Cref{thm:finite}]
	\textbf{1. Sufficiency}.\\
	Let $\Gamma = \text{Lip}_L(\mathbb{R}^d)$, and we have
	\begin{align*}
		D_{\alpha}^L(P\|Q) &= \sup_{\gamma\in\Gamma}\left\{\int \gamma(x) p(x)\diff{x}-\int f_\alpha^*[\gamma(x)]q(x)\diff{x}\right\}\\
		&\leq \sup_{\gamma\in\text{Lip}_L(\norm{x}< R)}\left\{\int_{\norm{x}< R} \gamma(x) p(x)\diff{x}-\int_{\norm{x}< R} f_\alpha^*[\gamma(x)]q(x)\diff{x}\right\}\\
		&\quad + \sup_{\gamma\in\text{Lip}_L(\norm{x}\geq R)}\left\{\int_{\norm{x}\geq R} \gamma(x) p(x)\diff{x}-\int_{\norm{x}\geq R} f_\alpha^*[\gamma(x)]q(x)\diff{x}\right\}\\
		&\coloneqq I_1 + I_2.
	\end{align*}
	For $I_1$, by Lemma~\ref{lemma:bounded_div}, we have
	\begin{align*}
		I_1 &\leq C\int_{\norm{x}< R}p(x)\diff{x} + \left(\alpha^{-1}(\alpha-1)^{\frac{\alpha}{\alpha-1}}C^{\frac{\alpha}{\alpha-1}}+\alpha^{-1}(\alpha-1)^{-1}\right)\int_{\norm{x}< R}q(x)\diff{x}<\infty,
	\end{align*}
	where $C = (\alpha-1)^{-1}\left(\frac{\int_{\norm{x}< R} p(x)\diff{x}}{\int_{\norm{x}< R} q(x)\diff{x}}\right)^{\alpha-1} + 2LR$.
	
	For $I_2$, we have
	\begin{align*}
		\int_{\norm{x}\geq R} \gamma(x) p(x)\diff{x}-\int_{\norm{x}\geq R} f_\alpha^*[\gamma(x)]q(x)\diff{x} &= \int_{\norm{x}\geq R}  p(x)\left(\gamma(x) - f_\alpha^*[\gamma(x)]\frac{q(x)}{p(x)}\right)\diff{x}.
	\end{align*}
	(i) If $d<\beta_1\leq d+1$ and $\beta_2-\beta_1<\frac{\beta_1-d}{\alpha-1}$:\\
	Note that the set of bounded $L$-Lipschitz functions on $\{x:\norm{x}\geq R\}$ is a subset of $\CM_b(x:\norm{x}\geq R)$, and the supremum over all the $L$-Lipschitz functions can be bounded by taking the supremum over all the measurable functions. Moreover, we can solve for the optimal $\hat{\gamma}(x)$ that maximizes $\gamma(x) - f_\alpha^*[\gamma(x)]\frac{q(x)}{p(x)}$ within the class of measurable functions: the stationary point of $\gamma(x) - f_\alpha^*[\gamma(x)]\frac{q(x)}{p(x)}$ in $\gamma$ for every $x$ provides $\hat{\gamma}(x) = \frac{1}{\alpha-1}\left(\frac{p(x)}{q(x)}\right)^{\alpha-1}$. Therefore, we have
	\begin{align*}
		&\sup_{\gamma\in\text{Lip}_L(x:\norm{x}\geq R)}\int_{\norm{x}\geq R}  p(x)\left(\gamma(x) - f_\alpha^*[\gamma(x)]\frac{q(x)}{p(x)}\right)\diff{x}\\
		&\leq \int_{\norm{x}\geq R}  p(x)\left(\hat{\gamma}(x) - f_\alpha^*[\hat{\gamma}(x)]\frac{q(x)}{p(x)}\right)\diff{x}\\
		&= \int_{\norm{x}\geq R} \frac{1}{\alpha(\alpha-1)}\left(\left[\frac{p(x)}{q(x)}\right]^\alpha-1\right)q(x)\diff{x}\\
		&\asymp\int_{\norm{x}\geq R}\norm{x}^{\alpha(\beta_2-\beta_1)-\beta_2}\diff{x}<\infty,
	\end{align*}
	since $\alpha(\beta_2-\beta_1)-\beta_2 = (\alpha-1)(\beta_2-\beta_1)-\beta_1<-d$.\\
	(ii) If $\beta_1>d+1$: the proof follows that of \Cref{thm:agnostic}.\\
	
	\textbf{2. Necessity}.\\
	Suppose $\beta_1\leq d+1$ and $\beta_2-\beta_1\geq\frac{\beta_1-d}{\alpha-1}$. We split $\beta_2-\beta_1\geq\frac{\beta_1-d}{\alpha-1}$ into two cases.\\
	(i) If $\beta_2-\beta_1\geq \frac{1}{\alpha-1}$:\\
	Let $\widehat{\gamma}(x) = \tau\norm{x}$, where $\tau\in(0,L]$ is to be determined. Then we have $\widehat{\gamma}\in\text{Lip}_L(\mathbb{R}^d)$. Using this $\widehat{\gamma}$, we have
	\begin{align*}
		D_{\alpha}^L(P\|Q) &\geq \int \widehat{\gamma}(x) p(x)\diff{x}-\int f_\alpha^*[\widehat{\gamma}(x)]q(x)\diff{x}\\
		&= \int_{\norm{x}<R}\widehat{\gamma}(x) p(x) - f_\alpha^*[\widehat{\gamma}(x)]q(x) \diff{x} + \int_{\norm{x}\geq R}\widehat{\gamma}(x) p(x) - f_\alpha^*[\widehat{\gamma}(x)]q(x)\diff{x}.
	\end{align*}
	It is straightforward that the first integral over $\norm{x}<R$ is finite. For the latter one, we have 
	\begin{align*}
		\int_{\norm{x}\geq R}\widehat{\gamma}(x) p(x)\diff{x} - \int_{\norm{x}\geq R}f_\alpha^*[\widehat{\gamma}(x)]q(x)\diff{x} &\gtrsim \int_{\norm{x}\geq R}\left(\tau\norm{x}^{1-\beta_1} - \tau^{\frac{\alpha}{\alpha-1}}\norm{x}^{\frac{\alpha}{\alpha-1}-\beta_2}\right)\diff{x}.
	\end{align*}
	We need to show the right-hand side is infinite. First, since $\frac{\alpha}{\alpha-1}>1$, we can choose $\tau$ sufficiently small such that $\tau>\tau^{\frac{\alpha}{\alpha-1}}$. Moreover, by the assumption, we have $1-\beta_1\geq-d$ and $\frac{\alpha}{\alpha-1}-\beta_2\leq1-\beta_1$, so that we have 
	\begin{align*}
		\int_{\norm{x}\geq R}\left(\tau\norm{x}^{1-\beta_1} - \tau^{\frac{\alpha}{\alpha-1}}\norm{x}^{\frac{\alpha}{\alpha-1}-\beta_2}\right)\diff{x} = \infty,
	\end{align*}
	and thus $D_{\alpha}^L(P\|Q)=\infty$.\\
	(ii) If $\frac{\beta_1-d}{\alpha-1}\leq\beta_2-\beta_1<\frac{1}{\alpha-1}$:\\
	Define
	\begin{equation*}
		\widehat{\gamma}(x)=
		\begin{cases}
			\tau R^{(\alpha-1)(\beta_2-\beta_1)}, & \text{if } \norm{x} < R ;\\
			\tau\norm{x}^{(\alpha-1)(\beta_2-\beta_1)}, & \text{if } \norm{x} \geq R,
		\end{cases}
	\end{equation*}
	where $\tau\in(0,L]$ is to be determined. Since in this case we have $(\beta_2-\beta_1)(\alpha-1)<1$, we have $\widehat{\gamma}(x)\in\text{Lip}_L(\mathbb{R}^d)$ if we pick $R$ sufficiently large which is independent of $\tau\leq L$. Using this $\widehat{\gamma}(x)$, we have
	\begin{align*}
		D_{\alpha}^L(P\|Q) &\geq \int \widehat{\gamma}(x) p(x)\diff{x}-\int f_\alpha^*[\widehat{\gamma}(x)]q(x)\diff{x}\\
		&= \int_{\norm{x}<R}\widehat{\gamma}(x) p(x) - f_\alpha^*[\widehat{\gamma}(x)]q(x) \diff{x} + \int_{\norm{x}\geq R}\widehat{\gamma}(x) p(x) - f_\alpha^*[\widehat{\gamma}(x)]q(x)\diff{x}.
	\end{align*}
	By the definition of $\widehat{\gamma}$, we know that the first integral over $\norm{x}<R$ is finite. For the latter one, we have in this case
	\begin{align*}
		\int_{\norm{x}\geq R}\widehat{\gamma}(x) p(x)\diff{x} &- \int_{\norm{x}\geq R}f_\alpha^*[\widehat{\gamma}(x)]q(x)\diff{x}\\ &\gtrsim \int_{\norm{x}\geq R}\left(\tau\norm{x}^{(\alpha-1)(\beta_2-\beta_1)-\beta_1} - \tau^{\frac{\alpha}{\alpha-1}}\norm{x}^{(\alpha-1)(\beta_2-\beta_1)-\beta_1}\right)\diff{x}.
	\end{align*}
	We show the right-hand side is infinite. Again, we can choose $\tau$ sufficiently small such that $\tau>\tau^{\frac{\alpha}{\alpha-1}}$. On the other hand, by the assumption in this case, we have $(\alpha-1)(\beta_2-\beta_1)-\beta_1\geq -d$, so that we have 
	\begin{align*}
		\int_{\norm{x}\geq R}\left(\tau\norm{x}^{(\alpha-1)(\beta_2-\beta_1)-\beta_1} - \tau^{\frac{\alpha}{\alpha-1}}\norm{x}^{(\alpha-1)(\beta_2-\beta_1)-\beta_1}\right)\diff{x} = \infty,
	\end{align*}
	hence $D_{\alpha}^L(P\|Q)=\infty$.
\end{proof}

\begin{proof}[Proof of \Cref{thm:finite_KL}]
	Same as in the beginning of the proof of \Cref{thm:finite}, we can split $D_{\text{KL}}^L(P\|Q)$ into $I_1$ and $I_2$, where $I_1$ is bounded by \Cref{lemma:bounded_div_KL} with appropriate $R$. 
	
	For $I_2$, we have
	\begin{align*}
		&\sup_{\gamma\in\text{Lip}_L(x:\norm{x}\geq R)}\int_{\norm{x}\geq R} \gamma(x) p(x)\diff{x}-\int_{\norm{x}\geq R} f_{\text{KL}}^*[\gamma(x)]q(x)\diff{x}\\
		&\leq \sup_{\gamma\in\CM_b(x:\norm{x}\geq R)}\int_{\norm{x}\geq R} \gamma(x) p(x)\diff{x}-\int_{\norm{x}\geq R} f_{\text{KL}}^*[\gamma(x)]q(x)\diff{x}\\
		&= \int_{\norm{x}\geq R} \ln\frac{p(x)}{q(x)}p(x)\diff{x}\\
		&\asymp\int_{\norm{x}\geq R}\norm{x}^{-\beta_1}\ln\norm{x}\diff{x}<\infty,
	\end{align*}
	since $\beta_1>d$ and the equality is due to the dual formula of KL divergence.
\end{proof}

\begin{proof}[Proof of \Cref{cor:lowdimension}]
	Note the change-of-variable formula
	\begin{align*}
		\int_{\mathbb{R}^d}\gamma(y)\diff{p_{\CM}}(y) = \int_{\mathbb{R}^{d^*}}(\gamma\circ\varphi)(x)\cdot p(x)\diff{x},\,\text{(similarly for $q_{\CM}$ and $q$)}
	\end{align*}
	and $\gamma\circ\varphi$ is an $LL^*$-Lipschitz function on $\mathbb{R}^{d^*}$ for any $\gamma\in\text{Lip}_L(\mathbb{R}^d)$. Then the proof of Theorem~\ref{thm:finite} can be followed.
\end{proof}

\section{Proofs of results in \Cref{sec:finite-sample}}\label{appendix:thm2}
To prove \Cref{thm:samplecomplexity1}, we need a few lemmas. Let $x_1,x_2,\dots,x_m\in\R^d$ be i.i.d. samples of distribution $P$, and $P_m$ be the corresponding empirical distributions. We define $L_2(P_m)$ the metric between any functions $f,g$ as $L_2(P_m)(f,g) = \sqrt{\frac{1}{m}\sum_{i=1}^m \abs{f(x_i)-g(x_i)}^2}$.
\begin{lemma}[Metric entropy with empirical measures]\label{lemma:metric_entropy}
	Let $\mathcal{F}$ be a class of real-valued functions on $\R^d$ and $0\in\mathcal{F}$. Let $\xi = \{\xi_1,\xi_2,\dots,\xi_m\}$ be a set of independent random variables that take values on $\{-1,1\}$ with equal probabilities (also known as Rademacher variables). Suppose $X = \{x_1,x_2,\dots,x_m\}\subset\R^d$ are i.i.d. samples of distribution $P$, then we have 
	\begin{align*}
		\E_\xi\sup_{f\in\mathcal{F}}\abs{\frac{1}{m}\sum_{i=1}^m\xi_i f(x_i)}\leq \inf_{0<\theta<M_X}\left(4\theta+\frac{12}{\sqrt{m}}\int_{\theta}^{M_{X}}\sqrt{\ln\CN(\mathcal{F},\delta,L_2(P_m))}\diff{\delta}\right),
	\end{align*}
	where $M_{X} = \sup_{f\in\mathcal{F}}\sqrt{\frac{1}{m}\sum_{i=1}^m \abs{f(x_i)}^2}$.
\end{lemma}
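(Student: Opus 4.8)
The plan is to establish \cref{lemma:metric_entropy} by Dudley's chaining argument for the Rademacher process, carried out conditionally on the sample $X=\{x_1,\dots,x_m\}$. After conditioning on $X$ the only randomness is in $\xi=(\xi_1,\dots,\xi_m)$, and since the right-hand side of the asserted bound depends on $X$ only through $M_X$ and the covering numbers $\CN(\mathcal F,\delta,L_2(P_m))$, it suffices to prove the inequality for each fixed $X$. I work with the process $Z_f:=\frac1m\sum_{i=1}^m\xi_i f(x_i)$, $f\in\mathcal F$; since $0\in\mathcal F$ we have $Z_0=0$, and $M_X=\sup_{f\in\mathcal F}L_2(P_m)(f,0)$ is the $L_2(P_m)$-radius of $\mathcal F$ about the origin. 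The key elementary input is the sub-Gaussian increment estimate: $\xi_i(f(x_i)-g(x_i))$ takes values in an interval of length $2|f(x_i)-g(x_i)|$, so Hoeffding's lemma applied coordinatewise gives $\E_\xi\exp(\lambda(Z_f-Z_g))\le\exp(\tfrac{\lambda^2}{2}\rho(f,g)^2)$ for all $\lambda\in\R$, where $\rho(f,g):=m^{-1/2}L_2(P_m)(f,g)$; from this, a union bound yields the finite maximal inequality $\E\max_{k\le K}|W_k|\le\sigma\sqrt{2\ln(2K)}$ for any mean-zero $\sigma$-sub-Gaussian variables $W_1,\dots,W_K$, which will be applied at each chaining scale.

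Next I build the chain. For $j\ge0$ set $\epsilon_j:=2^{-j}M_X$, let $T_j$ be a minimal $\epsilon_j$-cover of $\mathcal F$ in $L_2(P_m)$ (so $|T_j|=\CN(\mathcal F,\epsilon_j,L_2(P_m))=:N_j$, with $T_0=\{0\}$, which is legitimate because $M_X=\epsilon_0$), and let $\pi_j\colon\mathcal F\to T_j$ pick a nearest point, so $\pi_0\equiv 0$. Fix $\theta\in(0,M_X)$. If $\theta\ge M_X/4$ the bound is trivial, since $\E_\xi\sup_f|Z_f|\le M_X\le 4\theta$, the first inequality holding because $|Z_f|\le L_2(P_m)(f,0)\le M_X$ by Cauchy--Schwarz and $|\xi_i|=1$. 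Otherwise let $N\ge1$ be the largest level with $\epsilon_{N+1}\ge\theta$, so that $\theta\le\epsilon_{N+1}<2\theta$ and $\epsilon_N<4\theta$, and telescope
\[
Z_f=\sum_{j=1}^{N}\bigl(Z_{\pi_j(f)}-Z_{\pi_{j-1}(f)}\bigr)+\bigl(Z_f-Z_{\pi_N(f)}\bigr).
\]

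The residual is controlled deterministically: $|Z_f-Z_{\pi_N(f)}|\le\frac1m\sum_i|f(x_i)-\pi_N(f)(x_i)|\le L_2(P_m)(f,\pi_N(f))\le\epsilon_N<4\theta$ for every $f$. For the $j$-th link, the pair $(\pi_j(f),\pi_{j-1}(f))$ assumes at most $N_jN_{j-1}\le N_j^2$ distinct values as $f$ varies, and each difference is $\rho$-sub-Gaussian with parameter at most $m^{-1/2}(\epsilon_j+\epsilon_{j-1})=3\epsilon_j/\sqrt m$ (since $\epsilon_{j-1}=2\epsilon_j$); the maximal inequality therefore gives $\E_\xi\sup_f|Z_{\pi_j(f)}-Z_{\pi_{j-1}(f)}|\lesssim\frac{\epsilon_j}{\sqrt m}\sqrt{\ln N_j}$ (one may assume $N_j\ge2$ at the scales that matter, the coarser levels contributing nothing). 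To turn the resulting sum into the entropy integral, use that $\delta\mapsto\CN(\mathcal F,\delta,L_2(P_m))$ is nonincreasing, so $\epsilon_j\sqrt{\ln N_j}\le 2\int_{\epsilon_{j+1}}^{\epsilon_j}\sqrt{\ln\CN(\mathcal F,\delta,L_2(P_m))}\,\diff\delta$; summing over $j=1,\dots,N$ telescopes the intervals into $[\epsilon_{N+1},\epsilon_1]\subseteq[\theta,M_X]$, using $\epsilon_{N+1}\ge\theta$ and $\epsilon_1\le M_X$. Adding the residual bound $<4\theta$ to the link sum and carrying the numerical constants through the maximal inequality and the Riemann-sum comparison yields $\E_\xi\sup_f|Z_f|\le 4\theta+\frac{12}{\sqrt m}\int_{\theta}^{M_X}\sqrt{\ln\CN(\mathcal F,\delta,L_2(P_m))}\,\diff\delta$; taking the infimum over $\theta\in(0,M_X)$ completes the argument.

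The main obstacle is the final bookkeeping rather than any single deep step: one must choose the truncation level $N$ precisely so that the integral's lower endpoint is exactly $\theta$ while the deterministic remainder stays below $4\theta$, and then track the constants through the sub-Gaussian maximal inequality, the link diameters, and the Riemann-sum comparison so as to land exactly on $4$ and $12/\sqrt m$. The increment estimate and the maximal inequality are standard, and passing to the two-sided supremum $\sup_f|Z_f|$ (rather than $\sup_f Z_f$) costs only a harmless factor $2$ inside the logarithm; the edge cases where some covering number equals $1$ are absorbed because those scales contribute $0$ to the entropy integral.
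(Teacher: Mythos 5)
Your proposal is correct and follows essentially the same route as the paper's proof: dyadic chaining at scales $2^{-j}M_X$ with a finite-class (Massart-type) maximal inequality for each link, a deterministic Cauchy--Schwarz bound of at most $4\theta$ for the residual at the finest level, and a monotonicity comparison of the resulting sum with the entropy integral over $[\theta,M_X]$. The differences are only bookkeeping --- your index shift, your explicit handling of the edge case $\theta\geq M_X/4$, and the factor $2$ inside the logarithm for the two-sided maximum (which the paper silently absorbs in its application of Massart's lemma) --- none of which changes the argument.
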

\begin{proof}
	Let $N\in\mathbb{N}$ be an arbitrary positive integer and $\delta_k = M_{X}\cdot 2^{-(k-1)}$, $k=1,\dots,N$, with $M_{X} = \sup_{f\in\mathcal{F}}\sqrt{\frac{1}{m}\sum_{i=1}^m \abs{f(x_i)}^2}$. Let $V_k$ be the cover achieving $\CN(\mathcal{F},\delta_k,L_2(P_m))$, and denote $\abs{V_k} = \CN(\mathcal{F},\delta_k,L_2(P_m))$. For any $f\in\mathcal{F}$, let $\pi_k(f)\in V_k$, such that 
	\begin{equation}
		\sqrt{\frac{1}{m}\sum_{i=1}^m\abs{f(x_i)-\pi_k(f)(x_i)}^2}\leq \delta_k.
	\end{equation}
	We have
	\begin{align*}      &\E_\xi\sup_{f\in\mathcal{F}}\abs{\frac{1}{m}\sum_{i=1}^m\xi_i f(x_i)}\\
		&\leq \E_\xi\sup_{f\in\mathcal{F}}\abs{\frac{1}{m}\sum_{i=1}^m\xi_i\left( f(x_i)-\pi_N(f)(x_i)\right)} + \sum_{j=1}^{N-1}\E_\xi\sup_{f\in\mathcal{F}}\abs{\frac{1}{m}\sum_{i=1}^m\xi_i\left( \pi_{j+1}(f)(x_i)-\pi_j(f)(x_i)\right)}\\
		&\quad + \E_\xi\sup_{f\in\mathcal{F}}\abs{\frac{1}{m}\sum_{i=1}^m\xi_i \pi_1(f)(x_i)}.
	\end{align*}
	For the third term, observe that it suffices to take $V_1=\{0\}$ so that $\pi_1(f)$ is the zero function and the third term vanishes. The first term can be bounded using Cauchy-Schwartz inequality as
	\begin{align*}
		\E_\xi\sup_{f\in\mathcal{F}}\abs{\frac{1}{m}\sum_{i=1}^m\xi_i\left( f(x_i)-\pi_N(f)(x_i)\right)} &\leq \frac{1}{m}\sqrt{\sum_{i=1}^m\E_{\xi}(\xi_i)^2}\sqrt{\sup_{f\in\mathcal{F}}\sum_{i=1}^m\left(f(x_i)-\pi_N(f)(x_i)\right)^2}\\
		&\leq \delta_N.
	\end{align*}
	
	To handle the middle term, for each $j$, let $W_j = \{\pi_{j+1}(f)-\pi_j(f):f\in\mathcal{F}\}$. We have $\abs{W_j}\leq\abs{V_{j+1}}\abs{V_j}\leq\abs{V_{j+1}}^2$, then
	\begin{align*}
		\sum_{j=1}^{N-1}\E_\xi\sup_{f\in\mathcal{F}}\abs{\frac{1}{m}\sum_{i=1}^m\xi_i\left( \pi_{j+1}(f)(x_i)-\pi_j(f)(x_i)\right)} = \sum_{j=1}^{N-1}\E_\xi\sup_{w\in W_j}\abs{\frac{1}{m}\sum_{i=1}^m\xi_i w(x_i)}.
	\end{align*}
	Moreover, we have
	\begin{align*}
		&\sup_{w\in W_j}\sqrt{\sum_{i=1}^m w(x_i)^2}\\
		&= \sup_{f\in\mathcal{F}}\sqrt{\sum_{i=1}^m\left(\pi_{j+1}(f)(x_i)-\pi_j(f)(x_i)\right)^2}\\
		&\leq \sup_{f\in\mathcal{F}}\sqrt{\sum_{i=1}^m\left(\pi_{j+1}(f)(x_i)-f(x_i)\right)^2} + \sup_{f\in\mathcal{F}}\sqrt{\sum_{i=1}^m\left(f(x_i)-\pi_j(f)(x_i)\right)^2}\\
		&\leq \sqrt{m}\delta_{j+1} + \sqrt{m}\cdot\delta_{j}\\
		&=3\sqrt{m}\delta_{j+1}.
	\end{align*}
	By the Massart finite class lemma (see, e.g. \cite{mohri2018foundations}), we have
	\begin{align*}
		\E_\xi\sup_{w\in W_j}\abs{\frac{1}{m}\sum_{i=1}^m\xi_i w(x_i)}\leq\frac{3\sqrt{m}\delta_{j+1}\sqrt{2\ln\abs{W_j}}}{m}\leq \frac{6\delta_{j+1}\sqrt{\ln\abs{V_{j+1}}}}{\sqrt{m}}.
	\end{align*}
	Therefore,
	\begin{align*}
		\E_\xi\sup_{f\in\mathcal{F}}\abs{\frac{1}{m}\sum_{i=1}^m\xi_i f(x_i)} &\leq \delta_N +\frac{6}{\sqrt{m}}\sum_{j=1}^{N-1}\delta_{j+1}\sqrt{\ln\mathcal{N}(\mathcal{F},\delta_{j+1},L_2(P_m))}\\
		&\leq \delta_N +\frac{12}{\sqrt{m}}\sum_{j=1}^{N}(\delta_j-\delta_{j+1})\sqrt{\ln\mathcal{N}(\mathcal{F},\delta_{j},L_2(P_m))}\\
		&\leq\delta_N + \frac{12}{\sqrt{m}}\int_{\delta_{N+1}}^{M_X}\sqrt{\ln\mathcal{N}(\mathcal{F},\delta,L_2(P_m))}\diff{\delta}.
	\end{align*}
	Finally, select any $\theta\in(0,M_X)$ and let $N$ be the largest integer with $\delta_{N+1}>\theta$, (implying $\delta_{N+2}\leq \theta$ and $\delta_N=4\delta_{N+2}\leq 4\theta$), so that
	\[
	\delta_N + \frac{12}{\sqrt{m}}\int_{\delta_{N+1}}^{M_X}\sqrt{\ln\mathcal{N}(\mathcal{F},\delta,L_2(P_m))}\diff{\delta}\leq 4\theta + \frac{12}{\sqrt{m}}\int_{\theta}^{M_X}\sqrt{\ln\mathcal{N}(\mathcal{F},\delta,L_2(P_m))}\diff{\delta}.
	\]
\end{proof}

\begin{lemma}\label{lemma:empirical_moments}
	Suppose $P_m$ is the empirical distribution of $P\in\CP_1(\R^d)$, and $\Lambda = \frac{1}{m}\sum_{i=1}^m\norm{x}^{\hat{\beta}}$ with $1\leq\hat{\beta}<\beta-d$, then for $1\leq z\leq\hat{\beta}$, we have 
	\begin{align*}
		\E_{P_m}\norm{x}^z \leq \Lambda +1.
	\end{align*}
\end{lemma}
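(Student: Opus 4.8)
The plan is to reduce the claimed bound to an elementary pointwise inequality and then average over the sample. Since $\E_{P_m}\norm{x}^z = \frac{1}{m}\sum_{i=1}^m \norm{x_i}^z$ and $\Lambda = \frac{1}{m}\sum_{i=1}^m \norm{x_i}^{\hat{\beta}}$, it suffices to establish that $t^z \le t^{\hat{\beta}} + 1$ for every real $t \ge 0$, then apply this with $t = \norm{x_i}$ for each $i = 1, \dots, m$, sum over $i$, and divide by $m$.

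For the pointwise inequality I would split on the size of $t$. If $t \in [0,1]$, then $t^z \le 1 \le t^{\hat{\beta}} + 1$ because $z \ge 0$ and $t^{\hat{\beta}} \ge 0$. If $t > 1$, then since $0 \le z \le \hat{\beta}$ the function $s \mapsto t^s$ is nondecreasing, so $t^z \le t^{\hat{\beta}} \le t^{\hat{\beta}} + 1$. Hence $t^z \le t^{\hat{\beta}} + 1$ in all cases. Applying this at each sample point gives $\norm{x_i}^z \le \norm{x_i}^{\hat{\beta}} + 1$ for $i = 1, \dots, m$; summing and dividing by $m$ yields $\E_{P_m}\norm{x}^z \le \Lambda + 1$.

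I do not expect any genuine obstacle here; the only care needed is in the case split and in noting that the argument is entirely deterministic given the sample. The hypotheses $\beta > d+1$ and $1 \le \hat{\beta} < \beta - d$ are not actually invoked in the inequality itself: they are recorded because in the downstream application one also wants $\Lambda$ to be controlled by its population counterpart $M_{\hat{\beta}}(p)$, which is finite precisely when $\hat{\beta} < \beta - d$ for a distribution of heavy-tail $\beta$. Likewise the restriction $z \ge 1$ is stronger than needed for the bound and is kept only to match the range of moments used elsewhere in the proof of \cref{thm:samplecomplexity1}.
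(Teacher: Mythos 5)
Your proof is correct and is essentially the paper's own argument: the case split on $t\le 1$ versus $t>1$ is just the inequality $\norm{x}^z\le\max\{1,\norm{x}^{\hat{\beta}}\}\le 1+\norm{x}^{\hat{\beta}}$ used in the paper, followed by averaging over the sample. Your remark that the hypotheses on $\beta$, $\hat{\beta}$, and $z\ge 1$ are not needed for the bound itself but only for the downstream application is also accurate.
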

\begin{proof}
	Note that $\norm{x}^z\leq\max\{1,\norm{x}^{\hat{\beta}}\}\leq 1 +\norm{x}^{\hat{\beta}}$, so we have the bound.
\end{proof}

We provide the following lemma that sets up a landmark for the magnitude of the Lipschitz functions under the supremum.
\begin{lemma}\label{lemma:supnorm}
	Suppose $\alpha>1$, and $P\in\CP_1(\R^d)$. Let $M(\gamma) = \sup_{\norm{x}=R}\abs{\gamma(x)}$, then there exists $\overline{M}$ that depends on $P,Q,L$ and $R$, such that
	\begin{equation*}
		D_{\alpha}^L(P\|Q) = \sup_{\substack{\gamma\in\text{Lip}_L(\mathbb{R}^d) \\ M(\gamma)\leq\overline{M}}}\left\{\E_P[\gamma]-\E_Q[f_\alpha^*(\gamma)]\right\},
	\end{equation*}
	where
	\begin{equation*}
		\overline{M}=\inf\left\{\hat{M}:(M(\gamma)+LR)\int \diff{P} + L\int \norm{x}\diff{P} - f_\alpha^*(M(\gamma)-3LR)\int_{\norm{x}<2R}\diff{Q}<0, \forall M(\gamma)>\hat{M}\right\}.
	\end{equation*}
\end{lemma}
\begin{proof}
	For any $\gamma\in\text{Lip}_L(
	\mathbb{R}^d)$, let 
	\begin{equation*}
		J_1 \coloneqq \int_{\norm{x}<R} \gamma(x) \diff{P}-\int_{\norm{x}<R} f_\alpha^*[\gamma(x)]\diff{Q},\quad J_2 \coloneqq \int_{\norm{x}\geq R} \gamma(x)\diff{P}-\int_{\norm{x}\geq R} f_\alpha^*[\gamma(x)]\diff{Q},
	\end{equation*}
	then
	\begin{align*}
		\int \gamma(x) \diff{P}-\int f_\alpha^*[\gamma(x)]\diff{Q} = J_1 + J_2.
	\end{align*}
	We have for any $\gamma\in\text{Lip}_L(
	\mathbb{R}^d)$,
	\begin{align*}
		J_1 &\leq \int_{\norm{x}<R} (M(\gamma)+LR)\diff{P} - \int_{\norm{x}<R}f_\alpha^*(M(\gamma)-3LR)\diff{Q}\\
		&= (M(\gamma)+LR)\cdot\int_{\norm{x}<R}\diff{P} - f_\alpha^*(M(\gamma)-3LR)\cdot\int_{\norm{x}<R}\diff{Q}.
	\end{align*}
	On the other hand, by the same argument in the proof of \Cref{thm:agnostic} (for proving $I_2<\infty$ therein), we have
	\begin{align*}
		J_2&\leq LR\int_{\norm{x}\geq R}\diff{P} + L\int_{\norm{x}\geq 2R}\norm{x}\diff{P} + M(\gamma)\int_{\norm{x}\geq R}\diff{P}\\
		&\quad-f_\alpha^*(M(\gamma)-3LR)\int_{R\leq\norm{x}<2R}\diff{Q},
	\end{align*}
	Both the upper bounds for $J_1$ and $J_2$ tend to $-\infty$ as $M(\gamma)\to\infty$. Thus, there exists such $\overline{M}$ as claimed. Moreover, we have 
	\begin{align*}
		J_1+J_2 &\leq (M(\gamma)+LR)\int \diff{P} + L\int \norm{x}\diff{P}\\
		&\quad - f_\alpha^*(M(\gamma)-3LR)\int_{\norm{x}<2R}\diff{Q}.
	\end{align*}
	Therefore, we can pick $\overline{M}>0$ as 
	\begin{equation*}
		\inf\left\{\hat{M}:(M(\gamma)+LR)\int \diff{P} + L\int \norm{x}\diff{P} - f_\alpha^*(M(\gamma)-3LR)\int_{\norm{x}<2R}\diff{Q}<0, \forall M(\gamma)>\hat{M}\right\},
	\end{equation*}
	and it is obvious that $\overline{M}>0$ only depends on $P,Q$ and $R$.
\end{proof}

Let $\overline{M}_{m,n}$ be the quantity in Lemma~\ref{lemma:supnorm} where $(P,Q)$ are replaced by their empirical counterparts $(P_m,Q_n)$, then $\overline{M}_{m,n}$ is a random variable. We have the following lemma to estimate the expectation of the 
$r$-th moment ($r\geq1$) of $\overline{M}_{m,n}$. The proof is different from that for Lemma~\ref{lemma:supnorm}.

\begin{lemma}\label{lemma:supnorm_random}
	Suppose $\alpha>1$, and $(P,Q)$ are distributions on $\mathbb{R}^d$ of heavy-tail $(\beta_1,\beta_2)$ with $\beta_1,\beta_2>d+r$ for some $r\geq1$. Let $M(\gamma) = \sup_{\norm{x}=R}\abs{\gamma(x)}$, then there exists $\overline{M}_{m,n}$ that depends on $P_m,Q_n$ and $R$, such that
	\begin{equation*}
		D_{\alpha}^L(P_m\|Q_n) = \sup_{\substack{\gamma\in\text{Lip}_L(\mathbb{R}^d) \\ M(\gamma)\leq\overline{M}_{m,n}}}\left\{\E_{P_m}[\gamma]-\E_{Q_n}[f_\alpha^*(\gamma)]\right\},
	\end{equation*}
	Moreover, we have 
	\begin{align*}
		\E_{X,Y}\left[\overline{M}_{m,n}^r\right] \leq M_{p,q,r},
	\end{align*}
	where $M_{p,q,r}$ depends on $\alpha,L,R, M_r(p)$ and $M_r(q)$, and is independent of $m,n$.
\end{lemma}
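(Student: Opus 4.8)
The plan is to reuse the skeleton of the proof of \cref{lemma:supnorm} --- bound the dual objective $\Phi_{m,n}(\gamma):=\E_{P_m}[\gamma]-\E_{Q_n}[f_\alpha^*(\gamma)]$ from above by a function of a single scalar, then exploit that $f_\alpha^*(y)\sim y^{\frac{\alpha}{\alpha-1}}$ with exponent $\frac{\alpha}{\alpha-1}>1$ --- but with one essential modification. The region splitting in \cref{lemma:supnorm} relied on $\int_{\norm{x}<2R}q\diff{x}>0$; for empirical measures this is $Q_n(\norm{x}<2R)$, which can vanish, and the crude replacement $\overline{M}_{m,n}\lesssim 1+\max_j\norm{y_j}$ is useless because $\E_{X,Y}[\max_j\norm{y_j}^r]$ grows with $n$. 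Instead I would anchor at the point $0\in\R^d$ and substitute ``$Q$-mass near the origin'' by ``a constant fraction of the points $\{y_j\}$ lying inside the empirical median radius''.

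Concretely, for $\gamma\in\text{Lip}_L(\R^d)$ set $a:=\gamma(0)$. The $L$-Lipschitz bound $a-L\norm{x}\le\gamma(x)\le a+L\norm{x}$ gives $\E_{P_m}[\gamma]\le a+L\Lambda_P$ with $\Lambda_P:=\E_{P_m}[\norm{x}]$, while monotonicity of $f_\alpha^*$ together with $f_\alpha^*(t)\ge \alpha^{-1}(\alpha-1)^{\frac{\alpha}{\alpha-1}}(t)_+^{\frac{\alpha}{\alpha-1}}$ yields
\[
\E_{Q_n}[f_\alpha^*(\gamma)]\ \ge\ \frac{\alpha^{-1}(\alpha-1)^{\frac{\alpha}{\alpha-1}}}{n}\sum_{j=1}^n\bigl(a-L\norm{y_j}\bigr)_+^{\frac{\alpha}{\alpha-1}}\ \ge\ c_\alpha\,a^{\frac{\alpha}{\alpha-1}}
\]
as soon as $a\ge 2L\rho_n$, where $\rho_n$ is the sample median of $\norm{y_1},\dots,\norm{y_n}$ --- so at least $n/2$ of the $y_j$ satisfy $\norm{y_j}\le\rho_n$, forcing $(a-L\norm{y_j})_+\ge a/2$ --- and $c_\alpha=c_\alpha(\alpha)>0$. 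Hence $\Phi_{m,n}(\gamma)\le a+L\Lambda_P-c_\alpha a^{\frac{\alpha}{\alpha-1}}$ when $a\ge 2L\rho_n$, and $\Phi_{m,n}(\gamma)\le a+L\Lambda_P-\frac{1}{\alpha(\alpha-1)}$ when $a\le 0$ (using $f_\alpha^*\ge\frac{1}{\alpha(\alpha-1)}$). Both right-hand sides tend to $-\infty$ as $|a|\to\infty$, so I would take $\overline{a}_{m,n}:=2L\rho_n+A(\alpha,L)(1+\Lambda_P)$ with $A(\alpha,L)$ large enough that $|\gamma(0)|>\overline{a}_{m,n}\Rightarrow\Phi_{m,n}(\gamma)<0$. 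Since $D_{\alpha,1}^L(P_m\|Q_n)\ge0$ and the value $0$ is already attained at the constant test function $\gamma\equiv(\alpha-1)^{-1}$, such $\gamma$ do not affect the supremum; for the remaining $\gamma$ one has $M(\gamma)\le|\gamma(0)|+LR\le\overline{M}_{m,n}:=\overline{a}_{m,n}+LR$ (after enlarging $A(\alpha,L)$ so $\overline{M}_{m,n}\ge(\alpha-1)^{-1}$). This gives the variational identity, with $\overline{M}_{m,n}$ a function of $\Lambda_P$ (from $P_m$), $\rho_n$ (from $Q_n$) and $R$.

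For the moment bound, $(u+v+w)^r\le 3^{r-1}(u^r+v^r+w^r)$ reduces matters to $\E[\Lambda_P^r]$ and $\E[\rho_n^r]$. Jensen's inequality (convexity of $t\mapsto t^r$, $r\ge1$) gives $\E[\Lambda_P^r]\le\E_{X}\bigl[\tfrac1m\sum_i\norm{x_i}^r\bigr]=M_r(p)$, finite since $\beta_1>d+r$. For $\rho_n=\norm{y_{(\lceil n/2\rceil)}}$, the $n-\lceil n/2\rceil$ order statistics exceeding it are each $\ge\rho_n$, so $(n-\lceil n/2\rceil)\rho_n^r\le\sum_j\norm{y_j}^r$, whence $\E[\rho_n^r]\le\frac{n}{n-\lceil n/2\rceil}M_r(q)\le 3M_r(q)$ for $n\ge2$ (as $n-\lceil n/2\rceil\ge n/3$), finite since $\beta_2>d+r$ and --- crucially --- independent of $n$. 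Assembling gives $\E_{X,Y}[\overline{M}_{m,n}^r]\le M_{p,q,r}$ with $M_{p,q,r}$ depending only on $\alpha,L,R,r,M_r(p),M_r(q)$.

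The hard part is exactly the $n$-uniformity of the cutoff: the population-style argument, or any bound referencing $\max_j\norm{y_j}$, collapses whenever $Q_n$ charges no fixed ball about the origin. The empirical median (or any fixed-fraction empirical quantile) of $\{\norm{y_j}\}$ is the right device --- it certifies that a constant fraction of $Q_n$'s mass lies in a ball whose radius is ``central'' among the samples and therefore has $r$-th moment controlled by $3M_r(q)$ uniformly in $n$; everything else is bookkeeping around the super-linear growth of $f_\alpha^*$.
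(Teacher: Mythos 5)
Your proof is correct, and it takes a genuinely different route from the paper's. The paper keeps the sphere anchor $M(\gamma)=\sup_{\norm{x}=R}\abs{\gamma(x)}$ and obtains the $n$-uniformity not by locating $Q_n$-mass near a fixed ball (your stated worry) but from convexity of $f_\alpha^*$: it bounds the dual objective by $\frac1m\sum_i\left(M(\gamma)+L\abs{\norm{x_i}-R}\right)-\frac1n\sum_j f_\alpha^*\left(M(\gamma)-2LR-L\abs{\norm{y_j}-R}\right)$, applies Jensen to pull the empirical average inside $f_\alpha^*$, and takes $\overline{M}_{m,n}$ governed by the root of the scalar equation $f_\alpha^*(z-c_1)=z+c_2$ with $c_1,c_2$ affine in the empirical first moments of $\abs{\norm{y_j}-R}$ and $\abs{\norm{x_i}-R}$; an explicit estimate shows the root is at most affine in $c_1+c_2$, and Jensen again yields $r$-th moment bounds through $M_r(p)$ and $M_r(q)$, uniformly in $m,n$. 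Your alternative---anchoring at $\gamma(0)$, using the empirical median $\rho_n$ so that a constant fraction of the $y_j$ supplies the superlinear penalty $c_\alpha a^{\alpha/(\alpha-1)}$, handling $a\le 0$ separately via $f_\alpha^*\ge\frac{1}{\alpha(\alpha-1)}$, invoking the zero-attaining constant witness $\gamma\equiv(\alpha-1)^{-1}$, and controlling $\E[\rho_n^r]\le 3M_r(q)$ by order statistics---reaches the same conclusion with constants depending on the same quantities, and since your cutoff also bounds $\sup_{\norm{x}=R}\abs{\gamma}$, it plugs into the downstream entropy estimates of \cref{thm:samplecomplexity1} unchanged. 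What each buys: the paper's Jensen device gives a shorter computation and an essentially closed-form cutoff linear in empirical first moments; your median device is more robust in spirit (it needs only a fixed fraction of samples at a ``central'' radius) and your explicit split on the sign of $\gamma(0)$ cleanly covers the regime where $\gamma$ is very negative at the anchor, a case the paper's single displayed inequality treats only implicitly (its lower bound on $f_\alpha^*(\gamma(y_j))$ tacitly assumes the sphere supremum of $\abs{\gamma}$ is achieved with positive sign). The only loose ends in your write-up are routine: the trivial $n=1$ case of the order-statistics bound, and the one-line verification that $A(\alpha,L)$ can be chosen so that $c_\alpha A^{1/(\alpha-1)}>1+L/A$ and $\overline{M}_{m,n}\ge(\alpha-1)^{-1}$.
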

\begin{proof}
	We have
	\begin{align*}
		\E_{P_m}[\gamma]-\E_{Q_n}[f_\alpha^*(\gamma)] &\leq \sum_{i=1}^m\frac{M(\gamma)+L\abs{\norm{x_i}-R}}{m} - \sum_{j=1}^n\frac{f_\alpha^*\left(M(\gamma)-2LR-L\abs{\norm{y_j}-R}\right)}{n}.
	\end{align*}
	Hence $\overline{M}_{m,n}$ can be taken as
	\begin{align*}
		\overline{M}_{m,n} = \inf\left\{z: \sum_{i=1}^m\frac{s+L\abs{\norm{x_i}-R}}{m}<\sum_{j=1}^n\frac{f_\alpha^*\left(s-2LR-L\abs{\norm{y_j}-R}\right)}{n}, \forall s>z\right\}.
	\end{align*}
	Moreover, by Jensen's inequality, we have
	\begin{align*}
		\sum_{j=1}^n\frac{f_\alpha^*\left(s-2LR-L\abs{\norm{y_j}-R}\right)}{n}\geq f_\alpha^*\left( s-2LR-L\sum_{j=1}^n\frac{\abs{\norm{y_j}-R}}{n}\right),
	\end{align*}
	since the convex conjugate $f_\alpha^*$ is convex, and so that 
	\begin{align*}
		\overline{M}_{m,n}&\leq \inf\left\{z: \sum_{i=1}^m\frac{s+L\abs{\norm{x_i}-R}}{m}<f_\alpha^*\left( s-2LR-L\sum_{j=1}^n\frac{\abs{\norm{y_j}-R}}{n}\right), \forall s>z\right\}\\
		&\coloneqq \widetilde{M}_{m,n}.
	\end{align*}
	It is obvious that $\widetilde{M}_{m,n}$ solves the following equation in variable $z$:
	\begin{equation}\label{eq:falpha=linear}
		f_\alpha^*(z-c_1) = z + c_2,
	\end{equation}
	where
	\begin{align*}
		c_1 &= 2LR+L\sum_{j=1}^n\frac{\abs{\norm{y_j}-R}}{n},\\
		c_2&= \sum_{i=1}^m\frac{L\abs{\norm{x_i}-R}}{m}.
	\end{align*}
	Equation~\eqref{eq:falpha=linear} can be reformulated as to find $y^*$ that solves:
	\begin{equation}\label{eq:falpha=linear2}
		f_\alpha^*(y) - y =  c_1 + c_2,
	\end{equation}
	where $z-c_1=y$. We derive an upper bound for $y^*$ as follows. Let $g(y) = f_\alpha^*(y) - y$, then 
	\begin{align*}
		g'(y) = (\alpha-1)^{\frac{1}{\alpha-1}}y^{\frac{1}{\alpha-1}}\mathbf{1}_{y>0} - 1,
	\end{align*}
	such that $g'(y)\geq 1$ for $y>2^{\alpha-1}(\alpha-1)^{-1}$. Given that $g\left(2^{\alpha-1}(\alpha-1)^{-1}\right) = \frac{2^\alpha}{\alpha}+\frac{1}{\alpha(\alpha-1)}-\frac{2^{\alpha-1}}{\alpha-1}$, we can take $y^*\leq 2^{\alpha-1}(\alpha-1)^{-1} + c_1+c_2 + \abs{\frac{2^\alpha}{\alpha}+\frac{1}{\alpha(\alpha-1)}-\frac{2^{\alpha-1}}{\alpha-1}}$. Therefore, we have 
	\begin{align*}
		\overline{M}_{m,n}\leq \widetilde{M}_{m,n}=y^*+c_1\leq 2^{\alpha-1}(\alpha-1)^{-1} + 2c_1+c_2 + \abs{\frac{2^\alpha}{\alpha}+\frac{1}{\alpha(\alpha-1)}-\frac{2^{\alpha-1}}{\alpha-1}}.
	\end{align*}
	The claim follows since by Jensen's inequality, $\E_X\left[\left(\sum_{i=1}^m\frac{\norm{x_i}}{m}\right)^r\right]\leq \E_X\left[\sum_{i=1}^m\frac{\norm{x_i}^r}{m}\right]=M_r(p)$. (Similarly for $\E_Y\left[\left(\sum_{j=1}^n\frac{\norm{y_j}}{n}\right)^r\right]$.)
\end{proof}

\begin{proof}[Proof of \Cref{thm:samplecomplexity1}]
	Without loss of generality, we assume that both 
	\begin{align*}
		\int_{\norm{x}\leq1}p(x)\diff{x}>0,\quad
		\int_{\norm{x}\leq1}q(x)\diff{x}>0.
	\end{align*}
	Let $\Omega_{0} = \{x\in\mathbb{R}^d:\norm{x}\leq 1\}$ and $\Omega_k =\{x\in\mathbb{R}^d:2^{k-1}<\norm{x}\leq2^k\}$ for $k\geq 1$. For each $k\in\mathbb{N}$, the Lebesgue measure of $\{x:\diff(x,\Omega_k)\leq 1\}$ is bounded by $C_d2^{kd}$ for some $C_d>0$. Let $\Lambda_2 = \frac{1}{n}\sum_{j=1}^n\norm{y_j}^{\hat{\beta}_2}$, where $2+\frac{2\alpha}{\alpha-1}<\hat{\beta}_2<\frac{\beta_2}{d}-1$. By Markov's inequality, the mass or proportion of $Q_n$ that lies in $\Omega_k$ is bounded by 
	\begin{align*}
		\Pr(x\sim Q_n: \norm{x}>2^{k-1})&=\Pr(x\sim Q_n: \norm{x}^{\hat{\beta}_2}>2^{(k-1)\hat{\beta}_2})\\
		&\leq \frac{\E_{Q_n}\norm{x}^{\hat{\beta}_2}}{2^{(k-1)\hat{\beta}_2}} = \Lambda_2 2^{-(k-1)\hat{\beta}_2}.
	\end{align*}
	Let $M = \max(\overline{M},\overline{M}_{m,n})$, where $\overline{M}$ is the quantity in Lemma~\ref{lemma:supnorm} with $R=1$, and $\overline{M}_{m,n}$ is the random counterpart for $(P_m,Q_n)$ as defined in Lemma~\ref{lemma:supnorm_random}. $M$ is a random variable since $\overline{M}_{m,n}$ is random. Let $\mathcal{F}_M$ be the following class of functions
	\begin{equation}
		\mathcal{F}_{\alpha,M} = \left\{f_\alpha^*(\gamma):\gamma\in\text{Lip}_L(\mathbb{R}^d), \sup_{\norm{x}=1}\abs{\gamma(x)}\leq M\right\}.
	\end{equation}
	By formulas \eqref{eq:falpha} and \eqref{eq:falpha_derivative} , functions in $\mathcal{F}_{\alpha,M}$ have H\"older norm on $\Omega_k$ bounded by $C_\alpha(M^{\frac{\alpha}{\alpha-1}}+L^{\frac{\alpha}{\alpha-1}}2^{\frac{\alpha k}{\alpha-1}})$ for some $C_\alpha>0$ that only depends on $\alpha$. By Corollary 2.7.4 in \cite{wellner1996weak} with $V=d$ and $r=2$, we have 
	\begin{align*}
		&\ln(\mathcal{F}_{\alpha,M},\delta,L_2(Q_n))\\
		&\leq K\delta^{-d}\left(\sum_{k=0}^{\infty} (C_d2^{kd})^{\frac{2}{d+2}} \left(C_\alpha(M^{\frac{\alpha}{\alpha-1}}+L^{\frac{\alpha}{\alpha-1}}2^{\frac{\alpha k}{\alpha-1}})\right)^{\frac{2d}{d+2}}(\Lambda_2 2^{-(k-1)\hat{\beta}_2})^{\frac{d}{d+2}}\right)^{\frac{d+2}{2}}\\
		&\leq K\delta^{-d}(M+L)^{\frac{d\alpha}{\alpha-1}}\Lambda_2^{d/2} \left(\sum_{k=0}^{\infty}  2^{\frac{2kd}{d+2}+\frac{2\alpha kd}{(\alpha-1)(d+2)}-\frac{\hat{\beta}_2d(k-1)}{d+2}}\right)^{\frac{d+2}{2}}\\
		&\leq K\delta^{-d}(M+L)^{\frac{d\alpha}{\alpha-1}}\Lambda_2^{d/2}.
	\end{align*}
	where the constant $K$ can vary from line to line and does not depend on $n$, and the last step follows as the choice of $\hat{\beta}_2$ such that the series is summable over $k$ independent of $Q_n$. Then we have
	\begin{align*}
		&\E_{X,Y}\abs{D_{\alpha}^L(P_m\|Q_n)-D_{\alpha}^L(P\|Q)}\\
		&=\E_{X,Y}\abs{\sup_{\substack{\gamma\in\text{Lip}_L(\mathbb{R}^d) \\ M(\gamma)\leq\overline{M}_{m,n}}}\left\{\E_{P_m}[\gamma]-\E_{Q_n}[f_\alpha^*(\gamma)]\right\} - \sup_{\substack{\gamma\in\text{Lip}_L(\mathbb{R}^d) \\ M(\gamma)\leq\overline{M}}}\left\{\E_P[\gamma]-\E_Q[f_\alpha^*(\gamma)]\right\}}\\
		&\leq \E_{X,Y}\sup_{\substack{\gamma\in\text{Lip}_L(\mathbb{R}^d) \\ M(\gamma)\leq M}}\abs{\E_{P_m}[\gamma]-\E_{Q_n}[f_\alpha^*(\gamma)] - \left(\E_P[\gamma]-\E_Q[f_\alpha^*(\gamma)]\right)}\\
		&\leq\E_{X}\sup_{\gamma\in\text{Lip}_L(\mathbb{R}^d)}\abs{\E_P[\gamma]-\E_{P_m}[\gamma]} + \E_{X,Y}\sup_{\substack{\gamma\in\text{Lip}_L(\mathbb{R}^d) \\ M(\gamma)\leq M}}\abs{\E_{Q}[f_\alpha^*(\gamma)]-\E_{Q_n}[f_\alpha^*(\gamma)]}\\
		&\leq \E_{X}\sup_{\gamma\in\text{Lip}_L(\mathbb{R}^d)}\abs{\E_P[\gamma]-\E_{P_m}[\gamma]} + \E_{X}\E_{Y}\E_{Y'}\E_{\xi}\sup_{\substack{\gamma\in\text{Lip}_L(\mathbb{R}^d) \\ M(\gamma)\leq M}}\abs{\frac{1}{n}\sum_{j=1}^n\xi_i\left(f_\alpha^*[\gamma(y_j)]-f_\alpha^*[\gamma(y_j')]\right)}\\
		&\leq \E_{X}\sup_{\gamma\in\text{Lip}_L(\mathbb{R}^d)}\abs{\E_P[\gamma]-\E_{P_m}[\gamma]} + 2\E_{X}\E_{Y}\E_{\xi}\sup_{\substack{\gamma\in\text{Lip}_L(\mathbb{R}^d) \\ M(\gamma)\leq M}}\abs{\frac{1}{n}\sum_{j=1}^n\xi_i f_\alpha^*[\gamma(y_j)]}\\
		&\leq \E_{X}\sup_{\gamma\in\text{Lip}_L(\mathbb{R}^d)}\abs{\E_P[\gamma]-\E_{P_m}[\gamma]} + 2\E_{X,Y}\inf_{\theta>0}\left(4\theta+\frac{12}{\sqrt{n}}\int_{\theta}^{\infty}\sqrt{\ln\CN(\mathcal{F}_{\alpha,M},\delta,L_2(Q_n))}\diff{\delta}\right),
	\end{align*}
	where $\xi_i$'s are the Rademacher variables. 
	
	First note that the first term $\E_{X}\sup_{\gamma\in\text{Lip}_L(\mathbb{R}^d)}\abs{\E_P[\gamma]-\E_{P_m}[\gamma]}$ is the convergence rate of the Wasserstein-1 distance and the bound follows the result of Theorem 1 in \cite{fournier2015rate}:
	\begin{align*}
		\E_{X}\sup_{\gamma\in\text{Lip}_L(\mathbb{R}^d)}\abs{\E_P[\gamma]-\E_{P_m}[\gamma]}\leq \frac{CM_r^{1/r}(p)}{m^{1/d}},
	\end{align*}
	with $r=\frac{d}{d-1}$.
	For the second term, we have
	\begin{align*}
		&\E_{X,Y}\inf_{\theta>0}\left(4\theta+\frac{12}{\sqrt{n}}\int_{\theta}^{\infty}\sqrt{\ln\CN(\mathcal{F}_{\alpha,M},\delta,L_2(Q_n))}\diff{\delta}\right)\\
		&\leq \E_{X,Y}\inf_{\theta>0}\left(4\theta+\frac{12}{\sqrt{n}}K(M+L)^{\frac{d\alpha}{2(\alpha-1)}}\Lambda_2^{d/4}\int_{\theta}^{\infty}\delta^{-\frac{d}{2}}\diff{\delta}\right)\\
		&\leq \E_{X,Y}\inf_{\theta>0}\left(4\theta+\frac{12}{\sqrt{n}}K(M+L)^{\frac{d\alpha}{2(\alpha-1)}}\Lambda_2^{d/4}\cdot\frac{2}{2-d}\theta^{1-d/2}\right)\\
		&\leq \E_{X,Y}\left(4n^{-\frac{1}{d}}+12K(M+L)^{\frac{d\alpha}{2(\alpha-1)}}\Lambda_2^{d/4}\cdot\frac{2}{2-d}n^{-\frac{1}{d}}\right)\\
		&= 4n^{-\frac{1}{d}}+\frac{24K}{2-d}n^{-\frac{1}{d}}\cdot\E_{X,Y}\left[(M+L)^{\frac{d\alpha}{2(\alpha-1)}}\Lambda_2^{d/4}\right]
	\end{align*}
	where we pick $\theta = n^{-\frac{1}{d}}$. By the Cauchy-Schwartz inequality, we have
	\begin{align*}
		\E_{X,Y}\left[(M+L)^{\frac{d\alpha}{2(\alpha-1)}}\Lambda_2^{d/4}\right] &\leq \sqrt{\E_{X,Y}(M+L)^{\frac{d\alpha}{(\alpha-1)}}}\sqrt{\E_{Y}\Lambda_2^{d/2}}.
	\end{align*}
	Notice that $\E_{X,Y}(M+L)^{\frac{d\alpha}{(\alpha-1)}}$ is bounded by Lemma~\ref{lemma:supnorm_random} and the bound depends on $M_{\frac{d\alpha}{\alpha-1}}(p)$ and $M_{\frac{d\alpha}{\alpha-1}}(q)$. By Jensen's inequality, we have $\E_{Y}\Lambda_2^{d/2}\leq(\E_{Y}\Lambda_2^{d})^{1/2}$. And we have
	\begin{align*}
		\E_{Y}\Lambda_2^{d} = \E_{Y}\left(\frac{1}{n}\sum_{j=1}^n\norm{y_j}^{\hat{\beta}_2}\right)^{d}\leq \E_{Y}\left(\frac{1}{n}\sum_{j=1}^n\norm{y_j}^{\hat{\beta}_2d}\right)=M_{\hat{\beta}_2d}(q),
	\end{align*}
	where the inequality follows Jensen's inequality. Combining all these bounds, we obtain the result as in the statement of the theorem.
\end{proof}

\begin{proposition}\label{prop:samplecomplexityd=2}
	For $d=2$. Assume $(P,Q)$ are distributions on $\mathbb{R}^d$ of heavy-tail $(\beta_1,\beta_2)$, where $\beta_1>10$ and $\beta_2>18$. Suppose $\alpha$ satisfies $\frac{4\alpha}{\alpha-1}+4<\beta_1-2$ and $\frac{8\alpha}{\alpha-1}<\beta_2-10$, then if $m$ and $n$ are sufficiently large, we have
	\begin{equation}
		\E_{X,Y}\abs{D_{\alpha}^L(P_m\|Q_n)-D_{\alpha}^L(P\|Q)}\leq \frac{C_1\ln m}{m^{1/2}} + \frac{C_2\ln n}{n^{1/2}},
	\end{equation}
	where $C_1$ depends on $M_{r_1}(p)$ for any $r_1>2$ and $C_2$ depends on $M_{\frac{4\alpha}{\alpha-1}+4}(p)$, $M_{\frac{4\alpha}{\alpha-1}+4}(q)$ and $M_{dr_2}(q)$ for any $2+\frac{2\alpha}{\alpha-1}<r_2<\frac{\beta_2-2}{4}$; both $C_1$ and $C_2$ are independent of $m,n$.
\end{proposition}
\begin{proposition}\label{prop:samplecomplexityd=1}
	For $d=1$. Assume $(P,Q)$ are distributions on $\mathbb{R}^d$ of heavy-tail $(\beta_1,\beta_2)$, where $\beta_1>7$ and $\beta_2>13$. Suppose $\alpha$ satisfies $\frac{2\alpha}{\alpha-1}+4<\beta_1-1$ and $\frac{6\alpha}{\alpha-1}<\beta_2-7$, then if $m$ and $n$ are sufficiently large, we have
	\begin{equation}
		\E_{X,Y}\abs{D_{\alpha}^L(P_m\|Q_n)-D_{\alpha}^L(P\|Q)}\leq \frac{C_1}{m^{1/2}} + \frac{C_2}{n^{1/2}},
	\end{equation}
	where $C_1$ depends on $M_{2}(p)$ and $C_2$ depends on $M_{\frac{2\alpha}{\alpha-1}+4}(p)$, $M_{\frac{2\alpha}{\alpha-1}+4}(q)$ and $M_{dr_2}(q)$ for any $2+\frac{2\alpha}{\alpha-1}<r_2<\frac{\beta_2-1}{3}$; both $C_1$ and $C_2$ are independent of $m,n$.
\end{proposition}
\begin{proof}
	The only difference from the proof of \Cref{thm:samplecomplexity1} is that we need to bound the random metric entropy differently since $\sqrt{\ln\CN(\mathcal{F}_{\alpha,M},\delta,L_2(Q_n))}$ is no longer integrable at infinity, and the upper limit of the integral in \Cref{lemma:metric_entropy} cannot be relaxed to $\infty$. Instead, we have 
	\begin{align*}
		&\E_{X,Y}\inf_{0<\theta<M_Y}\left(4\theta+\frac{12}{\sqrt{n}}\int_{\theta}^{M_{Y}}\sqrt{\ln\CN(\mathcal{F}_{\alpha,M},\delta,L_2(Q_n))}\diff{\delta}\right)\\
		&\leq \E_{X,Y}\inf_{0<\theta<M_Y}\left(4\theta+\frac{12}{\sqrt{n}}K(M+L)^{\frac{d\alpha}{(\alpha-1)}}\Lambda_2^{d/2}\int_{\theta}^{M_Y}\delta^{-\frac{d}{2}}\diff{\delta}\right),
	\end{align*}
	where $M_Y = \sup_{\gamma\in\mathcal{F}_{\alpha,M}}\sqrt{\frac{1}{n}\sum_{j=1}^n \abs{\gamma(y_j)}^2}\leq\sqrt{\frac{1}{n}\sum_{j=1}^n(M+L+ L\norm{y_j})^2}$.
	
	For $d=2$, we have $\int_{\theta}^{M_Y}\delta^{-\frac{d}{2}}\diff{\delta}=\ln M_y - \ln \theta$, and we can pick $\theta = \frac{\ln n}{\sqrt{n}}$, and use the inequality $\ln M_y\leq M_y-1$ and combine it with \Cref{lemma:empirical_moments} and \Cref{lemma:supnorm_random} as in the proof of \Cref{thm:samplecomplexity1}.
	
	For $d=1$, we have $\int_{\theta}^{M_Y}\delta^{-\frac{d}{2}}\diff{\delta}=\frac{\sqrt{M_y}-\sqrt{\theta}}{2}$, and we can pick $\theta = \frac{1}{\sqrt{n}}$ to balance the two terms.
\end{proof}

\section{Proofs of results in \Cref{sec:group_symmetry}}\label{appendix:group}
\begin{proof}[Proof of \Cref{thm:samplecomplexity1_symmetry}]The proof is very similar to that of \Cref{thm:samplecomplexity1}, therefore we only outline the improvement we can obtain. First, same as the beginning of the proof of Theorem 4.8 in \cite{chen2023sample}, we can restrict the domain from $\CX$ to $\CX/G$ by invariance, so that we focus on Lipschitz functions on $\CX/G$. Indeed, we have
	\begin{align*}
		&\E_{X,Y}\abs{D_{\alpha}^{L,G}(P_m\|Q_n)-D_{\alpha}^L(P\|Q)}\\
		&=\E_{X,Y}\abs{\sup_{\substack{\gamma\in\text{Lip}_L^G(\CX) \\ M(\gamma)\leq\overline{M}_{m,n}}}\left\{\E_{P_m}[\gamma]-\E_{Q_n}[f_\alpha^*(\gamma)]\right\} - \sup_{\substack{\gamma\in\text{Lip}_L^G(\CX) \\ M(\gamma)\leq\overline{M}}}\left\{\E_P[\gamma]-\E_Q[f_\alpha^*(\gamma)]\right\}}\\
		&\leq \E_{X,Y}\sup_{\substack{\gamma\in\text{Lip}_L^G(\CX) \\ M(\gamma)\leq M}}\abs{\frac{1}{m}\sum_{i=1}^m\gamma(x_i)-\frac{1}{n}\sum_{j=1}^n f_\alpha^*[\gamma(y_j)]-\left(\E_P[\gamma]-\E_Q[f_\alpha^*(\gamma)]\right)}\\
		&= \E_{X,Y}\sup_{\substack{\gamma\in\text{Lip}_L^G(\CX) \\ M(\gamma)\leq M}}\abs{\frac{1}{m}\sum_{i=1}^m\gamma(T_G(x_i))-\frac{1}{n}\sum_{j=1}^n f_\alpha^*[\gamma(T_G(y_j))]-\left(\E_P[\gamma]-\E_Q[f_\alpha^*(\gamma)]\right)}\\
		&\leq \E_{X,Y}\sup_{\substack{\gamma\in\text{Lip}_L^G(\CX/G) \\ M(\gamma)\leq M}}\abs{\frac{1}{m}\sum_{i=1}^m\gamma(T_G(x_i))-\frac{1}{n}\sum_{j=1}^n f_\alpha^*[\gamma(T_G(y_j))]-\left(\E_{P_{\CX/G}}[\gamma]-\E_{Q_{\CX/G}}[f_\alpha^*(\gamma)]\right)}.
	\end{align*}
	where $T_G:\CX\to\CX/G$ is the quotient map, and $P_{\CX/G}, Q_{\CX/G}$ are restrictions of $P,Q$ on $\CX/G$ since both $P,Q$ are $G$-invariant, and $T_G(x_i)$ and $T_G(y_j)$ can be viewed as i.i.d. samples drawn from $P_{\CX/G}, Q_{\CX/G}$. Compared to the proof of \Cref{thm:samplecomplexity1}, we have some minor differences. First, in the sub-Weibull setting, the bound provided by Markov's inequality has Weibull-type decay in $k$, and we can simply choose $\hat{\beta}_2=1$. Therefore, the summation in bounding $\ln(\mathcal{F}_{\alpha,M},\delta,L_2(Q_n))$ is summable in $k$. Moreover, to bound $\ln(\mathcal{F}_{\alpha,M},\delta,L_2(Q_n))$, we have $\delta^{-d}$ improved to $\delta^{-d^*}$ due to the intrinsic dimension assumption. Due to \Cref{assumption:group} on the group and the partition $\Omega_k$'s are circular about the origin, the Lebesgue measure induces a reduction by a factor of $1/\abs{G}$ by working on $\CX/G$ compared to $\CX$, which then makes a reduction by $1/\abs{G}$ in the bound of $\ln(\mathcal{F}_{\alpha,M},\delta,L_2(Q_n))$, and it eventually contributes to the factor $\abs{G}$ in the bound in \Cref{thm:samplecomplexity1_symmetry}. On the other hand, we bound $\E_{X}\sup_{\gamma\in\text{Lip}_L(\mathbb{R}^d)}\abs{\E_P[\gamma]-\E_{P_m}[\gamma]}$ using the same procedure using metric entropy instead. Since the magnitude of Lipschitz functions grows slower than $\mathcal{F}_{\alpha,M}$, the procedure is straightforward. This finally creates a factor of $\abs{G}$ in front of $m$ in the final bound. For cases when the intrinsic dimension is 1 or 2, we can apply proofs of \Cref{prop:samplecomplexityd=2} and \Cref{prop:samplecomplexityd=1} after the above treatment.
\end{proof}

\begin{proof}[Proof of \Cref{thm:samplecomplexity1_symmetry_infinite}]
	Compared to the proof of that of \Cref{thm:samplecomplexity1_symmetry}, we do not need to make a factor $\abs{G}$. Instead, in bounding $\ln(\mathcal{F}_{\alpha,M},\delta,L_2(Q_n))$, we have $\delta^{-d^*}$ improved to $\delta^{-d^{**}}$ due to the intrinsic dimension assumption.
\end{proof}

\begin{proof}[Proof of \Cref{thm:samplecomplexity1_symmetry_W1} and \Cref{thm:samplecomplexity1_symmetry_infinite_W1}]
	Since the variational form of $W_1$ is shift-invariant to $\gamma\in\text{Lip}_L(\R^d)$, we can always assume $\gamma(0)=0$. Thus, \Cref{lemma:supnorm} and \Cref{lemma:supnorm_random} are not useful. Compared to the proof of \Cref{thm:samplecomplexity1}, we can pick $\hat{\beta}_2 = 1$ and $M$ can be set to 0. Finally, it is the limiting case of $\alpha\to\infty$.
\end{proof}

\section{Values of different training objectives in \Cref{sec:highdimension}\label{append:divergence_value}}
\paragraph{Training objective function values for Lipschitz-regularized and standard $\alpha$-divergence} 

\Cref{tab:divergence:values} summarizes the training objective (divergence) values for the standard and Lipschitz-regularized $\alpha$-divergence in different GPA and GAN models in the experiment \Cref{sec:highdimension}.

\begin{table}[h]
	\centering
	\begin{tabular}{c|c}
		Model & Objective function (divergence) value \\ \hline
		Lip-$\alpha$ GPA & $0.0129187$ \\
		$\alpha$ GPA & $3.0554032e+26$ \\
		Lip-$\alpha$ GAN &  $0.358602$ \\
		$\alpha$ GAN & $3.25298e+7$ 		
	\end{tabular}
	\caption{Final values of the training objective for GANs and GPAs under Lipschitz-regularized ($L=1$) and standard $\alpha$-divergences with $\alpha=2$. See the example in \Cref{sec:highdimension}.\label{tab:divergence:values}}
\end{table}

%%%%%%%%%%%%%%%%%%%%%%%%%%%%%%%%%%%%%%%%%%%%%%%%%%%%%%%%%%%%
\clearpage
\noindent\textbf{Supplementary Material}
\setcounter{page}{1}
\setcounter{section}{0}
\section{Computational details}\label{append:resource}
Our experiment is computed using personal computer in the environment:  \texttt{Apple M2 8 cores} and \texttt{Apple M2 24 GB - Metal 3}.

\paragraph{Access and preprocess for the real-world Keystroke example}
We downloaded a dataset provided by \cite{interarrivaltime_heavytailedexample} which contains the keystroke times of 20 different individuals from the same script with a total character count equal 717. For each individual, we obtained 716 samples of inter-arrival time between keystrokes by subtracting \texttt{release time} of first to the second last characters in the script from \texttt{press time} of second to last characters in the script. Assuming that the random variable is i.i.d., we gathered samples from 10 individuals with examinee \texttt{id}s corresponding to 27252, 36718, 56281, 64663, 67159, 97737, 145007, 159915, 264420, 271802. In total, we use 7160 samples from inter-arrival time between keystrokes as target samples.

\paragraph{Neural network architectures and hyper-parameters}
Full batch is used for training different models. \texttt{batchsize} equals 10000 for 2D student-t example and 7160 for Keystroke example. Neural network architectures of the models are specified in \Cref{table:nn architecture gpa,table:nn architecture gan,table:nn architecture sgm,table:nn architecture otflow}.

\begin{table}[h]
	
	\begin{minipage}{.49\linewidth}
		\centering
		\begin{tabular}{c}
			
			\hline
			Lip-$\alpha$-GPA \\
			\hline
			$W^1 \in \mathbb{R}^{\ell_1 \times d}$, $b^1 \in \mathbb{R}^{\ell_1}$ \\
			Spectral Normalization on $W^1$ \\
			ReLU\\
			\hline
			$W^2 \in \mathbb{R}^{\ell_2 \times \ell_1}$, $b^2 \in \mathbb{R}^{\ell_2}$ \\
			Spectral Normalization on $W^2$ \\
			ReLU\\
			\hline
			$W^3 \in \mathbb{R}^{\ell_3 \times \ell_2}$, $b^3 \in \mathbb{R}^{\ell_3}$ \\
			Spectral Normalization on $W^3$ \\
			ReLU\\
			\hline
			$W^4 \in \mathbb{R}^{1 \times \ell_3}$, $b^4 \in \mathbb{R}$ \\
			Spectral Normalization on $W^4$ \\
			Linear\\
			\hline
		\end{tabular}%\subcaption{2D image data (MNIST)}
		\medskip
	\end{minipage}
	\begin{minipage}{.49\linewidth}
		\centering
		\begin{tabular}{c}
			
			\hline
			$\alpha$-GPA \\
			\hline
			$W^1 \in \mathbb{R}^{\ell_1 \times d}$, $b^1 \in \mathbb{R}^{\ell_1}$ \\
			ReLU\\
			\hline
			$W^2 \in \mathbb{R}^{\ell_2 \times \ell_1}$, $b^2 \in \mathbb{R}^{\ell_2}$ \\
			ReLU\\
			\hline
			$W^3 \in \mathbb{R}^{\ell_3 \times \ell_2}$, $b^3 \in \mathbb{R}^{\ell_3}$ \\
			ReLU\\
			\hline
			$W^4 \in \mathbb{R}^{1 \times \ell_3}$, $b^4 \in \mathbb{R}$ \\
			Linear\\
			\hline
		\end{tabular}%\subcaption{2D image data (MNIST)}
		\medskip
	\end{minipage}
	
	\caption{Neural network architectures of GPA discriminator $\gamma: \mathbb{R}^d \rightarrow \mathbb{R}$. (Lipschitz regularized) $\alpha$-divergences are estimated by variational representation formula \eqref{eq:variational_formula}. In particular, Spectral normalization \cite{miyato2018sn} technique is used for ensuring the Lipschitz continuity. For 2D student-t example, Keystroke example and Lorenz63 example, $\ell = [32, 32, 32]$. For heavy-tail data embedded in $\mathbb{R}^{110}$ example, $\ell = [128, 128, 128]$. }
	\label{table:nn architecture gpa}
\end{table}

\begin{table}[h]
	
	\begin{minipage}{.49\linewidth}
		\centering
		\begin{tabular}{c}
			
			\hline
			discriminator $\gamma$ \\
			\hline
			$W^1 \in \mathbb{R}^{\ell_1 \times d}$, $b^1 \in \mathbb{R}^{\ell_1}$ \\
			ReLU\\
			\hline
			$W^2 \in \mathbb{R}^{\ell_2 \times \ell_1}$, $b^2 \in \mathbb{R}^{\ell_2}$ \\
			ReLU\\
			\hline
			$W^3 \in \mathbb{R}^{\ell_3 \times \ell_2}$, $b^3 \in \mathbb{R}^{\ell_3}$ \\
			ReLU\\
			\hline
			$W^4 \in \mathbb{R}^{1 \times \ell_3}$, $b^4 \in \mathbb{R}$ \\
			Linear\\
			\hline
		\end{tabular}%\subcaption{2D image data (MNIST)}
		\medskip
	\end{minipage}
	\begin{minipage}{.49\linewidth}
		\centering
		\begin{tabular}{c}
			
			\hline
			generator $g$ \\
			\hline
			$W^1 \in \mathbb{R}^{\ell_1 \times d}$, $b^1 \in \mathbb{R}^{\ell_1}$ \\
			ReLU\\
			\hline
			$W^2 \in \mathbb{R}^{\ell_2 \times \ell_1}$, $b^2 \in \mathbb{R}^{\ell_2}$ \\
			ReLU\\
			\hline
			$W^3 \in \mathbb{R}^{\ell_3 \times \ell_2}$, $b^3 \in \mathbb{R}^{\ell_3}$ \\
			ReLU\\
			\hline
			$W^4 \in \mathbb{R}^{d \times \ell_3}$, $b^4 \in \mathbb{R}^d$ \\
			Linear\\
			\hline
		\end{tabular}%\subcaption{2D image data (MNIST)}
		\medskip
	\end{minipage}
	
	\caption{Neural network architectures of GAN discriminator $\gamma: \mathbb{R}^d \rightarrow \mathbb{R}$ and generator $g: \mathbb{R}^{d'} \rightarrow \mathbb{R}^d$. (Lipschitz regularized) $\alpha$-divergences are estimated by variational representation formula \eqref{eq:variational_formula}. In particular, gradient penalty \cite{arjovsky2017wgan} technique is used for ensuring the Lipschitz continuity. For 2D student-t example, Keystroke example and Lorenz63 example, $\ell = [64, 32, 16]$. For heavy-tail data embedded in $\mathbb{R}^{110}$ example, $\ell = [128, 128, 128]$. }
	\label{table:nn architecture gan}
\end{table}

\begin{table}[h]
	\centering
	\begin{tabular}{c}
		
		\hline
		score function $s$ \\
		\hline
		$W^1 \in \mathbb{R}^{\ell_1 \times (d+1)}$, $b^1 \in \mathbb{R}^{\ell_1}$ \\
		GeLU\\
		\hline
		$W^2 \in \mathbb{R}^{\ell_2 \times \ell_1}$, $b^2 \in \mathbb{R}^{\ell_2}$ \\
		GeLU\\
		\hline
		$W^3 \in \mathbb{R}^{\ell_3 \times \ell_2}$, $b^3 \in \mathbb{R}^{\ell_3}$ \\
		GeLU\\
		\hline
		$W^4 \in \mathbb{R}^{\ell_4 \times \ell_3}$, $b^4 \in \mathbb{R}^{\ell_4}$ \\
		GeLU\\
		\hline
		$W^5 \in \mathbb{R}^{\ell_5 \times \ell_4}$, $b^5 \in \mathbb{R}^{\ell_5}$ \\
		GeLU\\
		\hline
		$W^6 \in \mathbb{R}^{\ell_6 \times \ell_5}$, $b^6 \in \mathbb{R}^{\ell_6}$ \\
		GeLU\\
		\hline
		$W^7 \in \mathbb{R}^{\ell_7 \times \ell_6}$, $b^7 \in \mathbb{R}^{\ell_7}$ \\
		GeLU\\
		\hline
		$W^8 \in \mathbb{R}^{\ell_8 \times \ell_7}$, $b^8 \in \mathbb{R}^{\ell_8}$ \\
		GeLU\\
		\hline
		$W^9 \in \mathbb{R}^{d \times \ell_8}$, $b^9 \in \mathbb{R}^d$ \\
		Linear\\
		\hline
	\end{tabular}%\subcaption{2D image data (MNIST)}
	
	\caption{Neural network architectures of time-dependent score function $s(x,t)$ with $s: \mathbb{R}^{(d+1)} \rightarrow \mathbb{R}^d$. For 2D student-t example, Keystroke example with dimension $d=1,2$,  $\ell = [32, 32, d+1, 32, 32, d+1, 32, 32]$. For Lorenz63 example with dimension $d=3$, $\ell = [64, 64, d+1, 64, 64, d+1, 64, 64]$. For heavy-tail data embedded in $\mathbb{R}^{110}$ example with dimension $d=110$, $\ell = [128, 128, d+1, 128, 128, d+1, 128, 128]$.}
	\label{table:nn architecture sgm}
\end{table}

\begin{table}[h]
	\caption{Time-dependent potential function $\phi(y)$ with $y = (x,t)$ for OT flow $\phi: \mathbb{R}^{(d+1)} \rightarrow \mathbb{R}$ consists of a nonlinear neural network part $N(y; \theta_N)$ and a quadratic part $\frac{1}{2} y^T (A^T A)y + b^T y + c$, i.e. $\phi(s; \theta) = N(s; \theta_N) + \frac{1}{2} y^T (A^T A)y + b^T y + c$ where $\theta = (\theta_N, A, b, c)$. In particular, the neural network $N(y;\theta_N)$ has a resnet structure ($y^{l+1} = y^{l} + activation(W^L y^l + b^l)$). 
		For low dimensional examples with $d=1,2$,  $\ell = 32$.\label{table:nn architecture otflow}}
	\centering
	\begin{tabular}{c}
		
		\hline
		$N(y;\theta_N)$ \\
		\hline
		$W^1 \in \mathbb{R}^{\ell \times (d+1)}$, $b^1 \in \mathbb{R}^{\ell}$ \\
		Tanh\\
		\hline
		Resnet with $W^2 \in \mathbb{R}^{\ell \times \ell}$, $b^2 \in \mathbb{R}^{\ell_2}$ \\
		Tanh\\
		\hline
		$W^3 \in \mathbb{R}^{1 \times \ell}$, $b^2 \in \mathbb{R}$ \\
		Linear\\
		\hline
	\end{tabular}%\subcaption{2D image data (MNIST)}
\end{table}

\section{Additional experiments}
\label{appendix:sec:additional:experiments}

\begin{figure}[h]
	\centering
	\begin{subfigure}{\linewidth}
		\centering
		\includegraphics[width=.23\linewidth]{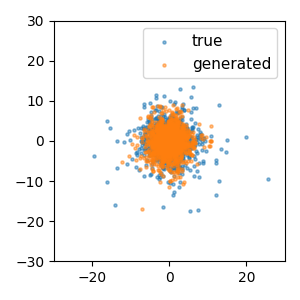}
		\includegraphics[width=.23\linewidth]{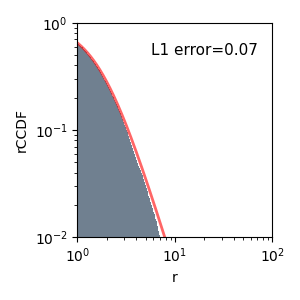}
		\includegraphics[width=.23\linewidth]{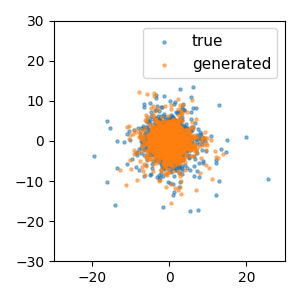}
		\includegraphics[width=.23\linewidth]{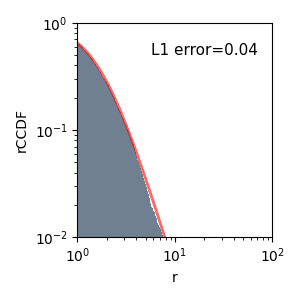}
		\caption{$\alpha$-GAN (left) and its counterpart with Lip-proximal regularization, Lip-$\alpha$-GAN  (right) }
	\end{subfigure}
	
	\begin{subfigure}{\linewidth}
		\centering
		\includegraphics[width=.23\linewidth]{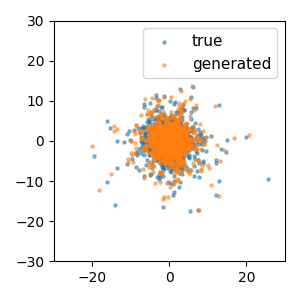}
		\includegraphics[width=.23\linewidth]{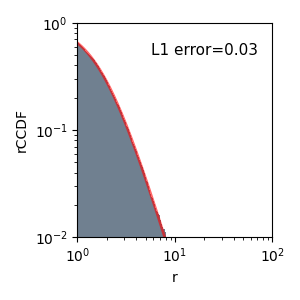}
		\includegraphics[width=.23\linewidth]{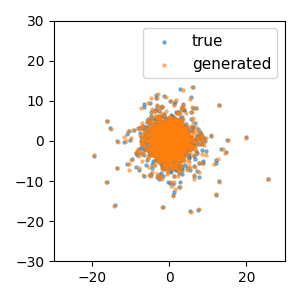}
		\includegraphics[width=.23\linewidth]{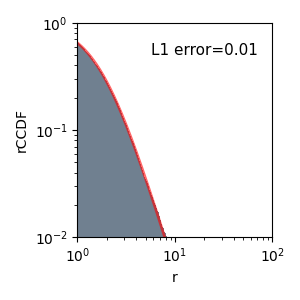}
		\caption{$\alpha$-GPA (left) and its counterpart with $W_1$-proximal regularization, Lip-$\alpha$-GPA (right) }
	\end{subfigure}

	\caption{Learning a 2D isotropic Student-t with degree of freedom $\nu=3$ (tail index $\beta=5.0$) using generative models based on Lipschitz-regularized $\alpha$-divergences with $\alpha=2$.
		Models with $W_1$-proximal regularizations (right) learn the heavy-tailed distribution significantly better than those without (left). See \Cref{subsec:explanation:generative:models} for detailed explanations of the models.}
	\label{fig:student-t:3.0:alpha:proximal:divergence}
\end{figure}

\begin{figure}[h]
	\centering
	\begin{subfigure}{.48\linewidth}
		\centering
		\includegraphics[width=.48\linewidth]{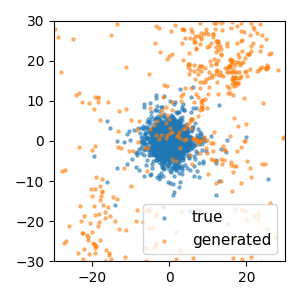}
		\includegraphics[width=.48\linewidth]{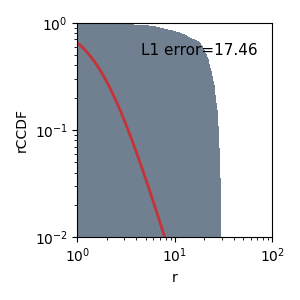}
		\caption{CNF}
	\end{subfigure}
	
	\begin{subfigure}{.48\linewidth}
		\centering
		\includegraphics[width=.48\linewidth]{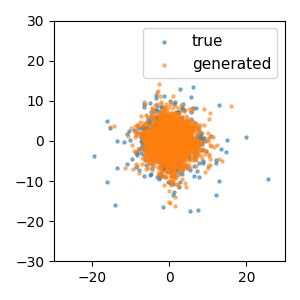}
		\includegraphics[width=.48\linewidth]{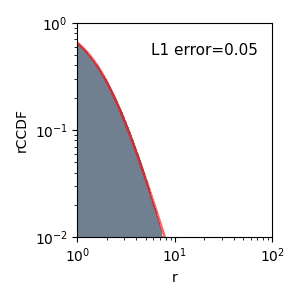}
		\caption{OT flow}
	\end{subfigure}
	\begin{subfigure}{.48\linewidth}
		\centering
		\includegraphics[width=.48\linewidth]{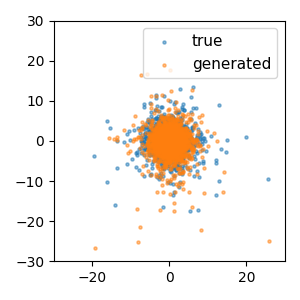}
		\includegraphics[width=.48\linewidth]{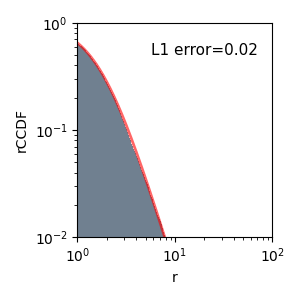}
		\caption{VE-SGM}
	\end{subfigure}
	\caption{Learning a 2D isotropic Student-t with degree of freedom $\nu=3$ (tail index $\beta=5.0$) using generative models based on $W_2$-$\alpha$-divergences with  $\alpha=1$.
		Models with $W_2$-proximal regularizations, (b) and (c), learn the heavy-tailed distribution significantly better than that without, (a). See \Cref{subsec:explanation:generative:models} for detailed explanations of the models.
	}
	\label{fig:student-t:3.0:proximal:losses}
\end{figure}

\begin{figure}[h]
	\centering
	\begin{subfigure}{\linewidth}
		\includegraphics[width=0.24\linewidth]{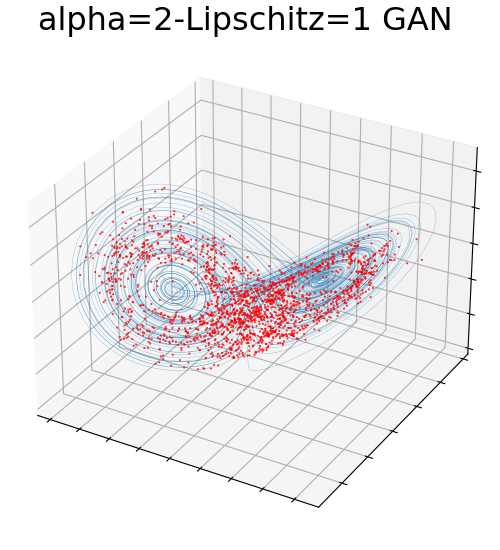}
		\includegraphics[width=0.24\linewidth]{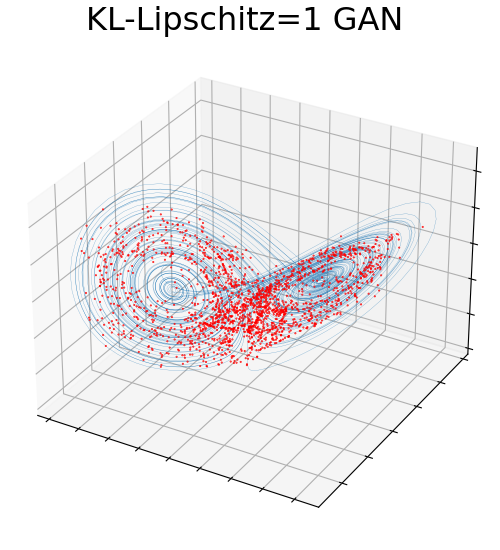}
		\includegraphics[width=0.24\linewidth]{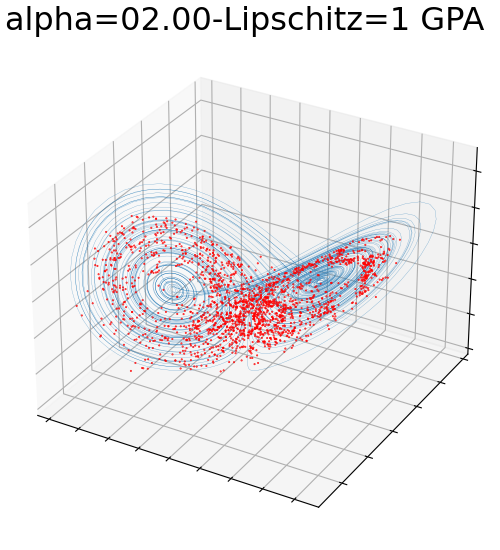}
		\includegraphics[width=0.24\linewidth]{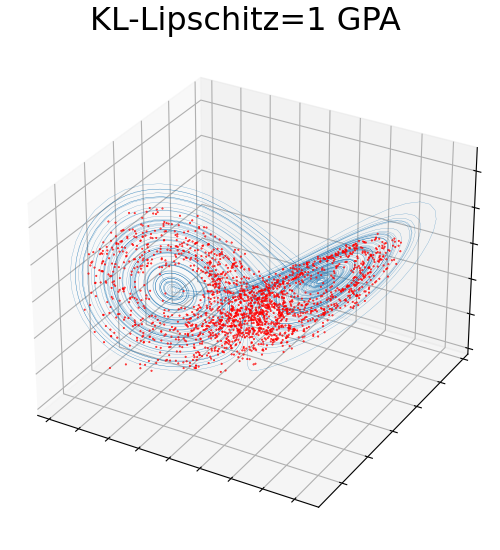}
		\caption{Generated samples from generative models with Lipschitz-regularized $\alpha$-divergences as learning objectives.
			$\alpha=2$-Lipschitz-1 GAN  (first), KL-Lipschitz-1 GAN (second), $\alpha=2$-Lipschitz-1 GPA  (third), KL-Lipschitz-1 GPA (fourth) }
	\end{subfigure}
	\begin{subfigure}{\linewidth}
		
		\includegraphics[width=0.24\linewidth]{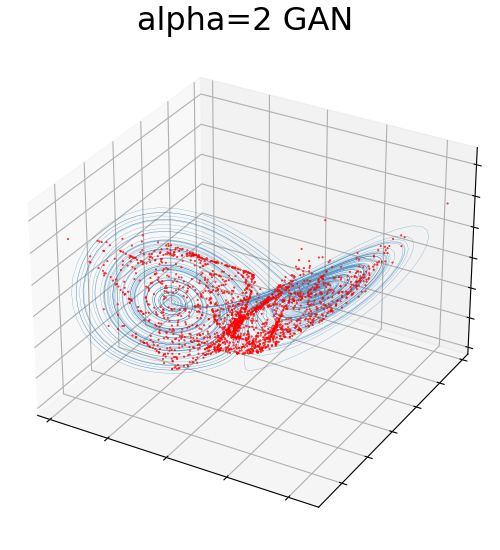}
		\includegraphics[width=0.24\linewidth]{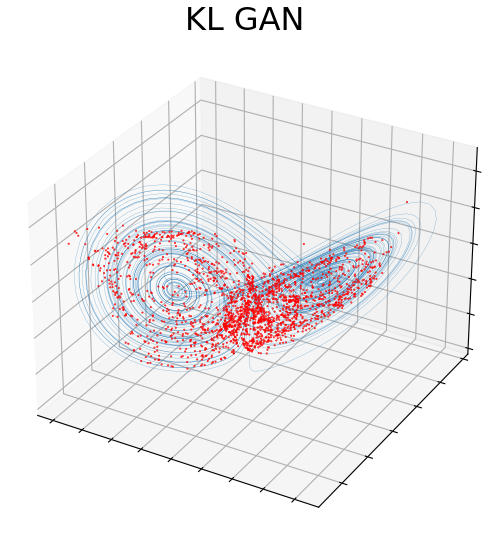}
		\includegraphics[width=0.24\linewidth]{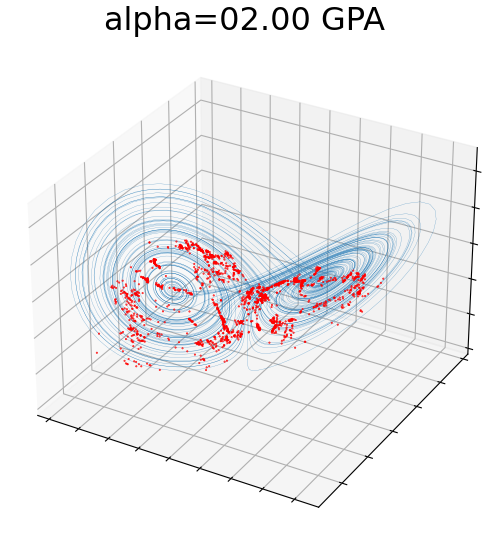}
		\includegraphics[width=0.24\linewidth]{figures/KL-Lipschitz=1_GPA__1000_samples__Lorenz63-figure.png}
		\caption{Generated samples from generative models with un-regularized $\alpha$-divergences as learning objectives. $\alpha=2$ GAN (first), KL GAN (second), $\alpha=2$ GPA (third), KL GPA (fourth)}
	\end{subfigure}
	\begin{subfigure}{\linewidth}
		\centering
		\includegraphics[width=0.24\linewidth]{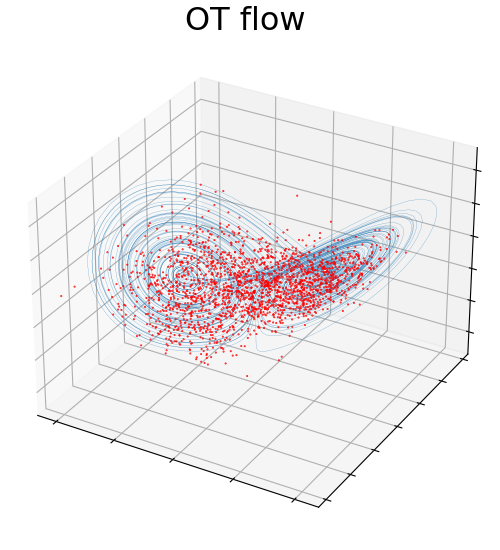}
		\includegraphics[width=0.24\linewidth]{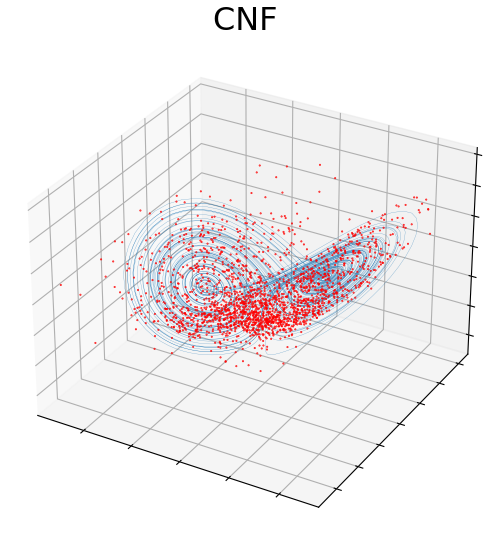}
		\includegraphics[width=0.24\linewidth]{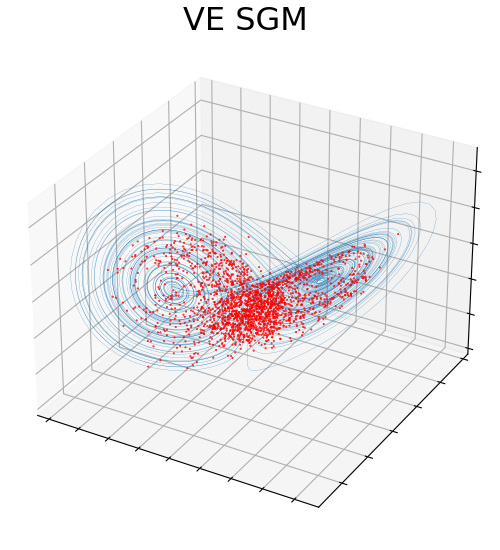}
		
		\caption{Generated samples from generative models with different learning objectives. 
			OT-flow: $W_2$-reverse KL divergence (left), CNF: reverse KL divergence (center), VE-SGM: $W_2$-proximal regularized cross-entropy (right)}
	\end{subfigure}
	
	\caption{Sample generation from Lorenz 63 strange attractor. 1000 training data are used and 2000 samples are generated.}
	\label{fig:lorenz63:1000samples}
	
\end{figure}
\end{document}